\newtheorem{assumption}{Assumption}
\newtheorem{proposition}{Proposition}
\newtheorem{theorem}{Theorem}
\newcommand{\VV}{\mathbb{V}} 
\newcommand{\EE}{\mathbb{E}} 
\newcommand{\RR}{\mathbb{R}}
\newcommand{\F}{\mathcal{F}}
\newcommand{\X}{\mathcal{X}}
\newcommand{\A}{\mathcal{A}}
\newcommand{\R}{r}
\newcommand{\RV}{R}
\icmltitlerunning{More Robust Doubly Robust Off-policy Evaluation}
\begin{document} 

\twocolumn[
\icmltitle{More Robust Doubly Robust Off-policy Evaluation}



\icmlsetsymbol{equal}{*}

\begin{icmlauthorlist}
\icmlauthor{Mehrdad Farajtabar}{equal,gatech}
\icmlauthor{Yinlam Chow}{equal,dm}
\icmlauthor{Mohammad Ghavamzadeh}{dm}
\end{icmlauthorlist}

\icmlaffiliation{gatech}{Georgia Institute of Technology}
\icmlaffiliation{dm}{Google DeepMind}

\icmlcorrespondingauthor{Yinlam Chow}{yinlamchow@google.com}

\icmlkeywords{Reinforcement Learning, Contextual Bandits, Off-policy Evaluation}

\vskip 0.3in
]


\printAffiliationsAndNotice{\icmlEqualContribution} 

\begin{abstract} 
We study the problem of off-policy evaluation (OPE) in reinforcement learning (RL), where the goal is to estimate the performance of a policy from the data generated by another policy(ies). In particular, we focus on the doubly robust (DR) estimators that consist of an importance sampling (IS) component and a performance model, and utilize the low (or zero) bias of IS and low variance of the model at the same time. Although the accuracy of the model has a huge impact on the overall performance of DR, most of the work on using the DR estimators in OPE has been focused on improving the IS part, and not much on how to learn the model. In this paper, we propose alternative DR estimators, called {\em more robust doubly robust} (MRDR), that learn the model parameter by minimizing the variance of the DR estimator. We first present a formulation for learning the DR model in RL. We then derive formulas for the variance of the DR estimator in both contextual bandits and RL, such that their gradients w.r.t.~the model parameters can be estimated from the samples, and propose methods to efficiently minimize the variance. We prove that the MRDR estimators are strongly consistent and asymptotically optimal. Finally, we evaluate MRDR in bandits and RL benchmark problems, and compare its performance with the existing methods.
%
%
\end{abstract} 

\vspace{-2mm}
\section{Introduction} \label{sec:intro}

In many real-world decision-making problems, in areas such as marketing, finance, robotics, and healthcare, deploying a policy without having an accurate estimate of its performance could be costly, unethical, or even illegal. This is why the problem of {\em off-policy evaluation} (OPE) has been heavily studied in contextual bandits (e.g.,~\citealt{Dudik11DR,Swaminathan17OP}) and reinforcement learning (RL) (e.g.,~\citealt{Precup00ET,Precup01OP,Paduraru13OP,Mahmood14WI,Thomas15HCPE,Li15TM,Jiang16DR,Thomas16DE}), and some of the results have been applied to problems in marketing (e.g.,~\citealt{Li11UO,Theocharous15PA}), healthcare (e.g.,~\citealt{Murphy01MM,Hirano03EE}), and education (e.g.,~\citealt{Mandel14OP,Mandel16OE}). The goal in OPE is to estimate the performance of an {\em evaluation} policy, given a log of data generated by the {\em behavior} policy(ies). The OPE problem can also be viewed as a form of counterfactual reasoning to infer the causal effect of a new treatment from historical data (e.g.,~\citealt{Bottou13CR,Shalit17EI,Louizos17CE}). 

Three different approaches to OPE in RL can be identified in the literature.  
{\bf 1) Direct Method (DM)} which learns a model of the system and then uses it to estimate the performance of the evaluation policy. This approach often has low variance but its bias depends on how well the selected function class represents the system and on whether the number of samples is sufficient to accurately learn this function class. There are two major problems with this approach: {\em (a)} Its bias cannot be easily quantified, since in general it is difficult to quantify the approximation error of a function class, and {\em (b)} It is not clear how to choose the loss function for model learning without the knowledge of the evaluation policy (or the distribution of the evaluation policies). Without this knowledge, we may select a loss function that focuses on learning the areas that are irrelevant for the evaluation policy(ies). 
{\bf 2) Importance Sampling (IS)} that uses the IS term to correct the mismatch between the distributions of the system trajectory induced by the evaluation and behavior policies. Although this approach is unbiased (under mild assumptions) in case the behavior policy is known, its variance can be very large when there is a big difference between the distributions of the evaluation and behavior policies, and grows exponentially with the horizon of the RL problem. 
{\bf 3) Doubly Robust (DR)} which is a combination of DM and IS, and can achieve the low variance of DM and no (or low) bias of IS. The DR estimator was first developed in statistics (e.g.,~\citealt{Cassel76SR,Robins94ER,Robins95SP,Bang05DR}) to estimate from incomplete data with the property that is unbiased when either of its DM or IS estimators is correct. It was brought to our community, first in contextual bandits by~\citet{Dudik11DR} and then in RL by~\citet{Jiang16DR}.~\citet{Thomas16DE} proposed two methods to reduce the variance of DR, with the cost of introducing a bias, one to select a low variance IS estimator, namely weighted IS (WIS), and one to blend DM and IS together (instead of simply combining them as in the standard DR approach) in a way to minimize the mean squared error (MSE). 

In this paper, we propose to reduce the variance of DR in bandits and RL by designing the loss function used to learn the model in the DM part of the estimator. The main idea of our estimator, called {\em more robust doubly robust} (MRDR), is to learn the parameters of the DM model by minimizing the variance of the DR estimator. This idea has been investigated in statistics in the context of regression when the labels of a subset of samples are randomly missing~\citep{Cao09IE}. We first present a novel formulation for the DM part of the DR estimator in RL. We then derive formulas for the variance of the DR estimator in both bandits and RL in a way that its gradient w.r.t.~the model parameters can be estimated from the samples. Note that the DR variances reported for bandits~\citep{Dudik11DR} and RL~\citep{Jiang16DR} contain the bias of the DM component, which is unknown. We then propose methods to efficiently minimize the variance in both bandits and RL. Furthermore, we prove that the MRDR estimator is strongly consistent and asymptotically optimal. Finally, we evaluate the MRDR estimator in bandits and RL benchmark problems, and compare its performance with DM, IS, and DR approaches.

\vspace{-2mm}
\section{Preliminaries}\label{sec:problem}

In this paper, we consider the reinforcement learning (RL) problem in which the agent's interaction with the system is modeled as a Markov decision process (MDP). Note that the contextual bandit problem is a special case with horizon-$1$ decision-making. In this section, we first define MDPs and the relevant quantities that we are going to use throughout the paper, and then define the off-policy evaluation problem in RL, which is the main topic of this work.

\subsection{Markov Decision Processes}
\label{subsec:reinforcement_learning}

A MDP is a tuple $\langle\X,\A,P_r,P,P_0,\gamma\rangle$, where $\X$ and $\A$ are the state and action spaces, $P_r(x,a)$ is the distribution of the bounded random variable $r(x,a)\in[0,R_{\max}]$ of the immediate reward of taking action $a$ in state $x$, $P(\cdot|x,a)$ is the transition probability distribution, $P_0:\X\rightarrow[0,1]$ is the initial state distribution, and $\gamma\in[0,1)$ is the discounting factor. A (stationary) policy $\pi : \X \times \A \rightarrow [0, 1]$ is a stochastic mapping from states to actions, with $\pi(a|x)$ being the probability of taking action $a$ in state $x$. We denote by $P^\pi$ the state transition of the Markov chain induced by policy $\pi$, i.e.,~\begin{small}$P^\pi(x_{t+1}|x_t)=\sum_{a\in\mathcal{A}}\pi(a|x_t)P(x_{t+1}|x_t,a)$\end{small}.

We denote by $\xi = (x_0, a_0, \R_{0},\ldots, x_{T-1}, a_{T-1}, \R_{T-1}, x_{T})$ a $T$-step trajectory generated by policy $\pi$, and by $R_{0:T-1}(\xi)=\sum_{t=0}^{T-1}\gamma^tr_t$ the return of trajectory $\xi$. Note that in $\xi$, $x_0\sim P_0$, and $\forall t\in\{1,\ldots,T-1\}$, $a_t\sim\pi(\cdot|x_t)$, $x_{t+1}\sim P(\cdot|x_t,a_t)$, and $r_t\sim P_r(\cdot|x_t,a_t)$. These distributions together define $P^\pi_\xi$, i.e.,~the distribution of trajectory $\xi$. We evaluate a policy $\pi$ by the expectation of the return of the $T$-step trajectories it generates, i.e.,~$\rho_T^\pi=\mathbb{E}_{\xi\sim P^\pi_\xi}\big[R_{0:T-1}(\xi)\big]$. If we set $T$ to be of order $O\big(1/(1-\gamma)\big)$, then $\rho_T^\pi$ would be a good approximation of the infinite-horizon performance $\rho_\infty^\pi$. Throughout the paper, we assume that $T$ has been selected such that $\rho_T^\pi\approx\rho_\infty^\pi$, and thus, we refer to $\rho^\pi=\rho_T^\pi$ as the performance of policy $\pi$. We further define the value (action-value) function of a policy $\pi$ at each state $x$ (state-action pair $(x,a)$), denoted by $V^\pi(x)$ ($Q^\pi(x,a)$), as the expectation of the return of a $T$-step trajectory generated by starting at state $x$ (state-action pair $(x,a)$), and then following policy $\pi$. Note that $\rho^\pi=\mathbb{E}_{x\sim P_0}\big[V^\pi(x)\big]$.

Note that the contextual bandit setting is a special case of the setting described above, where $T=1$, and as a result, the context is sampled from $P_0$ and there is no dynamic $P$.

\subsection{Off-policy Evaluation Problem}
\label{subsec:ope}

The off-policy evaluation (OPE) problem is when we are given a set of $T$-step trajectories $\mathcal{D}=\{\xi^{(i)}\}_{i=1}^n$ {\em independently} generated by the {\em behavior} policy $\pi_b$,\footnote{The results of this paper can be easily extended to the case that the trajectories are generated by multiple behavior policies.} and the goal is to have a good estimate of the performance of the {\em evaluation} policy $\pi_e$. We consider the estimator $\hat{\rho}^{\pi_e}$ good if it has low mean square error (MSE), i.e.,
\begin{equation}\label{eq:MSE}
\text{MSE}(\rho^{\pi_e},\hat{\rho}^{\pi_e})\stackrel{\triangle}{=}\mathbb{E}_{P^{\pi_b}_\xi}\big[(\rho^{\pi_e}-\hat{\rho}^{\pi_e})^2\big].
\end{equation} 
We make the following standard regularity assumption: 
\begin{assumption}[Absolute Continuity]
\label{assumption:abs_cont}
For all state-action pairs $(x,a)\in\X\times\A$, if $\pi_b(a|x) = 0$ then $\pi_e(a|x) = 0$.
\end{assumption}
In order to quantify the mismatch between the behavior and evaluation policies in generating a trajectory, we define {\em cumulative importance ratio} as follows. For each $T$-step trajectory $\xi\in\mathcal{D}$, the {\em cumulative importance ratio} from time step $t_1$ to time step $t_2$, where both $t_1$ and $t_2$ are in $\{0,\ldots,T\}$, is $\omega_{t_1,t_2}=1$ if $t_1>t_2$, and is $\omega_{t_1,t_2}=\prod_{\tau=t_1}^{t_2}\frac{\pi_e(a_\tau|x_\tau)}{\pi_b(a_\tau|x_\tau)}$, otherwise. In case the behavior policy $\pi_b$ is {\em unknown}, we define $\widehat{\omega}_{t_1,t_2}$ exactly as $\omega_{t_1,t_2}$, with $\pi_b$ replaced by its approximation $\widehat{\pi}_b$. Under Assumption~\ref{assumption:abs_cont}, it is easy to see that $\rho^{\pi_e}=\mathbb{E}_{P_\xi^{\pi_e}}[\sum_{t=0}^{T-1}\gamma^tr_t]=\mathbb{E}_{P_\xi^{\pi_b}}[\sum_{t=0}^{T-1}\gamma^t\omega_{0:t}r_t]$. Similar equalities hold for the value and action-value functions of $\pi_e$, i.e.,~$V^{\pi_e}(x)=\mathbb{E}_{P^{\pi_e}_\xi}[\sum_{t=0}^{T-1}\gamma^tr_t|x_0=x]=\mathbb{E}_{P_\xi^{\pi_b}}[\sum_{t=0}^{T-1}\gamma^t\omega_{0:t}r_t|x_0=x]$ and $Q^{\pi_e}(x,a)=\mathbb{E}_{P_\xi^{\pi_e}}[\sum_{t=0}^{T-1}\gamma^tr_t|x_0=x,a_0=a]=\mathbb{E}_{P_\xi^{\pi_b}}[\sum_{t=0}^{T-1}\gamma^t\omega_{0:t}r_t|x_0=x,a_0=a]$.

\vspace{-2mm}
\section{Existing Approaches to OPE}\label{sec:exist_method}
The objective of MRDR is to learn the model part of a DR estimator by minimizing its variance.
MRDR is a variation of DR with a DM loss function derived from minimizing the DR's variance and is built on the top of IS and DM.
Therefore, before stating our main results in Section~\ref{sec:mdr}, we first provide a brief overview of these popular approaches.


\vspace{-2mm}
\subsection{Direct Estimators}
\label{subsec:DM}

The idea of the direct method (DM) is to first learn a model of the system and then use it to estimate the performance of the evaluation policy $\pi_e$. In the case of bandits, this model is the mean reward of each pair of context and arm, and in RL it is either the mean reward $r(x,a)$ and state transition $P(\cdot|x,a)$, or the value (action-value) $V(x)$ ($Q(x,a)$) function. In either case, if we select a good representation for the quantities that need to be learned, and our dataset\footnote{Note that we shall use separate datasets for learning the model in DM and evaluating the policy.} contains sufficient number of the states and actions relevant to the evaluation of $\pi_e$, then the DM estimator has low variance and small bias, and thus, has the potential to outperform the estimators resulted from other approaches.


As mentioned in Section~\ref{sec:intro}, an important issue that has been neglected in the previous work on off-policy evaluation in RL is the loss function used in estimating the model in DM. As pointed out by~\citet{Dudik11DR}, the direct approach has a problem if the model is estimated without the knowledge of the evaluation policy. This is because the distribution of the states and actions that are visited under the evaluation policy should be included in the loss function of the direct approach. In other words, if upon learning a model, we have no information about the evaluation policy (or the distribution of the evaluation policies), then it is not clear how to design the DM's loss function (perhaps a uniform distribution over the states and actions would be the most reasonable). Therefore, in this paper, we assume that the evaluation policy is known prior to learning the model.\footnote{Our results can be extended to the case that the distribution of the evaluation policies is known prior to learning the model.}

In their DM and DR experiments, both~\citet{Jiang16DR} and~\citet{Thomas16DE} learn the MDP model, $r(x,a)$ and $P(\cdot|x,a)$, although all the model learning discussion in~\citet{Thomas16DE} is about the reward of the evaluation policy $\pi_e$ at every step $t$ along the $T$-step trajectory, i.e.,~$r^{\pi_e}(x,t)$. More generally, in off-policy actor-critic algorithms (such as the Reactor algorithm proposed in \citealt{gruslys2017reactor}), where one can view the gradient estimation part as an off-policy value evaluation problem, the DM state-action value function model is learned by minimizing the Bellman residual in an off-policy setting \cite{precup2000eligibility, munos2016safe, geist2014off}.
However, neither of these three approaches incorporate the design of the DM loss function into the primary objective, perhaps because they consider the setting in which the model is learned independently. 


{\bf Our approach to DM in RL:} In this paper, we propose to learn $Q^{\pi_e}$, the action-value function of the evaluation policy $\pi_e$, and then use it to evaluate its performance as

\vspace{-0.15in}
\begin{small}
\begin{equation*}
\hat{\rho}_{\text{DM}}^{\pi_e} = \frac{1}{n}\sum_{i=1}^n\sum_{a\in\A}\pi_e(a|x^{(i)}_0)\widehat Q^{\pi_e}(x^{(i)}_0,a;\beta^*_n).
\end{equation*}
\end{small}
\vspace{-0.175in}

We model $Q^{\pi_e}$ using a parameterized class of functions with parameter $\beta\in\mathbb{R}^\kappa$ and learn $\beta$ by solving the following weighted MSE problem

\vspace{-0.15in}
\begin{small}
\begin{equation}
\label{eq:DM_opt_RL}
\beta^*\in\arg\min_{\beta\in\RR^\kappa}\EE_{(x,a)\sim\mu_{\pi_e}}\Big[\big(Q^{\pi_e}(x,a)-\widehat Q^{\pi_e}(x,a;\beta)\big)^2\Big],
\end{equation}
\end{small}
\vspace{-0.175in}

where $\mu_{\pi_e}$ is the $\gamma$-discounted horizon-$T$ state-action occupancy of $\pi_e$, i.e.,~$\mu_{\pi_e}(x,a)=\frac{1-\gamma}{1-\gamma^T}\sum_{t=0}^{T-1}\gamma^t\EE_{P^{\pi_e}_\xi}\big[\mathbf{1}\{x_t$ $=x,a_t=a\}\big]$ and $\mathbf{1}\{\cdot\}$ is the indicator function. Since the actions in the data set $\mathcal{D}$ are generated by $\pi_b$, we rewrite the objective function of the optimization problem~\eqref{eq:DM_opt_RL} as 

\vspace{-0.15in}
\begin{small}
\begin{equation}
\label{eq:DM_opt_RL2}
\sum_{t=0}^{T-1}\gamma^t\EE_{P^{\pi_b}_\xi}\Big[\omega_{0:t}\big(\bar{R}_{t:T-1}(\xi)-\widehat Q^{\pi_e}(x_t,a_t;\beta)\big)^2\Big],
\end{equation}
\end{small}
\vspace{-0.15in}

where $\bar{R}_{t:T-1}(\xi)=\sum_{\tau=t}^{T-1}\gamma^{\tau-t} \omega_{t+1:\tau}\R(x_\tau,a_\tau)$ is the Monte Carlo estimate of $Q^{\pi_e}(x_t,a_t)$. The proof of the equivalence of the objective functions~\eqref{eq:DM_opt_RL} and~\eqref{eq:DM_opt_RL2} can be found in Appendix~\ref{sec:proofs-DM}. We obtain $\beta^*_n$ by solving the sample average approximation (SAA) of~\eqref{eq:DM_opt_RL2}, i.e.,

\vspace{-0.175in}
\begin{small}
\begin{align}
\label{eq:DM_opt_RL3}
\beta^*_n\in\arg\min_{\beta\in\RR^\kappa}\sum_{t=0}^{T-1}\gamma^t\cdot\frac{1}{n}\sum_{i=1}^n\omega^{(i)}_{0:t}&\big[\bar{R}_{t:T-1}(\xi^{(i)}) \nonumber \\ 
&- \widehat Q^{\pi_e}(x^{(i)}_t,a^{(i)}_t;\beta)\big]^2.
\end{align}
\end{small}
\vspace{-0.2in}

Since the SAA estimator~\eqref{eq:DM_opt_RL3} is unbiased, for large enough $n$, $\beta^*_n\rightarrow\beta^*$ almost surely. We define the bias of our DM estimator at each state-action pair as $\Delta(x,a)=\widehat{Q}^{\pi_e}(x, a;\beta)-Q^{\pi_e}(x,a)$. Note that in contextual bandits with deterministic evaluation policy, the SAA~\eqref{eq:DM_opt_RL3} may be written as the weighted least square (WLS) problem

\vspace{-0.175in}
\begin{small}
\begin{equation}
\label{eq:DM_opt_bandit}
\beta^*_n\in\frac{1}{n}\sum_{i=1}^{n}\frac{\mathbf 1\{\pi_e(x_i) = a_i\} }{\pi_b(a_i|x_i)}\big(\R(x_i,a_i) -\widehat{Q}(x_i,a_i;\beta)\big)^2,
\end{equation}
\end{small}
\vspace{-0.175in}

with weights $1/\pi_b(a_i|x_i)$ for the actions consistent with $\pi_e$.

\subsection{Importance Sampling Estimators}
\label{subsec:IS}

Another common approach to off-policy evaluation in RL is to use importance sampling (IS) to estimate the performance of the evaluation policy, i.e.,

\vspace{-0.15in}
\begin{small}
\begin{equation}
\label{eq:IS-est}
\hat{\rho}_{\text{IS}}^{\pi_e} = \frac{1}{n}\sum_{i=1}^n\omega^{(i)}_{0:T-1}\sum_{t=0}^{T-1}\gamma^t  \R^{(i)}_t = \frac{1}{n}\sum_{i=1}^n\omega^{(i)}_{0:T-1}R_{0:T-1}^{(i)},
\end{equation}
\end{small}
\vspace{-0.15in}

where $\omega^{(i)}_{0:T-1}$ and $r_t^{(i)}$ are the cumulative importance ratio and reward at step $t$ of trajectory $\xi^{(i)}\in\mathcal{D}$, respectively, and $R_{0:T-1}^{(i)}=R_{0:T-1}(\xi^{(i)})$. Under Assumption~\ref{assumption:abs_cont}, the IS estimator~\eqref{eq:IS-est} is {\em unbiased}. 

A variant of IS that often has less variance, while still unbiased, is {\em step-wise importance sampling} (step-IS), i.e.,  

\vspace{-0.125in}
\begin{small}
\begin{equation*}
\hat{\rho}_{\text{step-IS}}^{\pi_e}=\frac{1}{n}\sum_{i=1}^n\sum_{t=0}^{T-1}\gamma^t \omega^{(i)}_{0:t} \,\,\R^{(i)}_t.
\end{equation*}
\end{small}
\vspace{-0.125in}

If the behavior policy \begin{small}$\pi_b$\end{small} is {\em unknown}, which is the case in many applications, then either \begin{small}$\pi_b$\end{small} or the importance ratio \begin{small}$\omega=\pi_e/\pi_b$\end{small} needs to be estimated, and thus, IS may no longer be unbiased. In this case, the bias of IS and step-IS are \begin{small}$\left|\mathbb{E}_{P^{\pi_e}_\xi}\left[\delta_{0:T-1}(\xi)R_{0:T-1}(\xi)\right]\right|$\end{small} and \begin{small}$\left|\sum_{t=0}^{T-1}\gamma^t\mathbb{E}_{P^{\pi_e}_\xi}\left[\delta_{0:t}(\xi)\R_t\right]\right|$\end{small}, respectively, where \begin{small}$\delta_{0:t}(\xi)=1-\lambda_{0:t}(\xi)=1-\prod_{\tau=0}^t\frac{\pi_b(a_\tau|x_\tau)}{\widehat{\pi}_b(a_\tau|x_\tau)}$\end{small}, with \begin{small}$\widehat{\pi}_b$\end{small} being our approximation of \begin{small}$\pi_b$\end{small} (see the proofs in Appendix~\ref{sec:proofs-IS}). Note that when \begin{small}$\pi_b$\end{small} is {\em known}, i.e.,~\begin{small}$\widehat{\pi}_b=\pi_b$\end{small}, we have \begin{small}$\delta_{0:t}=0$\end{small}, and the bias of both IS and step-IS would be zero.


Although the unbiasedness of IS estimators is desirable for certain applications such as safety~\citep{Thomas15HCPI}, their high variance (even in the step-wise case), which grows exponentially in the horizon $T$, restricts their applications. This is why another variant of IS, called {\em weighted importance sampling} (WIS), and particularly its step-wise version, i.e.,

\vspace{-0.15in}
\begin{small}
\begin{align*}
\hat{\rho}_{\text{WIS}}^{\pi_e} &= \sum_{i=1}^n\frac{\omega^{(i)}_{0:T-1}}{\sum_{i=1}^n\omega^{(i)}_{0:T-1}}\sum_{t=0}^{T-1}\gamma^t\R^{(i)}_t = \sum_{i=1}^n\frac{\omega^{(i)}_{0:T-1}R^{(i)}_{0:T-1}}{\sum_{i=1}^n\omega^{(i)}_{0:T-1}}, \\
\hat{\rho}_{\text{step-WIS}}^{\pi_e} &= \sum_{i=1}^n\sum_{t=0}^{T-1}\gamma^t \frac{\omega^{(i)}_{0:t} \,\, \R^{(i)}_t}{\sum_{i=1}^n\omega^{(i)}_{0:t}},
\end{align*}
\end{small}
\vspace{-0.1in}

is considered more practical, especially where being biased is not crucial. The WIS estimators are biased but consistent and have lower variance than their IS counterparts.  

\subsection{Doubly Robust Estimators}
\label{subsec:DR}

Doubly robust (DR) estimators that combine DM and IS were first developed for regression (e.g.,~\citealt{Cassel76SR}), brought to contextual bandits by~\citet{Dudik11DR}, and to RL by~\citet{Jiang16DR} and~\citet{Thomas16DE}. The DR estimator for RL is defined as

\vspace{-0.15in}
\begin{small}
\begin{align}
\label{eq:DR}
\hat{\rho}_{\text{DR}}^{\pi_e}(\beta) &= \frac{1}{n} \sum_{i=1}^n\sum_{t=0}^{T-1}\Big[\gamma^t\omega_{0:t}^{(i)}\R^{(i)}_t \\ 
&- \gamma^t\big(\omega_{0:t}^{(i)}\widehat{Q}^{\pi_e}(x^{(i)}_t,a^{(i)}_t;\beta)-\omega_{0:t-1}^{(i)}\widehat{V}^{\pi_e}(x^{(i)}_t;\beta)\big)\Big].\nonumber
\end{align}
\end{small}
\vspace{-0.225in}

Eq.~\ref{eq:DR} clearly shows that a DR estimator contains both the cumulative importance ratio $\omega$ (IS part) and the model estimates $\widehat{V}^{\pi_e}$ and $\widehat{Q}^{\pi_e}$ (DM part). Note that the IS part of the DR estimator~\eqref{eq:DR} is based on step-wise IS.~\citet{Thomas16DE} derived a DR estimator whose IS part is based on step-wise WIS. In this paper, we use step-wise IS for the IS part of our DR-based estimators, but our results can be easily extended to other IS estimators.

The bias of a DR estimator is the product of that of DM and IS, and thus, DR is unbiased whenever either IS or DM is unbiased. This is what the term ``doubly robust" refers to. The bias of the DR estimator~\eqref{eq:DR} is $|\mathbb{E}_{P^{\pi_e}_\xi}[\sum_{t=0}^{T-1}\gamma^t\lambda_{0:t-1}(\xi)\delta_t(\xi)\Delta(x_t,a_t)]|$ (see the proofs in Appendix~\ref{sec:proofs-DR}), and thus, it would be zero if either $\Delta(x_t,a_t)$ or $\delta_t(\xi)$ is zero. As discussed in Section~\ref{subsec:IS}, if $\pi_b$ is known, $\delta_t=0$ and the DR estimator~\eqref{eq:DR} is unbiased. Throughout this paper, we assume that $\pi_b$ is known, and thus, DR is unbiased as long as it uses unbiased variants of IS. However, our proposed estimator described in Section~\ref{sec:mdr} can be extended to the case that $\pi_b$ is unknown.

\vspace{-2mm}
\section{More Robust Doubly Robust Estimators}\label{sec:mdr}

In this section, we present our class of more robust doubly robust (MRDR) estimators. The main idea of MRDR is to learn the DM parameter of a DR estimator, $\beta\in\mathbb{R}^\kappa$, by minimizing its variance. In other words, MRDR is a variation of DR with a DM loss function derived from minimizing the DR's variance. As mentioned earlier, we assume that the behavior policy $\pi_b$ is known, and thus, both IS (step-IS) and DR estimators are unbiased. This means that our MRDR estimator is also unbiased, and since it is the result of minimizing the DR's variance, it has the lowest MSE among all the DR estimators. 


\subsection{MRDR Estimators for Contextual Bandits}
\label{subsec:MRDR-bandit}

Before presenting MRDR for RL, we first formulate it in the contextual bandit setting. We follow the setting of~\citet{Dudik11DR} and define the DR estimator as 

\vspace{-0.175in}
\begin{small}
\begin{equation}
\label{eq:DR-bandit}
\begin{split}
\hat{\rho}_{\text{DR}}^{\pi_e}(\beta) =\frac{1}{n} \sum_{i=1}^n \frac{\pi_e(a_i|x_i)}{\widehat\pi_b(a_i|x_i)} &\big(\R(x_i,a_i) - \widehat{Q}(x_i,a_i;\beta)\big) \\
&\quad+ \widehat{V}^{\pi_e} (x_i;\beta),
\end{split}
\end{equation}
\end{small}
\vspace{-0.1in}

where \begin{small}$\widehat{Q}(x,a;\beta)\approx Q(x,a)=\mathbb{E}_{P_r}[r(x,a)]$\end{small} and \begin{small}$\widehat{V}^{\pi_e}(x;\beta)=\mathbb{E}_{a\sim\pi_e}[\widehat{Q}(x,a;\beta)]$\end{small}. We further define the DM bias \begin{small}$\Delta(x,a)=$ $\widehat{Q}(x,a;\beta)-Q(x,a)$\end{small}, and error in learning the behavior policy \begin{small}$\delta(x,a)=1-\lambda(x,a)=1-\frac{\pi_b(a|x)}{\widehat\pi_b(a|x)}$\end{small}. Proposition~\ref{lem:tech_result_bias_variance} proves the bias and variance of DR for stochastic evaluation policy $\pi_e$. Note that the results stated in Theorems~1 and~2 in~\citet{Dudik11DR} are only for deterministic $\pi_e$.


\begin{proposition}
\label{lem:tech_result_bias_variance}
The bias and variance of the DR estimator~\eqref{eq:DR-bandit} for stochastic $\pi_e$ may be written as

\vspace{-0.15in}
\begin{small}
\begin{align*}
\text{Bias}(\hat{\rho}_{\text{DR}}^{\pi_e}) &= \left|\rho^{\pi_e} - \mathbb{E}_{P^{\pi_b}_\xi}[\hat{\rho}_{\text{DR}}^{\pi_e}]\right| = \left|\mathbb E_{P^{\pi_e}_\xi}\left[\delta(x, a) \Delta(x, a)\right]\right|,  \\
n\mathbb{V}_{P^{\pi_b}_\xi}(\hat{\rho}_{\text{DR}}^{\pi_e}) &= \EE_{P^{\pi_b}_\xi}\left[\widehat{\omega}(x,a)^2\big(\R(x,a) - Q(x,a)\big)^2\right] \\ 
&+\mathbb V_{P_0}\left(\EE_{\pi_e}\left[Q(x, a)+\delta(x,a)\Delta(x,a)\right]\right) \\ 
&+\EE_{P_0,\pi_e}\Big[\omega(x,a)\big(1-\delta(x,a)\big)^2\Delta(x,a)^2 \\ 
&-\EE_{\pi_e}\big[\big(1-\delta(x,a)\big)\Delta(x,a)\big]^2\Big].
\end{align*}
\end{small}
\vspace{-0.25in}

\end{proposition}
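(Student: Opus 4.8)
The plan is to exploit the i.i.d.\ structure of the data so that both quantities reduce to the mean and variance of a single summand. Writing $Y_i = \widehat\omega(x_i,a_i)\big(r(x_i,a_i)-\widehat Q(x_i,a_i;\beta)\big)+\widehat V^{\pi_e}(x_i;\beta)$ with $\widehat\omega(x,a)=\pi_e(a|x)/\widehat\pi_b(a|x)$, we have $\hat\rho_{\text{DR}}^{\pi_e}=\frac1n\sum_i Y_i$, so that $\mathbb{E}_{P^{\pi_b}_\xi}[\hat\rho_{\text{DR}}^{\pi_e}]=\mathbb{E}[Y]$ and $n\mathbb{V}_{P^{\pi_b}_\xi}(\hat\rho_{\text{DR}}^{\pi_e})=\mathbb{V}(Y)$, where expectations are over $x\sim P_0$, $a\sim\pi_b(\cdot|x)$, $r\sim P_r(\cdot|x,a)$. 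The single identity that drives every computation is the change of measure $\mathbb{E}_{a\sim\pi_b}[\widehat\omega(x,a)g(x,a)]=\mathbb{E}_{a\sim\pi_e}[(1-\delta(x,a))g(x,a)]$, which follows from $\pi_b\widehat\omega=\pi_e\,\pi_b/\widehat\pi_b=\pi_e(1-\delta)$; squaring the pointwise relation gives the companion identity $\pi_b\widehat\omega^2=\pi_e\,\omega(1-\delta)^2$ that I will need for the variance.

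For the bias I would take the expectation over $r$ first, replacing $r$ by $Q(x,a)$ so that $r-\widehat Q=-\Delta$; applying the change-of-measure identity turns the importance-weighted term into $-\mathbb{E}_{\pi_e}[(1-\delta)\Delta]$, while $\mathbb{E}[\widehat V^{\pi_e}]=\mathbb{E}_{\pi_e}[\widehat Q]=\mathbb{E}_{\pi_e}[Q+\Delta]$. Adding these and using $\rho^{\pi_e}=\mathbb{E}_{P_0,\pi_e}[Q]$ collapses the expression to $\rho^{\pi_e}+\mathbb{E}_{\pi_e}[\delta\Delta]$, which yields the claimed bias after taking the absolute value.

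For the variance I would apply the law of total variance twice. Conditioning first on $x$, the term $\widehat V^{\pi_e}(x;\beta)$ is deterministic and drops out of $\mathbb{V}(Y\mid x)$, while the conditional mean $\mathbb{E}[Y\mid x]=\mathbb{E}_{\pi_e}[Q+\delta\Delta]$ (computed exactly as in the bias step) contributes $\mathbb{V}_{P_0}\big(\mathbb{E}_{\pi_e}[Q+\delta\Delta]\big)$, the second stated term. To handle $\mathbb{E}_{P_0}[\mathbb{V}(Y\mid x)]$ I would condition further on $a$: the inner reward variance is $\widehat\omega^2\,\mathbb{V}_r(r\mid x,a)=\widehat\omega^2\,\mathbb{E}[(r-Q)^2]$, whose full expectation is the first stated term $\mathbb{E}_{P^{\pi_b}}[\widehat\omega^2(r-Q)^2]$; the remaining piece is $\mathbb{V}_{a\sim\pi_b}(\mathbb{E}_r[\,\cdot\mid x,a])=\mathbb{V}_{a\sim\pi_b}(\widehat\omega\Delta)$, where I would invoke both change-of-measure identities to write $\mathbb{E}_{\pi_b}[\widehat\omega^2\Delta^2]=\mathbb{E}_{\pi_e}[\omega(1-\delta)^2\Delta^2]$ and $\mathbb{E}_{\pi_b}[\widehat\omega\Delta]=\mathbb{E}_{\pi_e}[(1-\delta)\Delta]$, producing the third stated term after averaging over $x\sim P_0$.

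The main obstacle is the careful bookkeeping in the nested variance decomposition—tracking precisely which variables are averaged at each stage and ensuring the conditional means carry the $\delta\Delta$ correction through correctly—together with the change of measure in the action-variance term, where the non-obvious identity $\pi_b\widehat\omega^2=\pi_e\,\omega(1-\delta)^2$ (rather than a naive $\widehat\omega^2$ factor) is what converts the $\pi_b$-expectation into the $\pi_e$-weighted form $\omega(1-\delta)^2\Delta^2$ appearing in the statement.
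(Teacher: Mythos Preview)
Your proposal is correct. The bias computation is essentially identical to the paper's: both integrate out $r$ first, apply the change of measure $\pi_b\widehat\omega=\pi_e(1-\delta)$, and cancel terms against $\widehat V^{\pi_e}=\mathbb{E}_{\pi_e}[Q+\Delta]$.

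For the variance, however, your route differs from the paper's. The paper writes $n\mathbb{V}(\hat\rho)=\mathbb{E}[Y^2]-(\mathbb{E}[Y])^2$, substitutes $\widehat Q=Q+\Delta$ inside the square, expands fully, kills the cross terms containing $(r-Q)$ by conditioning on $(x,a)$, and then performs a nontrivial algebraic regrouping (their step (b)) to massage the surviving pieces into the three stated terms. Your approach instead applies the law of total variance twice---first over $x$, then over $a$---so that the three terms fall out directly: $\mathbb{V}_{P_0}(\mathbb{E}[Y\mid x])$ gives the second term, $\mathbb{E}_{P_0}\mathbb{E}_{\pi_b}[\mathbb{V}_r(Y\mid x,a)]$ gives the first, and $\mathbb{E}_{P_0}[\mathbb{V}_{\pi_b}(\mathbb{E}_r[Y\mid x,a])]$ gives the third once you invoke the identity $\pi_b\widehat\omega^2=\pi_e\,\omega(1-\delta)^2$. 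This is more transparent than the paper's expansion: each term is tied to a specific source of randomness (context, action, reward noise), and no post-hoc algebraic regrouping is needed. The paper's brute-force expansion buys nothing extra here; your decomposition is the cleaner argument.
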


\begin{proof}
See Appendix~\ref{sec:proofs-MRDR-bandit}.
\end{proof}


As expected from a DR estimator, Proposition~\ref{lem:tech_result_bias_variance} shows that~\eqref{eq:DR-bandit} is unbiased if either its DM part is unbiased, $\Delta=0$, or its IS part is unbiased, $\delta=0$. When the behavior policy $\pi_b$ is known, and thus, $\delta(x,a)=0$ for all $x$ and $a$, the variance of~\eqref{eq:DR-bandit} in Proposition~\ref{lem:tech_result_bias_variance} may be written as 

\vspace{-0.175in}
\begin{small}
\begin{align}
\label{eq:DR-var-bandit2}
&n\mathbb{V}_{P^{\pi_b}_\xi}(\hat{\rho}_{\text{DR}}^{\pi_e}) = \EE_{P^{\pi_b}_\xi}\left[\omega(x,a)^2\big(\R(x,a) - Q (x,a)\big)^2\right] \\ 
&+\mathbb V_{P_0}\big[V^{\pi_e}(x)\big]+\EE_{P_0,\pi_e}\Big[\omega(x,a)\Delta(x,a)^2 - \EE_{\pi_e}\big[\Delta(x,a)\big]^2\Big]. \nonumber 
\end{align}
\end{small}
\vspace{-0.175in}



Unfortunately, the variance formulation~\eqref{eq:DR-var-bandit2} is not suitable for our MRDR method, because its derivative w.r.t.~$\beta$ contains a term $\Delta(x,a)=\widehat{Q}(x,a)-Q(x,a)$ that cannot be estimated from samples as the true expected reward $Q$ is unknown. To address this issue, we derive a new formulations of the variance in Theorem~\ref{thm:mdr_variance_cb}, whose derivative does not contain such terms. 



\begin{theorem}
\label{thm:mdr_variance_cb}
The variance of the DR estimator~\eqref{eq:DR-bandit} for stochastic $\pi_e$ may be written as the following two forms:

\vspace{-0.175in}
\begin{small}
\begin{align}
\label{eq:var1}
&n\VV_{P^{\pi_b}_\xi}(\hat{\rho}^{\pi_e}_{\text{DR}}) = \mathbb E_{P^{\pi_b}_\xi}\Big[\omega(x,a)\Big(\mathbb{E}_{\pi_e}\left[\omega(x,a')\widehat{Q}(x,a';\beta)^2\right] \nonumber \\ 
&- \widehat{V}^{\pi_e}(x;\beta)^2 - 2\R(x,a)\big(\omega(x,a)\widehat{Q}(x,a;\beta)-\widehat{V}^{\pi_e}(x;\beta)\big)\Big) \nonumber \\
&+\omega(x,a)^2\R(x,a)^2-\mathbb{E}_{\pi_e}[r(x,a)]^2\Big] + \VV_{P_0}\big(\mathbb{E}_{\pi_e}[r(x,a)]\big), \\
&n\VV_{P^{\pi_b}_\xi}(\hat{\rho}^{\pi_e}_{\text{DR}}) = \overbrace{\mathbb E_{P^{\pi_b}_\xi}\big[\omega(x,a)q_\beta(x,a,\R)^\top\Omega_{\pi_b}(x)q_\beta(x,a,\R)\big]}^{J(\beta)} \nonumber \\
&\qquad\qquad\quad\; + C,
\label{eq:var2}
\end{align}
\end{small}
\vspace{-0.25in}

where \begin{small}$\Omega_{\pi_b}(x)=\mathrm{diag}\big[1/\pi_b(a|x)\big]_{a\in\mathcal{A}}-ee^\top$\end{small} is a positive semi-definite matrix (see Proposition~\ref{lemma:psd} in Appendix~\ref{sec:proofs-MRDR-bandit} fot the proof) with \begin{small}$e=[1,\ldots,1]^\top$\end{small}; \begin{small}$q_\beta(x,a,\R)=D_{\pi_e}(x)\bar{Q}(x;\beta)-\mathbb{I}(a)r$\end{small}  a row vector with \begin{small}$D_{\pi_e}(x)=\mathrm{diag}\big[\pi_e(a|x)\big]_{a\in\mathcal{A}}$\end{small}, row vector \begin{small}$\bar{Q}(x;\beta)=\big[\widehat{Q}(x,a;\beta)\big]_{a\in\mathcal{A}}$\end{small}, and the row vector of indicator functions \begin{small}$\mathbb I(a)=\big[\mathbf 1\{a'=a\}\big]_{a'\in\mathcal A}$\end{small}; and finally \begin{small}$C=\mathbb V_{P_0}\big(\mathbb{E}_{\pi_e}[r(x,a)]\big) -\mathbb E_{P^{\pi_b}_\xi}\big[\mathbb{E}_{\pi_e}[r(x,a)]^2\big]+\mathbb E_{P^{\pi_b}_\xi}\Big[\big(1+\omega(x,a)-\frac{1}{\pi^2_b(a|x)}\big)\omega(x,a)\R(x,a)^2\Big]$\end{small}.
\end{theorem}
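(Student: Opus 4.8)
The plan is to prove both identities by exploiting the i.i.d.\ structure of the estimator rather than starting from Proposition~\ref{lem:tech_result_bias_variance}. Since the trajectories are independent and, with $\pi_b$ known (so $\widehat\pi_b=\pi_b$, $\delta=0$, $\widehat\omega=\omega$), the estimator~\eqref{eq:DR-bandit} is an empirical mean $\hat\rho^{\pi_e}_{\text{DR}}(\beta)=\frac1n\sum_i Z_i$ of i.i.d.\ terms $Z=\omega(x,a)\big(r(x,a)-\widehat Q(x,a;\beta)\big)+\widehat V^{\pi_e}(x;\beta)$, we have $n\VV_{P^{\pi_b}_\xi}(\hat\rho^{\pi_e}_{\text{DR}})=\VV_{P^{\pi_b}_\xi}(Z)=\EE_{P^{\pi_b}_\xi}[Z^2]-\big(\EE_{P^{\pi_b}_\xi}[Z]\big)^2$. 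First I would dispatch the mean: by the $\delta=0$ case of Proposition~\ref{lem:tech_result_bias_variance}, $\EE[Z]=\rho^{\pi_e}=\EE_{P_0}[V^{\pi_e}(x)]$, and I would rewrite $(\EE[Z])^2=\EE_{P_0}[V^{\pi_e}(x)^2]-\VV_{P_0}(V^{\pi_e})$. Because $V^{\pi_e}(x)=\EE_{\pi_e}[r(x,a)]$ depends only on $x$ and the marginal of $x$ under $P^{\pi_b}_\xi$ is $P_0$, this already accounts for the $-\EE_{P^{\pi_b}_\xi}[\EE_{\pi_e}[r]^2]$ and $+\VV_{P_0}(\EE_{\pi_e}[r])$ contributions in~\eqref{eq:var1}.

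The core computation is then $\EE[Z^2]$. Expanding the square produces $\omega^2(r-\widehat Q)^2$, the cross term $2\omega(r-\widehat Q)\widehat V^{\pi_e}$, and $(\widehat V^{\pi_e})^2$. The engine of the whole calculation is the conditional (on $x$) change-of-measure identity $\EE_{a\sim\pi_b}[\omega(x,a)g(x,a)]=\EE_{a\sim\pi_e}[g(x,a)]$ together with its raised-power consequence $\EE_{a\sim\pi_b}[\omega(x,a)^2 g(x,a)]=\EE_{a\sim\pi_e}[\omega(x,a)g(x,a)]$. Using $\EE_{a\sim\pi_b}[\omega]=1$ and $\EE_{a\sim\pi_b}[\omega\widehat Q]=\widehat V^{\pi_e}$, the $\widehat Q^2$ term collapses to $\EE_{\pi_e}[\omega(x,a')\widehat Q(x,a';\beta)^2]$, the terms quadratic in $\widehat V^{\pi_e}$ combine to $-\omega(\widehat V^{\pi_e})^2$, the reward cross terms regroup as $-2\omega r\big(\omega\widehat Q-\widehat V^{\pi_e}\big)$, and the pure-reward piece is $\omega^2 r^2$; together with the mean terms above this is exactly~\eqref{eq:var1}.

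For the second form I would not recompute anything but instead re-package~\eqref{eq:var1} so that all $\beta$-dependence sits inside $J(\beta)$. Expanding $q_\beta^\top\Omega_{\pi_b}(x)q_\beta=\sum_{a'}[q_\beta]_{a'}^2/\pi_b(a'|x)-\big(\sum_{a'}[q_\beta]_{a'}\big)^2$ with $[q_\beta]_{a'}=\pi_e(a'|x)\widehat Q(x,a';\beta)-\mathbf 1\{a'=a\}\,r(x,a)$, I would use $\sum_{a'}[q_\beta]_{a'}=\widehat V^{\pi_e}(x;\beta)-r(x,a)$ and $\sum_{a'}[q_\beta]_{a'}^2/\pi_b(a'|x)=\EE_{\pi_e}[\omega\widehat Q^2]-2\omega\widehat Q\,r+r^2/\pi_b(a|x)$. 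Multiplying by $\omega$ and taking $\EE_{P^{\pi_b}_\xi}$, the terms $\EE_{\pi_e}[\omega\widehat Q^2]$, $-(\widehat V^{\pi_e})^2$ and $-2r(\omega\widehat Q-\widehat V^{\pi_e})$ reproduce precisely the $\beta$-dependent content of~\eqref{eq:var1}, so those terms are $J(\beta)$; everything remaining—namely $\VV_{P_0}(\EE_{\pi_e}[r])$, $-\EE_{\pi_e}[r]^2$, and the residual reward-only terms arising from the mismatch between the $\omega^2r^2$ of~\eqref{eq:var1} and the $\{r^2/\pi_b-r^2\}\omega$ emitted by $J(\beta)$—is $\beta$-free and constitutes $C$. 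Finally I would invoke Proposition~\ref{lemma:psd} to record that $\Omega_{\pi_b}(x)\succeq0$, so $J(\beta)$ is a convex quadratic, which is what makes MRDR's variance minimization well posed.

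The only genuinely delicate step is this last reconciliation. Isolating the $\beta$-dependent terms is clean precisely because the unknown $Q$ never enters $Z$; the subtle point is the $r^2$-coefficient of $C$, which must be tracked simultaneously through the $\mathrm{diag}[1/\pi_b]$ and the $ee^\top$ parts of $\Omega_{\pi_b}$ and then reconciled against the $\omega^2r^2$ term surviving in~\eqref{eq:var1}. I expect this reward-term bookkeeping, rather than the change-of-measure manipulations (which are mechanical once the key identity is in hand), to be where an error is most likely to arise, and I would verify the final coefficient on a degenerate instance (single state, deterministic reward, exact model, where $Z$ is constant and the variance must vanish) to pin it down.
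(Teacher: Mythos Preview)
Your proposal is correct and, for part~(i), takes a more elementary route than the paper. The paper's proof starts from the variance formula of Proposition~\ref{lem:tech_result_bias_variance} (which is expressed through the unknown bias $\Delta$), then applies the law of total variance to $\VV_{P_0}[V^{\pi_e}(x)]$ and uses $\EE_{P_r}[r(x,a)]=Q(x,a)$ to merge the $(r-Q)$ and $\Delta$ pieces into $(r-\widehat Q)$, arriving at $n\VV=\VV_{P_0,P_r}[\EE_{\pi_e}[r]]+\EE_{P_0,P_r}\big[\VV_{\pi_b}[\omega(r-\widehat Q)]\big]$, which it then expands term by term. You instead compute $\EE[Z^2]$ directly from $Z=\omega(r-\widehat Q)+\widehat V^{\pi_e}$; since $\Delta$ never appears in $Z$, it never has to be eliminated, and the device of inserting a harmless factor $\omega$ (via $\EE_{a\sim\pi_b}[\omega]=1$) to recast every $x$-only term as $\EE_{P^{\pi_b}_\xi}[\omega\cdot g(x)]$ is precisely the step the paper reaches only after cancelling $\Delta$. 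Your derivation is shorter and makes the $\Delta$-independence of the variance manifest from the outset; the paper's route is more modular in that it reuses Proposition~\ref{lem:tech_result_bias_variance}, but pays for that with the detour.

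For part~(ii) the two approaches coincide: both expand $q_\beta^\top\Omega_{\pi_b}(x)q_\beta$ through its $\mathrm{diag}[1/\pi_b]$ and $ee^\top$ components and match against~\eqref{eq:var1}. Your instinct to sanity-check the constant $C$ on a degenerate instance is well placed: carrying out the $r^2$ bookkeeping yields the coefficient $(1+\omega-1/\pi_b)\omega$ rather than the $(1+\omega-1/\pi_b^{2})\omega$ printed in the statement, and the signs on the last two terms of $C$ in the appendix proof and in the theorem statement also disagree. Since $C$ is $\beta$-independent this does not affect the MRDR objective $J(\beta)$, but your proposed verification would surface it.
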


\begin{proof}
See Appendix~\ref{sec:proofs-MRDR-bandit}.
\end{proof}

The significance of the variance formulations of Theorem~\ref{thm:mdr_variance_cb} is {\bf 1)} the variance of the DR estimator has no dependence on the unknown term $\Delta$, and thus, its derivative w.r.t.~$\beta$ is computable, {\bf 2)} the expectation in~\eqref{eq:var2} is w.r.t.~$P^{\pi_b}_\xi$, which makes it possible to replace $J(\beta)$ with its unbiased SAA 

\vspace{-0.15in}
\begin{small}
\begin{equation*}
J_n(\beta) = \frac{1}{n}\sum_{i=1}^n\omega(x_i,a_i)q_\beta(x_i,a_i, \R_i)^\top\Omega_{\pi_b}(x_i)q_\beta(x_i,a_i, \R_i),
\end{equation*}
\end{small}
\vspace{-0.15in}

where $\mathcal{D}=\{(x_i,a_i,r_i)\}_{i=1}^n$ is the data set generated by the behavior policy $\pi_b$, such that the optimizer of $J_n(\beta)$ converges to that of $J(\beta)$ almost surely, and {\bf 3)} $J(\beta)$ in~\eqref{eq:var2} is a convex quadratic function of $q_\beta$, which in case that $\widehat{Q}(x,a;\beta)$ is smooth, makes it possible to efficiently optimize $J_n(\beta)$ with stochastic gradient descent. Moreover, when $\nabla_\beta\widehat Q(x,a;\beta)$ can be explicitly written, we can obtain $\beta^*_n\in\arg\min_\beta J_n(\beta)$, by solving the first order optimality condition $\sum_{i=1}^n\omega(x_i,a_i)$ $q_\beta(x_i,a_i,r_i)^\top\Omega_{\pi_b}(x_i)D_{\pi_e}(x_i)\nabla_\beta\bar{Q}(x_i;\beta)= 0$.

In case the evaluation policy is deterministic, the variance $n\VV_{P^{\pi_b}_\xi}(\hat{\rho}^{\pi_e}_{\text{DR}})$ in~\eqref{eq:var1} becomes

\vspace{-0.25in}
\begin{small}
\begin{align*}
&\overbrace{\EE_{P^{\pi_b}_\xi}\Big[\frac{\mathbf 1\{\pi_e(x)=a\}}{\pi_b(a|x)}\cdot\frac{1- \pi_b(a|x)}{\pi_b(a|x)} \big(\R(x,a) - \widehat{Q}(x,a;\beta)\big)^2\Big]}^{J(\beta)}\\ 
&\quad\;\;+\VV_{P_0}\big(\mathbb{E}_{\pi_e}[r(x,a)]\big).
\end{align*}
\end{small}
\vspace{-0.25in}

This form of $J(\beta)$ allows us to find the model parameter of MRDR by solving the WLS 
\begin{equation}\label{eq:J_b_deterministic}
\small
\begin{split}
\beta^*_n\in&\arg\min_\beta J_n(\beta)=\frac{1}{n}\sum_{i=1}^{n}\mathbf 1\{\pi_e(x_i) = a_i\}\cdot\\
&\quad\quad\frac{1- \pi_b(a_i|x_i)}{\pi_b(a_i|x_i)^2}\big(\R(x_i,a_i)-\widehat{Q}(x_i,a_i;\beta)\big)^2.
\end{split}
\end{equation}
Comparing this WLS with that in the DM approach in~\ref{eq:DM_opt_bandit}, we note that MRDR changes the weights from $1/\pi_b$ to $(1-\pi_b)/\pi_b^2$, and this way increases the penalty of the samples whose actions are the same as those suggested by $\pi_e$, but have low probability under $\pi_b$, and decreases the penalty of the rest of the samples. 

%
%

\vspace{-0.1in}
\subsection{MRDR Estimators for Reinforcement Learning} 
\label{subsec:MRDR-RL}

We now present our MRDR estimator for RL. We begin with the DR estimator for RL given by~\eqref{eq:DR}. Similar to the bandits case reported in Section~\ref{subsec:MRDR-bandit}, we first derive a formula for the variance of the estimator~\eqref{eq:DR}, whose derivative can be easily estimated from trajectories generated by the behavior policy. We then use this variance formulation as the objective function to find the MRDR model parameter.


\begin{theorem}
\label{thm:mdr_variance}
The variance of the DR estimator in~\eqref{eq:DR} can be written as

\vspace{-0.2in}
\begin{small}
\begin{align}
\label{eq:mdr-variance}
n\mathbb V_{P^{\pi_b}_\xi}(\hat{\rho}^{\pi_e}_{\text{DR}}) &= \sum_{t=0}^{T-1}\mathbb E_{\F_{0:t-1}}\bigg[\gamma^{2t}\omega^2_{0:t-1}\VV_{\F_{t:T-1}}\Big(\omega_t\big(\bar{\RV}_{t:T-1} \nonumber \\
&- \widehat{Q}^{\pi_e}(x_t,a_t;\beta)\big)\Big) + \widehat{V}^{\pi_e}(x_t;\beta) + C_t\bigg] \\
&+\mathbb E_{\F_{0:t}}\Big[\gamma^{2t-2}\omega^2_{0:t-1}\VV_{\F_{t+1:T-1}}(\bar{\RV}_{t:T-1} \mid \F_t)\Big], \nonumber
\end{align}
\end{small}
\vspace{-0.15in}


where \begin{small}$\F_{t_1:t_2}$\end{small} is the filtration induced by the sequence \begin{small}$\{x_{t_1},a_{t_1},r_{t_1},\ldots,x_{t_2},a_{t_2},r_{t_2}\}\sim P_\xi^{\pi_b}$, $\bar{\RV}_{t:T-1}=\R(x_t,a_t)+\gamma\sum_{\tau=t+1}^{T-1}\gamma^{\tau-(t+1)}\omega_{t+1:j}\R(x_\tau,a_\tau)$\end{small}, and \begin{small}$C_t=E_{\F_{t:T-1}}\Big[\omega_t^2\big(\bar{\RV}_{t:T-1}-\mathbb E_{\F_{t+1:T-1}}[\bar{\RV}_{t:T-1}]\big)^2-2\omega_t^2\bar{\RV}_{t:T-1}$ $\big(\bar{\RV}_{t:T-1}-\mathbb E_{\F_{t+1:T-1}}[\bar{\RV}_{t:T-1}]\big)\Big]$\end{small} is a $\beta$-independent term.
\end{theorem}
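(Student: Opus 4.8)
The plan is to reduce the claim to the variance of a single trajectory and then peel off the per-step contributions via the law of total variance. Since the $n$ trajectories in $\D$ are drawn i.i.d.\ from $P^{\pi_b}_\xi$, we have $n\VV_{P^{\pi_b}_\xi}(\hat{\rho}^{\pi_e}_{\text{DR}})=\VV_{P^{\pi_b}_\xi}\big(D(\xi)\big)$, where $D(\xi)=\sum_{t=0}^{T-1}\gamma^t\big[\omega_{0:t}\R_t-\omega_{0:t}\widehat{Q}^{\pi_e}(x_t,a_t;\beta)+\omega_{0:t-1}\widehat{V}^{\pi_e}(x_t;\beta)\big]$ is the single-trajectory DR estimator from~\eqref{eq:DR}. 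Grouping the step-$t$ terms through $\omega_{0:t}=\omega_{0:t-1}\omega_t$ with $\omega_t=\pi_e(a_t|x_t)/\pi_b(a_t|x_t)$, I would first rewrite $D(\xi)=\sum_{t=0}^{T-1}\gamma^t\omega_{0:t-1}\big[\widehat{V}^{\pi_e}(x_t;\beta)+\omega_t\big(\R_t-\widehat{Q}^{\pi_e}(x_t,a_t;\beta)\big)\big]$, which exposes the control-variate structure of the estimator. It is also convenient to package the sum into the recursive backup $G_t=\widehat{V}^{\pi_e}(x_t;\beta)+\omega_t\big(\R_t+\gamma G_{t+1}-\widehat{Q}^{\pi_e}(x_t,a_t;\beta)\big)$ with $G_T=0$, for which a short induction gives $D(\xi)=G_0$.

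The single identity that drives the whole decomposition is the zero-conditional-mean property of the DR correction: under $\pi_b$, conditioned on $\F_{0:t-1}$ and $x_t$, $\EE[\omega_t\widehat{Q}^{\pi_e}(x_t,a_t;\beta)\mid\F_{0:t-1},x_t]=\sum_{a}\pi_e(a|x_t)\widehat{Q}^{\pi_e}(x_t,a;\beta)=\widehat{V}^{\pi_e}(x_t;\beta)$, so each model term is a genuine control variate. Together with the importance-weighting identity $\EE[\omega_t Y\mid x_t]=\EE_{\pi_e}[Y\mid x_t]$, this is what makes the DR estimator unbiased (shown in Appendix~\ref{sec:proofs-DR}) and, more importantly, what makes the cross-covariances between different time steps collapse.

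The main steps: I would apply the law of total variance $\VV(X)=\EE[\VV(X\mid\mathcal G)]+\VV(\EE[X\mid\mathcal G])$ recursively along the nested filtrations $\F_{0:0}\subset\F_{0:1}\subset\cdots$. Peeling from $t=0$ forward, at each stage I condition on $\F_{0:t-1}$ and split the variance of the remaining DR tail into (i) the within-step fluctuation generated by $(a_t,r_t)$ and the future, weighted by $\gamma^{2t}\omega^2_{0:t-1}$, and (ii) the conditional expectation that is carried to the next stage. The telescoping of $G_t$ lets me rewrite the accumulated future rewards inside each conditional variance as the Monte-Carlo return $\bar{\RV}_{t:T-1}$, producing the factor $\VV_{\F_{t:T-1}}\big(\omega_t(\bar{\RV}_{t:T-1}-\widehat{Q}^{\pi_e}(x_t,a_t;\beta))\big)$ in the statement. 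Collecting the leftover terms that survive the conditioning but do not depend on $\beta$ (the products of $\bar{\RV}_{t:T-1}$ with its conditional mean, and the inner conditional variance $\VV_{\F_{t+1:T-1}}(\bar{\RV}_{t:T-1}\mid\F_t)$) then yields exactly $C_t$ and the final summand of~\eqref{eq:mdr-variance}.

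The hard part will be the bookkeeping of the cross terms. A naive expansion of $\VV(\sum_t Y_t)$ produces all covariances $\mathrm{Cov}(Y_s,Y_t)$ for $s<t$; the argument must show, using the control-variate identity at step $t$ and the tower property, that each such covariance vanishes because the increments of $G_t$ form a martingale-difference sequence with respect to $\{\F_{0:t}\}$ — otherwise the clean per-$t$ sum in~\eqref{eq:mdr-variance} never appears. A secondary subtlety is to keep the $\beta$-dependent contribution, namely $\gamma^{2t}\omega^2_{0:t-1}\VV_{\F_{t:T-1}}(\omega_t(\bar{\RV}_{t:T-1}-\widehat{Q}^{\pi_e}(x_t,a_t;\beta)))$, which is what MRDR will later minimize, rigidly separated from the $\beta$-free remainder $C_t$ and the tail variance, since only the former enters the subsequent optimization objective.
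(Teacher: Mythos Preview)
Your plan is correct and matches the paper's proof: both arguments reduce to $\VV_{P^{\pi_b}_\xi}(D(\xi))$ for a single trajectory and then unwind it via the law of total variance along the filtration, using exactly the control-variate identity $\EE[\omega_t\widehat{Q}^{\pi_e}(x_t,a_t;\beta)\mid x_t]=\widehat{V}^{\pi_e}(x_t;\beta)$ to collapse the conditional expectations and isolate the per-step term $\gamma^{2t}\omega_{0:t-1}^2\VV_{\F_{t:T-1}}\big(\omega_t(\bar{\RV}_{t:T-1}-\widehat{Q}^{\pi_e})\big)$, with a final completion-of-squares step separating out the $\beta$-free $C_t$. The only cosmetic difference is packaging: the paper sets up a formal induction on the truncated return $\RV_{j:T-1}$ (base case $T=2$), whereas you peel forward through the recursion $G_t$ and phrase the vanishing cross terms as a martingale-difference property---these are the same computation.
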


\begin{proof}
The proof is by mathematical induction and is reported in Appendix~\ref{sec:proofs-MRDR-RL}.
\end{proof}


As opposed to the DR variance reported in~\citet{Jiang16DR}, ours in \eqref{eq:mdr-variance} has no dependence on the DM bias $\Delta$, which contains the unknown term $Q^{\pi_e}$, and plus, all its expectations are over $P^{\pi_b}_\xi$. This allows us to easily compute the MRDR model parameter from the gradient of~\eqref{eq:mdr-variance}.

Let's define $\beta^*\in\arg\min_{\beta\in\mathbb R^\kappa}\VV_{P^{\pi_b}_\xi}\big(\hat{\rho}_{\text{DR}}^{\pi_e}(\beta)\big)$ as the minimizer of the DR variance. We may write $\beta^*$ using the variance formulation of Theorem~\ref{thm:mdr_variance}, and after dropping the $\beta$-independent terms, as $\beta^*\in\arg\min_{\beta\in\mathbb R^\kappa}\sum_{t=0}^{T-1} \mathbb E_{\F_{0:t-1}}\Big[\gamma^{2t}\omega^2_{0:t-1}\VV_{\F_{t}}\Big(\omega_t\big(\bar{\RV}_{t:T-1}-\widehat{Q}^{\pi_e}(x_t,a_t;\beta)\big)+\widehat{V}^{\pi_e}(x_t;\beta)\Big)\Big]$. 
%
%
Similar to the derivation of~\eqref{eq:var2} for bandits, we can show that

\vspace{-0.2in}
\begin{small}
\begin{align}
\label{eq:opt_mrdr_RL1}
\beta^*\in\arg&\min_{\beta\in\mathbb R^\kappa}J(\beta)=\sum_{t=0}^{T-1}\gamma^{2t}\mathbb E_{\F_{0:t-1}}\big[\omega_{0:t-1}^2\cdot\omega_t\cdot \\
&q_\beta(x_t,a_t,\bar{\RV}_{t:T-1})^\top\Omega_{\pi_b}(x_t)q_\beta(x_t,a_t,\bar{\RV}_{t:T-1})\big]. \nonumber
\end{align}
\end{small}
\vspace{-0.2in}

As shown in Proposition \ref{lemma:psd}, $J(\beta)$ is a quadratic convex function of $q_\beta$, which means that if the approximation $\widehat Q^{\pi_e}(\cdot,\cdot;\beta)$ is smooth in $\beta$, then this problem can be effectively solved by gradient descent. Since the expectation in~\eqref{eq:opt_mrdr_RL1} is w.r.t.~$P^{\pi_b}_\xi$, we may use the trajectories in $\mathcal{D}$ (generated by $\pi_b$), replace $J(\beta)$ with its unbiased SAA, $J_n(\beta)$, and solve it for $\beta$, i.e.,

\vspace{-0.2in}
\begin{small}
\begin{align}
\label{eq:opt_mrdr_RL2}
\beta^*_n\in&\arg\min_{\beta\in\mathbb R^\kappa}J_n(\beta)=\sum_{i=1}^n\sum_{t=0}^{T-1}\gamma^{2t}(\omega^{(i)}_{0:t-1})^2\cdot\omega_t^{(i)}\cdot \\
&q_\beta(x^{(i)}_t,a^{(i)}_t,\bar{\RV}^{(i)}_{t:T-1})^\top\Omega_{\pi_b}(x^{(i)}_t)q_\beta(x^{(i)}_t,a^{(i)}_t,\bar{\RV}^{(i)}_{t:T-1}) \nonumber.
\end{align}
\end{small}
\vspace{-0.2in}

Since $J_n(\beta)$ is strongly consistent, $\beta^*_n\rightarrow\beta^*$ almost surely. If we can explicitly write $\nabla_\beta\widehat Q(x,a;\beta)$, then $\beta^*_n$ is the solution of equation $0=\sum_{i=1}^n\sum_{t=0}^{T-1}\gamma^{2t}(\omega^{(i)}_{0:t-1})^2\omega_t^{(i)}$ $q_\beta(x^{(i)}_t,a^{(i)}_t,\bar{\RV}^{(i)}_{t:T-1})^\top\Omega_{\pi_b}(x^{(i)}_t)D_{\pi_e}(x^{(i)}_t)\nabla_\beta\bar{Q}(x^{(i)}_t;\beta)$ . 

In case the evaluation policy is deterministic, we can further simplify $J_n(\beta)$ and derive the model parameter for MRDR by solving the following WLS problem:

\vspace{-0.2in}
\begin{small}
\begin{align}
\label{eq:0000}
J_n(\beta)&=\frac{1}{n}\sum_{i=1}^{n}\sum_{t=0}^{T-1}\gamma^{2t}(\omega^{(i)}_{0:t-1})^2\omega_t^{(i)}\mathbf 1\{\pi_e(x^{(i)}_t) = a^{(i)}_t\} \nonumber \\
&\frac{1- \pi_b(a^{(i)}_t|x^{(i)}_t)}{\pi_b(a^{(i)}_t|x^{(i)}_t)^2}\big(\bar{\RV}^{(i)}_{t:T-1}-\widehat{Q}^{\pi_e}(x^{(i)}_t,a^{(i)}_t;\beta)\big)^2. 
\end{align}
\end{small}
\vspace{-0.2in}

The intuition behind the weights in WLS~\eqref{eq:0000} is {\bf 1)} to adjust the difference between the occupancy measures of the behavior and evaluation policies, and {\bf 2)} to increase the penalty of the policy discrepancy term $\mathbf 1\{\pi_e(x_t) = a_t\}$. 

\subsection{Other Properties of the MRDR Estimators}
\label{subsec:MRDR-asy}
\begin{paragraph}
{\bf Strong Consistency} Similar to the analysis in~\citet{Thomas16DE} for weighted DR, we prove (in Appendix~\ref{sec:proofs-MRDR-asy}) that the MRDR estimators are strongly consistent, i.e.,
~$\lim_{n\rightarrow\infty}\hat{\rho}^{\pi_e}_{\text{MRDR},n}(\beta^*_n)= \rho^{\pi_e}$ almost surely. This implies that MRDR is a {\em well-posed} OPE estimator.  
\end{paragraph}

\vspace{-0.1in}
\begin{paragraph}
{\bf Asymptotic Optimality} The MRDR estimator, by construction, has the lowest variance among the DR estimators of the form~\eqref{eq:DR}. On the other hand, the semi-parametric theory in multivariate regression~\cite{Robins94ER} states that without extra assumption on the data distribution, the class of {\em unbiased}, {\em consistent} and {\em asymptotically normal} OPE estimators is asymptotically equivalent to the DR estimators in~\eqref{eq:DR}. Utilizing this result, we can show that the MRDR estimators are asymptotically optimal (i.e.,~have minimum variance) in this class of estimators.
\end{paragraph}



\vspace{-0.1in}
\begin{paragraph}
{\bf MRDR Extensions} Similar to~\citet{Thomas16DE}, we can derive the {\em weighted} MRDR estimator by replacing the IS part of the MRDR estimator in~\eqref{eq:DR} with (per-step) weighted importance sampling. This introduces bias, but potentially reduces its variance, and thus, its MSE. 

Throughout the paper, we assumed that the data has been generated by a single behavior policy. We can extend our MRDR results to the case that there are more than one behavior policy by replacing the IS part of our estimator with {\em fused importance sampling}~\citep{Peshkin02LS}.
\end{paragraph}



\vspace{-0.1in}
\section{Experiments}\label{sec:experiment}
In this section, we demonstrate the effectiveness of the proposed MRDR estimation by comparing it with other state-of-the art methods from Section \ref{sec:exist_method} on both contextual bandit and RL benchmark problems.

\subsection{Contextual Bandit} 
\vspace{-0.1in}

Using the $9$ benchmark experiments described in \citet{Dudik11DR}, we evaluate the OPE algorithms using the standard classification data-set from the UCI repository. Here we follow the same procedure of transforming a classification data-set into a contextual bandit dataset. For the sake of brevity, detailed descriptions of the experimental setup will be deferred to the appendix.

Given a deterministic policy $\pi$, which is a logistic regression model trained by the classification data-set, we discuss three methods of transforming it into stochastic policies. The first one, which is known as \emph{friendly softening}, constructs a stochastic policy with the following smoothing procedure: Given two constants $\alpha$ and $\beta$, and a uniform (continuous) random variable $u\in[-0.5, 0.5]$. For each $a\in\{1,\ldots,l\}$, whenever $\pi(x)=a$, the stochastic policy $\pi_{\alpha, \beta} (x)$ returns $a$ with probability $\alpha + \beta \times u$, and it returns $k$, which is a realization of the uniform (discrete) random variable in $\{1,\ldots,l\}\setminus\{a\}$ with probability $\frac{1- (\alpha + \beta \times u)}{l-1}$. The second one, which is known as \emph{adversarial softening}, constructs a stochastic policy $\pi_{\alpha, \beta} (x)$ from policy $\pi$ in a similar fashion. Whenever $\pi(x)=a$, $\pi_{\alpha, \beta} (x)$ returns $k\neq a$ with probability $\alpha + \beta \times u$, and it returns $\tilde k$, which is a realization of the uniform (discrete) random variable in $\{1,\ldots,l\}$ with probability $\frac{1- (\alpha + \beta \times u)}{l}$. The third one, which is the \emph{neutral policy}, is a uniformly random policy. We will use these methods to construct behavior and evaluation policies. Table~\ref{table:policies} summarizes their specifications. 

Here we compare the MRDR method with the direct method (DM), the importance sampling (IS) method and two doubly robust (DR) estimators. The model parameter of the DM estimator is obtained by solving the SAA of the following problem: $\beta_{\text{DM}}\in\arg\min_{\beta\in\RR^\kappa}\EE_{(x,a)\sim  P^{\pi_b}_\xi }[(Q^{\pi_e}(x,a)-\widehat Q^{\pi_e}(x,a;\beta))^2]$, which means all samples are weighted according to data, without consideration of the visiting distribution induced by the evaluation policy. The model parameters of the DR estimator is optimized based on the DM methodologies described in~\eqref{eq:DM_opt_RL}. Besides the standard DR estimator we also include another alternative that is known as the DR0, which heuristically uses the model parameter from the vanilla DM method (which is called DM0, and it assigns uniform weights over samples). 

Below are results over the five behavior policies and five algorithms on the benchmark datasets. (Due to page limit, only the results of Vehicle, SatImage, PenDigits and Letter are included in the main paper, see appendix for the remaining results.) We evaluate the accuracy of the estimation via root mean squares error (MSE): $\sqrt{\sum_{j=1}^{N} (\hat \rho^{\pi_e}_j-\rho^{\pi_e})^2/N}$, where $\widehat\rho^{\pi_e}_j$ is the estimated value from the $j$-th dataset. Furthermore, we perform a $95\%$ significance test \emph{only} on MRDR and DR, with bold numbers indicating the corresponding method outperforms its counterpart significantly. 

In the contextual bandit experiments, it's clear that in most cases the proposed MRDR estimator is superior to all alternative estimators (statistical) significantly. Similar to the results reported in \citet{Dudik11DR}, the DM method incurs much higher MSE than other methods in all of the experiments. This is potentially due to the issue of high bias in model estimation when the sample-size is small.
In general the estimation error is increasing across rows from top to bottom. This is expected due to the increasing difficulties in the OPE tasks that is accounted by the increasing mis-matches between behavior and evaluation policies. 
Although there are no theoretical justifications, in most cases the performance of DR estimators (with the DM method described in Section \ref{subsec:DM}) is better than that of DR0. This also illustrates the benefits of optimizing the model parameter based on the knowledge of trajectory distribution $P^{\pi_e}_\xi$, which is generated by the evaluation policy.

\vspace{-0.2in}
\begin{table}[H]
\centering
\scriptsize
\caption{Behavior and Evaluation Policies}
\label{table:policies}
\begin{tabular}{c|c|cc}
&Policy & $\alpha$ & $\beta$ \\ \midrule \midrule
Evaluation Policy & & 0.9 & 0 \\ \midrule
\multirow{5}{*}{Behavior Policies}  &Friendly I & 0.7 & 0.2 \\ 
&Friendly II & 0.5 & 0.2 \\ 
&Neutral & - & - \\ 
&Adversary I & 0.3 & 0.2 \\ 
&Adversary II & 0.5 & 0.2 \\ 
\end{tabular}
\end{table}
\vspace{-0.2in}

\begin{table}[H]
\vspace{-0.2in}
\centering
\scriptsize
\caption{Vehicle}
\label{table:vehicle}
\begin{tabular}{cccccccc}
Behavior Policy  &DM & IS & DR & MRDR & DR0   \\
 \midrule   \midrule
Friendly I&0.3273&0.0347&0.0217&\bf 0.0202&0.0224 \\\midrule
Friendly II&0.3499&0.0517&0.0331&\bf 0.0318&0.0356\\ \midrule
Neutral&0.4384&0.087&0.0604&\bf 0.0549&0.0722\\ \midrule
Adversary I&0.405&0.0937&0.0616&\bf 0.0516&0.0769\\ \midrule
Adversary II&0.405&0.1131&0.0712&\bf 0.0602&0.0952
\end{tabular}
\end{table}
\vspace{-0.2in}

\begin{table}[H]
\vspace{-0.2in}
\centering
\scriptsize
\caption{SatImage}
\label{table:satimage}
\begin{tabular}{cccccccc}
Behavior Policy  &DM & IS & DR & MRDR & DR0   \\
 \midrule   \midrule
 Friendly I&0.2884&0.0128&0.0071&\bf 0.0063&0.0073 \\ \midrule
Friendly II &0.3328&0.0191&0.0107&\bf 0.0087&0.0119 \\ \midrule
 Neutral&0.3848&0.0413&0.0246&\bf 0.0186& 0.0335 \\ \midrule
Adversary I &0.3963&0.0459&0.027&\bf 0.0195&0.0383 \\ \midrule
Adversary II &0.4093&0.0591&0.0364&\bf 0.0262&0.0521 
\end{tabular}
\end{table}
\vspace{-0.2in}

\begin{table}[H]
\vspace{-0.2in}
\centering
\scriptsize
\caption{PenDigits}
\label{table:pendigits}
\begin{tabular}{cccccccc}
Behavior Policy  &DM & IS & DR & MRDR & DR0   \\
 \midrule
 \midrule
 Friendly I &0.4014& 0.0103 & 0.0056 & {\bf 0.0037} & 0.0059  \\
  \midrule
 Friendly II &0.4628& 0.0159 & 0.0092 & {\bf 0.0056} & 0.0194  \\
  \midrule
 Neutral &0.564& 0.0450 & 0.0314 & {\bf 0.0138} & 0.0412  \\
  \midrule
 Adversary I &0.5861& 0.0503 & 0.0366 & {\bf 0.0172} & 0.0472  \\
  \midrule
 Adversary II &0.5641& 0.0646 & 0.0444 & {\bf 0.0188} & 0.0611  \\
\end{tabular}
\end{table}
\vspace{-0.2in}

\begin{table}[H]
\vspace{-0.2in}
\centering
\scriptsize
\caption{Letter}
\label{table:letter}
\begin{tabular}{cccccccc}
Behavior Policy  & DM & IS & DR & MRDR & DR0   \\
 \midrule   \midrule
Friendly I&0.392&0.0074&0.0056&\bf 0.0044&0.0057\\ 
\midrule
Friendly II&0.4146&0.0102&0.0077&\bf 0.0054&0.0083\\
\midrule
Neutral&0.4713&0.0467&0.0363& \bf 0.0315& 0.0456\\
\midrule
Adversary I&0.46&0.0587&0.0455&\bf 0.0385&0.0575\\
\midrule
Adversary II&0.4728&0.0714&0.055&0.0481&0.0703
\end{tabular}
\end{table}
\vspace{-0.2in}

\subsection{Reinforcement Learning }
\vspace{-0.05in}

In this section we present the experimental results of OPE in reinforcement learning.
We first test the OPE algorithms on the standard domains ModelWin, ModelFail, and $4\times 4$ Maze, with behavior and evaluation policies used in \citet{Thomas16DE}. The schematic diagram of the domains is shown in Figure \ref{fig:environments}.
To demonstrate the scalability of the proposed OPE methods, we also test the OPE algorithms on the following two domains with continuous state space: Mountain Car and Cart Pole. To construct the stochastic behavior and evaluation policies, we first compute the optimal policy using standard RL algorithms such as SARSA and $Q$-learning. Then these policies are constructed by applying friendly softening to the optimal policy with specific values of $(\alpha,\beta)$.
For both domains, the evaluation policy is constructed using $(\alpha,\beta)=(0.9,0.05)$, and the behavior policy is constructed analogously using $(\alpha,\beta)=(0.8,0.05)$. 
Detailed explanations the experimental setups can be found in the appendix.
In the following experiments we set the discounting factor to be $\gamma=1$.
 
For both the ModelFail and ModelWin domains, the number of training trajectories is set to $64$, for  Maze, Mountain Car, and Cart Pole domains this number is set to $1024$. The number of trajectories for sampling-based part of estimators varies from $32$ to $512$ for the ModelWin, ModelFail, and Cart Pole domains, and varies from $128$ to $2048$ for the Maze domain and Mountain Car domains.

\vspace{-0.125in}
\begin{figure}[H]
\centering
  \includegraphics[width=0.3\textwidth]{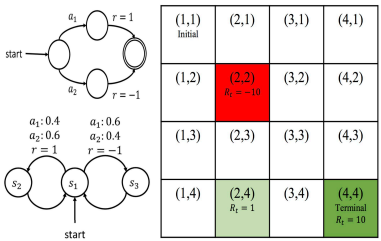} 
  \vspace{-0.225in}
  \caption{Environments from~\citet{Thomas16DE}. Top left: ModelFail; Bottom left: ModelWin; Right: Maze}
  \label{fig:environments}
\end{figure}
\vspace{-0.15in}

In all of the above experiments, we compare results of MRDR with DM, IS, DR, and DR0 estimations by their corresponding MSE values. Similarly, the bold numbers represent cases when the performance of the MRDR estimator is statistically significantly better than that of the DR estimator.
Similar to the contextual bandit setting, except for the ModelWin domain that is known to be in favor of the DM estimator \cite{Thomas16DE}, in most cases MRDR estimator has significantly lower MSE than other existing methods. 
Furthermore, when the sample size of the evaluation trajectories increases, we also observe accuracy improvements on all estimators in every experiment. Similar to the contextual bandit setting, significant performance improvement can be observed when one switches from DR0 to DR in the RL experiments.

\begin{table}[H]
\vspace{-0.175in}
\centering
\scriptsize
\caption{ModelFail}
\label{table:modelfail}
\begin{tabular}{cccccccc}
\text{Sample Size} &DM&IS& DR& MRDR& DR0 \\ \midrule \midrule
32&0.07152&1.37601&0.18461&\bf 0.1698&1.16084 \\ \midrule
64&0.07152&1.07213&0.1314&\bf 0.11405&0.9046 \\ \midrule
128&0.07152&0.752&0.09901&\bf 0.08188&0.63571 \\ \midrule
256&0.07152&0.55955&0.06565&\bf 0.05527&0.47211\\ \midrule
512&0.07152&0.39533&0.04756&\bf 0.03819&0.33391
\end{tabular}
\vspace{-0.175in}
\end{table}
\begin{table}[H]
\vspace{-0.175in}
\centering
\scriptsize
\caption{Modelwin}
\label{table:modelwin}
\begin{tabular}{cccccccc}
\text{Sample Size} &DM & IS& DR& MRDR& DR0 \\ \midrule \midrule
32&0.06182&0.78452&1.55244&\bf 1.46778&1.51858\\ \midrule
64&0.06182&1.03207&1.13856&\bf 0.98433&1.40758\\ \midrule
128&0.06182&0.90166&1.4195&\bf 1.27891&1.52634 \\ \midrule
256&0.06182&0.78507&1.03575&\bf 0.79849&1.10332 \\ \midrule
512&0.06182&0.55647&0.89655&0.66791&0.97128
\end{tabular}
\end{table}
\vspace{-0.175in}
\begin{table}[H]
\vspace{-0.175in}
\centering
\scriptsize
\caption{$4\times4$ Maze}
\label{table:maze}
\begin{tabular}{cccccccc}
\text{Sample Size} &DM  & IS& DR& MRDR& DR0 \\ \midrule \midrule
128&1.77598&6.68579&0.70465&\bf 0.57042&0.70969 \\ \midrule
256&1.77598&3.50346&0.69886&\bf 0.58871&0.70211 \\ \midrule
512&1.77598&2.64257&0.60124&0.58879&0.60338 \\ \midrule
1024&1.77598&1.45434&0.5201&\bf 0.4666&0.52148 \\ \midrule
2048&1.77598&0.89668&0.3932&\bf 0.31274&0.39425
\end{tabular}
\end{table}
\vspace{-0.175in}

\begin{table}[H]
\vspace{-0.175in}
\centering
\scriptsize
\caption{Mountain Car}
\label{table:mountain_car}
\begin{tabular}{cccccccc}
\text{Sample Size} &DM& IS& DR& MRDR& DR0 \\ \midrule \midrule	
128&17.80368&23.11318&16.14661&\bf 14.96227&19.46953 \\ \midrule
256&14.62359&14.82684&13.89212&\bf 12.48327&22.80573\\ \midrule
512&13.22012&8.26484&8.01421&7.89474&7.96849\\ \midrule
1024&10.24318&3.26843&3.03239& 3.1359&9.16269 \\ \midrule
2048&10.91577&2.50591&2.75933&\bf 2.17138&8.25527
\end{tabular}
\end{table}
\vspace{-0.175in}

\begin{table}[H]
\vspace{-0.175in}
\centering
\scriptsize
\caption{Cart Pole}
\label{table:cart_pole}
\begin{tabular}{cccccccc}
\text{Sample Size} &DM& IS& DR& MRDR& DR0 \\ \midrule \midrule						
32&3.92319&1.18213&0.34775&\bf 0.27208 &0.40567 \\ \midrule
64&3.97312&0.82658&0.27905&\bf 0.2353&0.31494 \\ \midrule
128&3.92319&0.66174&0.18793&0.16455&0.21232 \\ \midrule
256&3.82333&0.62042&0.17091&\bf 0.16012 &0.1915 \\ \midrule
512&3.80461&0.31021&0.08455&\bf 0.079&0.08946
\end{tabular}
\end{table}
\vspace{-0.175in}



\vspace{-0.25in}
\section{Conclusions}\label{sec:conclusions} 

\vspace{-0.025in}

In this paper, we proposed the class of more-robust doubly-robust (MRDR) estimators for off-policy evaluation in RL. In particular, we proposed a principled method to calculate the model in DR estimator, which aims at minimizing its variance. Furthermore, we showed that our estimator is consistent and asymptotically optimal in the class of unbiased, consistent and asymptotically normal estimators. Finally, we demonstrated the effectiveness of our MRDR estimator in bandits and RL benchmark problems. 

Future work includes extending the MRDR estimator to the cases {\bf 1)} when there are multiple behavior policies, {\bf 2)} when the action set has a combinatorial structure, e.g.,~actions are in the form of slates~\citep{Swaminathan17OP}, and {\bf 3)} when  the behavior policy is unknown.

\newpage
\bibliography{ref}
\bibliographystyle{icml2018}

\newpage
\appendix
\onecolumn

\section{Proofs of Section~\ref{subsec:DM}}\label{sec:proofs-DM}

\begin{proposition}
Solving the weighted MSE problem 

\vspace{-0.15in}
\begin{small}
\begin{equation*}
\beta^*\in\arg\min_{\beta\in\RR^\kappa}\EE_{(x,a)\sim\mu_{\pi_e}}\Big[\big(Q^{\pi_e}(x,a)-\widehat Q^{\pi_e}(x,a;\beta)\big)^2\Big],
\end{equation*}
\end{small}
\vspace{-0.175in}

where $\mu_{\pi_e}$ is the $\gamma$-discounted horizon-$T$ state-action occupancy of $\pi_e$ and $\mathbf{1}\{\cdot\}$ is the indicator function is equivalent to solving 

\vspace{-0.15in}
\begin{small}
\begin{equation*}
\beta^*\in\arg\min_{\beta\in\RR^\kappa}\sum_{t=0}^{T-1}\gamma^t\EE_{P^{\pi_b}_\xi}\Big[\omega_{0:t}\big(\bar{R}_{t:T-1}(\xi)-\widehat Q^{\pi_e}(x_t,a_t;\beta)\big)^2\Big],
\end{equation*}
\end{small}
\vspace{-0.175in}

where $\bar{R}_{t:T-1}(\xi)=\sum_{\tau=t}^{T-1}\gamma^{\tau-t} \omega_{t+1:\tau}\;\R(x_\tau,a_\tau)$.
\end{proposition}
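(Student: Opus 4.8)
The plan is to prove that the two objectives agree up to a positive multiplicative constant and a $\beta$-independent additive constant, so that their sets of minimizers coincide. First I would expand the discounted occupancy $\mu_{\pi_e}(x,a)=\frac{1-\gamma}{1-\gamma^T}\sum_{t=0}^{T-1}\gamma^t\EE_{P^{\pi_e}_\xi}[\one\{x_t=x,a_t=a\}]$ inside the first objective. Writing $g_\beta(x,a)=\big(Q^{\pi_e}(x,a)-\widehat Q^{\pi_e}(x,a;\beta)\big)^2$ and collapsing the indicator against the sum over $(x,a)$, the weighted MSE becomes $\frac{1-\gamma}{1-\gamma^T}\sum_{t=0}^{T-1}\gamma^t\EE_{P^{\pi_e}_\xi}[g_\beta(x_t,a_t)]$. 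The leading factor is positive and independent of $\beta$, so I would set it aside.

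Next I would change measure from $P^{\pi_e}_\xi$ to $P^{\pi_b}_\xi$. Because $g_\beta(x_t,a_t)$ depends only on the prefix $(x_0,a_0,\ldots,x_t,a_t)$, and the trajectory density factorizes into initial, policy, and transition terms, the likelihood ratio restricted to this prefix is exactly $\omega_{0:t}$; under Assumption~\ref{assumption:abs_cont} this gives $\EE_{P^{\pi_e}_\xi}[g_\beta(x_t,a_t)]=\EE_{P^{\pi_b}_\xi}[\omega_{0:t}\,g_\beta(x_t,a_t)]$. At this point both objectives are expectations under $P^{\pi_b}_\xi$, and I would expand the squares. The purely quadratic terms $\EE_{P^{\pi_b}_\xi}[\omega_{0:t}\widehat Q^{\pi_e}(x_t,a_t;\beta)^2]$ are identical in the two objectives, and the terms $\EE_{P^{\pi_b}_\xi}[\omega_{0:t}Q^{\pi_e}(x_t,a_t)^2]$ and $\EE_{P^{\pi_b}_\xi}[\omega_{0:t}\bar R_{t:T-1}(\xi)^2]$ are both free of $\beta$. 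Hence the equivalence reduces to matching the two cross terms, i.e.\ showing
\[
\EE_{P^{\pi_b}_\xi}\!\big[\omega_{0:t}\bar R_{t:T-1}(\xi)\widehat Q^{\pi_e}(x_t,a_t;\beta)\big]=\EE_{P^{\pi_b}_\xi}\!\big[\omega_{0:t}Q^{\pi_e}(x_t,a_t)\widehat Q^{\pi_e}(x_t,a_t;\beta)\big].
\]

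The crux is therefore the key lemma $\EE_{P^{\pi_b}_\xi}[\bar R_{t:T-1}(\xi)\mid x_{0:t},a_{0:t}]=Q^{\pi_e}(x_t,a_t)$, after which the cross-term identity follows by conditioning on the state--action prefix $(x_{0:t},a_{0:t})$ — against which $\omega_{0:t}$ and $\widehat Q^{\pi_e}(x_t,a_t;\beta)$ are measurable — and applying the tower property. I would prove the lemma term by term: by the Markov property the conditional expectation depends on the prefix only through $(x_t,a_t)$, and for each $\tau\ge t$ the per-step weight $\omega_{t+1:\tau}$ reweights precisely the actions taken after time $t$ (with $\omega_{t+1:t}=1$ for the immediate reward, so that $a_t$ is conditioned on and not reweighted), whence $\EE_{P^{\pi_b}_\xi}[\omega_{t+1:\tau}\R(x_\tau,a_\tau)\mid x_t,a_t]=\EE_{P^{\pi_e}_\xi}[\R(x_\tau,a_\tau)\mid x_t,a_t]$; summing $\gamma^{\tau-t}$ over $\tau$ recovers $Q^{\pi_e}(x_t,a_t)$. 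Collecting the pieces, the second objective equals $\frac{1-\gamma^T}{1-\gamma}$ times the first plus the $\beta$-independent constant $\sum_{t}\gamma^t\EE_{P^{\pi_b}_\xi}[\omega_{0:t}(\bar R_{t:T-1}(\xi)^2-Q^{\pi_e}(x_t,a_t)^2)]$, so the two $\arg\min$ problems coincide. The only delicate point is the index bookkeeping of the importance weights in the key lemma — ensuring $a_t$ is held fixed while only the future actions are corrected — which is exactly what makes $\bar R_{t:T-1}$ an unbiased Monte Carlo estimate of the action-value function rather than of some reweighted surrogate.
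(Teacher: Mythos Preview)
Your proposal is correct and follows essentially the same route as the paper: expand the occupancy measure, change the outer measure to $P^{\pi_b}_\xi$ via $\omega_{0:t}$, and then use the fact that $\EE_{P^{\pi_b}_\xi}[\bar R_{t:T-1}\mid x_t,a_t]=Q^{\pi_e}(x_t,a_t)$ to pass between the two objectives. The only cosmetic difference is that the paper packages the last step as the general regression identity $\arg\min_\beta\EE[(f(X,Y)-g(X;\beta))^2]=\arg\min_\beta\EE[(\EE[f\mid X]-g(X;\beta))^2]$ (via the vanishing of $\EE[(f-\EE[f\mid X])\,g(X;\beta)]$), whereas you expand both squares and match cross terms directly; the underlying computation is the same.
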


\begin{proof}
We first expand the occupation measure $\mu_{\pi_e}$ and use the change of measures by importance sampling as follows:

\vspace{-0.15in}
\begin{small}
\begin{align*}
&\arg\min_{\beta\in\RR^\kappa}\EE_{(x,a)\sim \mu_{\pi_e}}\left[\left(Q^{\pi_e}(x,a)-\widehat Q^{\pi_e}(x,a;\beta)\right)^2\right] = \arg\min_{\beta\in\RR^\kappa}\sum_{t=0}^{T-1}\gamma^t\cdot\EE_{P^{\pi_e}_\xi}\left[\left(Q^{\pi_e}(x_t,a_t)-\widehat Q^{\pi_e}(x_t,a_t;\beta)\right)^2\right] = \\
&\arg\min_{\beta\in\RR^\kappa}\sum_{t=0}^{T-1}\gamma^t\cdot\EE_{P^{\pi_e}_\xi}\left[\left(\EE_{P,\pi_e}\left[\sum_{\tau=t}^{T-1}\gamma^{\tau-t} \R(x_\tau,a_\tau)\mid x_t,a_t\right]-\widehat Q^{\pi_e}(x_t,a_t;\beta)\right)^2\right] = \\
&\arg\min_{\beta\in\RR^\kappa}\sum_{t=0}^{T-1}\gamma^t\cdot\EE_{P^{\pi_b}_\xi}\left[\omega_{0:t}\cdot\left(\EE_{P,\pi_b}\left[\sum_{\tau=t}^{T-1}\gamma^{\tau-t}\omega_{t+1:\tau}\;\R(x_\tau,a_\tau)\mid x_t,a_t\right]-\widehat Q^{\pi_e}(x_t,a_t;\beta)\right)^2\right] \stackrel{\text{(a)}}{=} \\
&\arg\min_{\beta\in\RR^\kappa}\sum_{t=0}^{T-1}\gamma^t\cdot\EE_{P^{\pi_b}_\xi}\left[\omega_{0:t}\cdot\left(\sum_{\tau=t}^{T-1}\gamma^{\tau-t}\omega_{t+1:\tau}\;\R(x_\tau,a_\tau)-\widehat Q^{\pi_e}(x_t,a_t;\beta)\right)^2\right],
\end{align*}
\end{small}
\vspace{-0.175in}

{\bf (a)} To shorten the notations, let's define $f(X,Y)=\sum_{\tau=t}^{T-1}\gamma^{\tau-t}\omega_{t+1:\tau}\;\R(x_\tau,a_\tau)$ and $g(X;\beta)=\widehat Q^{\pi_e}(x_t,a_t;\beta)$, where $X=(x_t,a_t)$ and $Y=\{x_{\tau+1},a_{\tau+1},r_{\tau+1},\ldots,x_{T-1},a_{T-1},r_{T-1}\}$ are random variables. Using these notations, we may write

\vspace{-0.15in}
\begin{small}
\begin{align*}
&\arg\min_{\beta\in\RR^\kappa}\EE_{X,Y}\left[\big(f(X,Y)-g(X;\beta)\big)^2\right] = \arg\min_{\beta\in\RR^\kappa}\EE_{X,Y}\left[\left(f(X,Y)-\EE_{Y}[f(X,Y)|X] +\EE_{Y}[f(X,Y)|X]  - g(X;\beta)\right)^2\right] = \\
&\arg\min_{\beta\in\RR^\kappa}\EE_{X,Y}\bigg[\big(f(X,Y)-\EE_{Y}[f(X,Y)|X]\big)^2 + \big(\EE_{Y}[f(X,Y)|X] - g(X;\beta)\big)^2 \\
&\qquad\qquad\qquad\qquad\qquad\quad + 2\big(f(X,Y)-\EE_{Y}[f(X,Y)|X]\big)\cdot\big(E_{Y}[f(X,Y)|X]  - g(X;\beta)\big)\bigg] \stackrel{\text{(b)}}{=} \\
&\arg\min_{\beta\in\RR^\kappa}\EE_{X,Y}\left[\big(\EE_{Y}[f(X,Y)|X]  - g(X;\beta)\big)^2 - 2\big(f(X,Y)-\EE_{Y}[f(X,Y)|X]\big)\cdot g(X;\beta)\right] \stackrel{\text{(c)}}{=} \\
&\arg\min_{\beta\in\RR^\kappa}\EE_{X,Y}\left[(\EE_{Y}[f(X,Y)|X]  - g(X;\beta))^2\right],
\end{align*}
\end{small}
\vspace{-0.175in}

{\bf (b)} Here we drop the terms that are independent of $\beta$. 

{\bf (c)} This passage is due to the fact that $\EE_{Y}\Big[f(X,Y)-\EE_{Y}\big[f(X,Y)|X\big]\mid X\Big]=0$.
\end{proof}

\newpage
\section{Proofs of Section~\ref{subsec:IS}}\label{sec:proofs-IS}

\subsection{Bias of the IS Estimator ($\pi_b$ is unknown)}

In case the behavior policy $\pi_b$ is unknown, the IS estimator is written as 
\begin{equation}
\label{eq:IS0}
\hat{\rho}^{\pi_e}_{\text{IS}} = \frac{1}{n}\sum_{i=1}^n\widehat{\omega}^{(i)}_{0:T-1}\sum_{t=0}^{T-1}\gamma^t \R^{(i)}_t = \frac{1}{n}\sum_{i=1}^n\widehat{\omega}^{(i)}_{0:T-1}R^{(i)}_{0:T-1},
\end{equation}
where $\widehat{\omega}^{(i)}_{0:T-1}=\prod_{t=0}^{T-1}\frac{\pi_e(a^{(i)}_t | x^{(i)}_t)}{\widehat{\pi}_b(a^{(i)}_t | x^{(i)}_t)}$ is the approximate cumulative importance ratio of trajectory $\xi^{(i)}\in\mathcal{D}$ and $\widehat{\pi}_b$ is our approximation of the behavior policy $\pi_b$. Note that $\widehat{\pi}_b$ should be computed from a data set different than $\mathcal{D}=\{\xi^{(i)}\}_{i=1}^n$ that we use for our estimator.

\begin{proposition}
The bias of the IS estimator~\eqref{eq:IS0} is  
\begin{equation}
\label{eq:IS00}
\text{Bias}(\hat{\rho}^{\pi_e}_{\text{IS}}) = \left|\rho^{\pi_e} - \mathbb{E}_{P^{\pi_b}_\xi}[\hat{\rho}^{\pi_e}_{\text{IS}}]\right| = \left|\mathbb{E}_{P^{\pi_e}_\xi}\left[\delta_{0:T-1}(\xi)R_{0:T-1}(\xi)\right]\right|, 
\end{equation}
where $\xi=(x_0,a_0,r_0,\ldots,x_{T-1},a_{T-1},r_{T-1},x_T)$ is a trajectory and
\begin{equation*}
\delta_{0:T-1}(\xi) = 1- \prod_{t=0}^{T-1}\frac{\pi_b(a_t | x_t)}{\widehat{\pi}_b(a_t | x_t)}.
\end{equation*}
\end{proposition}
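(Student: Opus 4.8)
The plan is to compute $\mathbb{E}_{P^{\pi_b}_\xi}[\hat{\rho}^{\pi_e}_{\text{IS}}]$ explicitly and subtract it from $\rho^{\pi_e}$. Since the trajectories $\xi^{(i)}$ are i.i.d.\ draws from $P^{\pi_b}_\xi$ and $\widehat{\pi}_b$ may be treated as fixed (it is fit on a dataset disjoint from $\mathcal{D}$), linearity of expectation collapses the empirical average in~\eqref{eq:IS0} to the expectation of a single term, giving $\mathbb{E}_{P^{\pi_b}_\xi}[\hat{\rho}^{\pi_e}_{\text{IS}}] = \mathbb{E}_{P^{\pi_b}_\xi}[\widehat{\omega}_{0:T-1}R_{0:T-1}]$.

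The key algebraic step is to split the \emph{approximate} cumulative ratio into the \emph{true} one times a correction factor. Setting $\lambda_{0:T-1}(\xi)=\prod_{t=0}^{T-1}\tfrac{\pi_b(a_t|x_t)}{\widehat{\pi}_b(a_t|x_t)}$, I would observe that $\widehat{\omega}_{0:T-1}=\omega_{0:T-1}\,\lambda_{0:T-1}$, where $\omega_{0:T-1}=\prod_{t=0}^{T-1}\tfrac{\pi_e(a_t|x_t)}{\pi_b(a_t|x_t)}$ is the exact importance ratio. Under Assumption~\ref{assumption:abs_cont}, $\omega_{0:T-1}$ is precisely the likelihood ratio $\mathrm{d}P^{\pi_e}_\xi/\mathrm{d}P^{\pi_b}_\xi$: when the trajectory densities are written out, the initial-state distribution $P_0$, the reward kernels $P_r$, and the transition kernels $P$ are identical under both policies and cancel, leaving only the product of policy ratios. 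The change of measure then yields
\[
\mathbb{E}_{P^{\pi_b}_\xi}[\widehat{\omega}_{0:T-1}R_{0:T-1}] = \mathbb{E}_{P^{\pi_b}_\xi}[\omega_{0:T-1}\lambda_{0:T-1}R_{0:T-1}] = \mathbb{E}_{P^{\pi_e}_\xi}[\lambda_{0:T-1}R_{0:T-1}].
\]

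Finally, combining this with $\rho^{\pi_e}=\mathbb{E}_{P^{\pi_e}_\xi}[R_{0:T-1}]$ and using $\delta_{0:T-1}=1-\lambda_{0:T-1}$ gives
\[
\rho^{\pi_e}-\mathbb{E}_{P^{\pi_b}_\xi}[\hat{\rho}^{\pi_e}_{\text{IS}}] = \mathbb{E}_{P^{\pi_e}_\xi}\big[(1-\lambda_{0:T-1})R_{0:T-1}\big] = \mathbb{E}_{P^{\pi_e}_\xi}[\delta_{0:T-1}R_{0:T-1}],
\]
and taking absolute values recovers~\eqref{eq:IS00}. The computation itself is essentially mechanical; the only step that genuinely requires care is justifying the change of measure, i.e.\ that $\omega_{0:T-1}$ is a valid Radon--Nikodym derivative. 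This is exactly where Assumption~\ref{assumption:abs_cont} enters: absolute continuity ensures no trajectory carrying positive $P^{\pi_e}_\xi$-probability is assigned zero $P^{\pi_b}_\xi$-probability, so the ratio is well defined $P^{\pi_b}_\xi$-almost surely. A secondary (minor) subtlety is treating $\widehat{\pi}_b$ as deterministic inside the expectation, which is legitimate precisely because it is estimated from data independent of $\mathcal{D}$.
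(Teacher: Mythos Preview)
Your proof is correct and follows essentially the same route as the paper: collapse the empirical average by i.i.d.\ and linearity, rewrite $\mathbb{E}_{P^{\pi_b}_\xi}[\widehat{\omega}_{0:T-1}R_{0:T-1}]$ as $\mathbb{E}_{P^{\pi_e}_\xi}[\lambda_{0:T-1}R_{0:T-1}]$ via the change of measure, and subtract from $\rho^{\pi_e}=\mathbb{E}_{P^{\pi_e}_\xi}[R_{0:T-1}]$. If anything, your explicit factorization $\widehat{\omega}_{0:T-1}=\omega_{0:T-1}\lambda_{0:T-1}$ and the remarks on Assumption~\ref{assumption:abs_cont} and the independence of $\widehat{\pi}_b$ from $\mathcal{D}$ make the measure-change step more transparent than the paper's one-line jump.
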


\begin{proof}
To prove~\eqref{eq:IS00}, we first develop the term $\mathbb{E}_{P^{\pi_b}_\xi}[\hat{\rho}^{\pi_e}_{\text{IS}}]$ as follows:
\begin{align}
\mathbb{E}_{P^{\pi_b}_\xi}[\hat{\rho}^{\pi_e}_{\text{IS}}] &= \mathbb{E}_{P^{\pi_b}_\xi}\left[\frac{1}{n}\sum_{i=1}^n\widehat{\omega}^{(i)}_{0:T-1}R^{(i)}_{0:T-1}\right] = \frac{1}{n}\sum_{i=1}^n\mathbb{E}_{P^{\pi_b}_\xi}\left[\widehat{\omega}^{(i)}_{0:T-1}R^{(i)}_{0:T-1}\right] \stackrel{\text{(a)}}{=} \mathbb{E}_{P^{\pi_b}_\xi}\left[\widehat{\omega}_{0:T-1}R_{0:T-1}\right] \nonumber \\ 
&\stackrel{(b)}{=} \mathbb{E}_{P^{\pi_e}_\xi}\left[\lambda_{0:T-1}R_{0:T-1}\right].
\label{eq:IS1}
\end{align}

{\bf (a)} This is because the trajectories $\xi^{(i)}\in\mathcal{D}$ are i.i.d. \\
{\bf (b)} We define $\lambda_{0:T-1}=\prod_{t=0}^{T-1}\frac{\pi_b(a_t | x_t)}{\widehat{\pi}_b(a_t | x_t)}$.

Given~\eqref{eq:IS1}, we may write the bias as 
\begin{equation*}
\text{Bias}(\hat{\rho}^{\pi_e}_{\text{IS}}) = \left|\rho^{\pi_e} - \mathbb{E}_{P^{\pi_b}_\xi}[\hat{\rho}^{\pi_e}_{\text{IS}}]\right| = \left|\mathbb{E}_{P^{\pi_e}_\xi}\left[R_{0:T-1}\right] - \mathbb{E}_{P^{\pi_e}_\xi}\left[\lambda_{0:T-1}R_{0:T-1}\right]\right| = \left|\mathbb{E}_{P^{\pi_e}_\xi}\left[\delta_{0:T-1}R_{0:T-1}\right]\right|,
\end{equation*}
which concludes the proof.
\end{proof}


\subsection{Bias of the step-IS Estimator($\pi_b$ is unknown)}

In case the behavior policy $\pi_b$ is unknown, the step-IS estimator is written as 
\begin{equation}
\label{eq:step-IS0}
\hat{\rho}^{\pi_e}_{\text{step-IS}} = \frac{1}{n}\sum_{i=1}^n\sum_{t=0}^{T-1}\gamma^t\widehat{\omega}^{(i)}_{0:t}\R^{(i)}_t,
\end{equation}
where $\widehat{\omega}^{(i)}_{0:t}=\prod_{\tau=0}^{t}\frac{\pi_e(a^{(i)}_\tau | x^{(i)}_\tau)}{\widehat{\pi}_b(a^{(i)}_\tau | x^{(i)}_\tau)}$ is the approximate cumulative importance ratio of trajectory $\xi^{(i)}\in\mathcal{D}$ and $\widehat{\pi}_b$ is our approximation of the behavior policy $\pi_b$. Note that $\widehat{\pi}_b$ should be computed from a data set different than $\mathcal{D}=\{\xi^{(i)}\}_{i=1}^n$ that we use for our estimator.

\begin{proposition}
The bias of the step-IS estimator~\eqref{eq:step-IS0} is  
\begin{equation}
\label{eq:step-IS00}
\text{Bias}(\hat{\rho}^{\pi_e}_{\text{step-IS}}) = \left|\rho^{\pi_e} - \mathbb{E}_{P^{\pi_b}_\xi}[\hat{\rho}^{\pi_e}_{\text{step-IS}}]\right| = \left|\sum_{t=0}^{T-1}\gamma^t\mathbb{E}_{P^{\pi_e}_\xi}\left[\delta_{0:t}(\xi)\R_t\right]\right|, 
\end{equation}
where $\xi=(x_0,a_0,r_0,\ldots,x_{T-1},a_{T-1},r_{T-1},x_T)$ is a trajectory and
\begin{equation*}
\delta_{0:t}(\xi) = 1- \prod_{\tau=0}^t\frac{\pi_b(a_\tau | x_\tau)}{\widehat{\pi}_b(a_\tau | x_\tau)}.
\end{equation*}
\end{proposition}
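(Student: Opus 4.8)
The plan is to mirror the proof of the IS bias (the preceding proposition) almost verbatim, but carrying the importance ratios step by step rather than over the whole trajectory. First I would start from the definition of the estimator in~\eqref{eq:step-IS0} and compute its expectation under $P^{\pi_b}_\xi$. By linearity and the i.i.d.\ assumption on the trajectories $\xi^{(i)}\in\mathcal{D}$, the average over $i$ collapses to a single expectation, so that
\begin{small}
\begin{equation*}
\mathbb{E}_{P^{\pi_b}_\xi}[\hat{\rho}^{\pi_e}_{\text{step-IS}}] = \sum_{t=0}^{T-1}\gamma^t\,\mathbb{E}_{P^{\pi_b}_\xi}\left[\widehat{\omega}_{0:t}\,\R_t\right].
\end{equation*}
\end{small}

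Next I would perform the change of measure from $P^{\pi_b}_\xi$ to $P^{\pi_e}_\xi$ on each term separately. Writing $\widehat{\omega}_{0:t}=\prod_{\tau=0}^{t}\frac{\pi_e(a_\tau|x_\tau)}{\widehat{\pi}_b(a_\tau|x_\tau)}$ and inserting the factor $\prod_{\tau=0}^t\frac{\pi_b(a_\tau|x_\tau)}{\pi_b(a_\tau|x_\tau)}$, the product of the true ratios $\pi_e/\pi_b$ over steps $0$ through $t$ converts the expectation under $\pi_b$ into one under $\pi_e$, leaving behind exactly the factor $\lambda_{0:t}=\prod_{\tau=0}^t\frac{\pi_b(a_\tau|x_\tau)}{\widehat{\pi}_b(a_\tau|x_\tau)}$. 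This uses Assumption~\ref{assumption:abs_cont} to ensure the ratios are well defined on the support of $\pi_e$, and it is identical in spirit to step (b) in the IS proof but truncated at step $t$. The result is
\begin{small}
\begin{equation*}
\mathbb{E}_{P^{\pi_b}_\xi}[\hat{\rho}^{\pi_e}_{\text{step-IS}}] = \sum_{t=0}^{T-1}\gamma^t\,\mathbb{E}_{P^{\pi_e}_\xi}\left[\lambda_{0:t}\,\R_t\right].
\end{equation*}
\end{small}

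Finally I would subtract this from the true performance. Using the step-wise identity $\rho^{\pi_e}=\sum_{t=0}^{T-1}\gamma^t\mathbb{E}_{P^{\pi_e}_\xi}[\R_t]$ and matching the summands termwise, the bias becomes $\bigl|\sum_{t=0}^{T-1}\gamma^t\mathbb{E}_{P^{\pi_e}_\xi}[(1-\lambda_{0:t})\R_t]\bigr|$, which is precisely~\eqref{eq:step-IS00} after substituting $\delta_{0:t}=1-\lambda_{0:t}$. I do not anticipate a genuine obstacle here, since the argument is essentially a bookkeeping exercise; the only point requiring a little care is making sure the change of measure is applied to each $t$-th term with ratios truncated at step $t$ (so the reward $\R_t$ depends only on $(x_t,a_t)$ and the change of measure only needs the first $t{+}1$ factors), rather than importing the full-horizon ratio as in the plain IS case.
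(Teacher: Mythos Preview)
Your proposal is correct and follows essentially the same approach as the paper's own proof: collapse the empirical average by the i.i.d.\ assumption, change measure from $P^{\pi_b}_\xi$ to $P^{\pi_e}_\xi$ term by term using the truncated ratios to obtain $\sum_t\gamma^t\mathbb{E}_{P^{\pi_e}_\xi}[\lambda_{0:t}\R_t]$, and subtract from $\rho^{\pi_e}=\sum_t\gamma^t\mathbb{E}_{P^{\pi_e}_\xi}[\R_t]$ to get the claimed expression with $\delta_{0:t}=1-\lambda_{0:t}$.
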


\begin{proof}
To prove~\eqref{eq:step-IS00}, we first develop the term $\mathbb{E}_{P^{\pi_b}_\xi}[\hat{\rho}^{\pi_e}_{\text{step-IS}}]$ as follows:
\begin{align}
\mathbb{E}_{P^{\pi_b}_\xi}[\hat{\rho}^{\pi_e}_{\text{step-IS}}] &= \mathbb{E}_{P^{\pi_b}_\xi}\left[\frac{1}{n}\sum_{i=1}^n\sum_{t=0}^{T-1}\gamma^t\widehat{\omega}^{(i)}_{0:t}\R^{(i)}_t\right] = \frac{1}{n}\sum_{i=1}^n\mathbb{E}_{P^{\pi_b}_\xi}\left[\sum_{t=0}^{T-1}\gamma^t\widehat{\omega}^{(i)}_{0:t}\R^{(i)}_t\right] \stackrel{\text{(a)}}{=} \mathbb{E}_{P^{\pi_b}_\xi}\left[\sum_{t=0}^{T-1}\gamma^t\widehat{\omega}_{0:t}\R_t\right] \nonumber \\ 
&= \sum_{t=0}^{T-1}\gamma^t\mathbb{E}_{P^{\pi_b}_\xi}\left[\widehat{\omega}_{0:t}\R_t\right] \stackrel{(b)}{=} \sum_{t=0}^{T-1}\gamma^t\mathbb{E}_{P^{\pi_e}_\xi}\left[\lambda_{0:t}\R_t\right].
\label{eq:step-IS1}
\end{align}

{\bf (a)} This is because the trajectories $\xi^{(i)}\in\mathcal{D}$ are i.i.d. \\
{\bf (b)} We define $\lambda_{0:t}=\prod_{\tau=0}^t\frac{\pi_b(a_\tau | x_\tau)}{\widehat{\pi}_b(a_\tau | x_\tau)}$.

Since step-IS is unbiased when $\pi_b$ is known, similar to~\eqref{eq:step-IS1}, we may write 
\begin{equation}
\label{eq:step-IS2}
\rho^{\pi_e} = \mathbb{E}_{P^{\pi_b}_\xi}\left[\frac{1}{n}\sum_{i=1}^n\sum_{t=0}^{T-1}\gamma^t\omega^{(i)}_{0:t}\R^{(i)}_t\right] = \sum_{t=0}^{T-1}\gamma^t\mathbb{E}_{P^{\pi_b}_\xi}\left[\omega_{0:t}\R_t\right] = \sum_{t=0}^{T-1}\gamma^t\mathbb{E}_{P^{\pi_e}_\xi}\left[\R_t\right].
\end{equation}
Given~\eqref{eq:step-IS1} and~\eqref{eq:step-IS2}, we may write the bias as 
\begin{equation*}
\text{Bias}(\hat{\rho}^{\pi_e}_{\text{step-IS}}) = \left|\rho^{\pi_e} - \mathbb{E}_{P^{\pi_b}_\xi}[\hat{\rho}^{\pi_e}_{\text{step-IS}}]\right| = \left|\sum_{t=0}^{T-1}\gamma^t\mathbb{E}_{P^{\pi_e}_\xi}\left[\R_t\right] - \sum_{t=0}^{T-1}\gamma^t\mathbb{E}_{P^{\pi_e}_\xi}\left[\lambda_{0:t}\R_t\right]\right| = \left|\sum_{t=0}^{T-1}\gamma^t\mathbb{E}_{P^{\pi_e}_\xi}\left[\delta_{0:t}\R_t\right]\right|,
\end{equation*}
which concludes the proof.
\end{proof}


%

\newpage
\section{Proofs of Section~\ref{subsec:DR}}\label{sec:proofs-DR}

\subsection{Bias of the DR Estimator ($\pi_b$ is unknown)}

When the behavior policy $\pi_b$ is unknown, the DR estimator~\eqref{eq:DR} is written as 
\begin{equation}\label{eq:DR_pi_b_unknown}
\hat{\rho}^{\pi_e}_{\text{DR}} = \frac{1}{n} \sum_{i=1}^n\sum_{t=0}^{T-1}\gamma^t\left(\widehat\omega^{(i)}_{0:t}\R^{(i)}_t - \widehat\omega_{0:t}^{(i)}\widehat{Q}^{\pi_e}(x^{(i)}_t,a^{(i)}_t) + \widehat\omega_{0:t-1}^{(i)}\widehat{V}^{\pi_e}(x^{(i)}_t)\right),
\end{equation}
where $\widehat{\omega}_{0:T-1}=\prod_{\tau=0}^{T-1}\frac{\pi_e(a_\tau | x_\tau)}{\widehat{\pi}_b(a_\tau | x_\tau)}$ is the approximate cumulative importance ratio and $\widehat{\pi}_b$ is our approximation of the behavior policy $\pi_b$.

\begin{proposition}
The bias of the DR estimator~\eqref{eq:DR_pi_b_unknown} is 
\begin{equation*}
\text{Bias}(\hat{\rho}^{\pi_e}_{\text{DR}}) = \left|\rho^{\pi_e} - \mathbb{E}_{P^{\pi_b}_\xi}[\hat{\rho}^{\pi_e}_{\text{DR}}]\right| = \left|\mathbb{E}_{P^{\pi_e}_\xi}\left[\sum_{t=0}^{T-1}\gamma^t\left(1-\delta_{0:t-1}(\xi)\right)\delta_t(\xi)\Delta^{\pi_e}(x_t,a_t)\right]\right|,
\end{equation*}
where $\xi=(x_0,a_0,r_0,\ldots,x_{T-1},a_{T-1},r_{T-1},x_T)$ is a trajectory and
\begin{equation*}
\delta_{0:t-1}(\xi) = 1 - \prod_{\tau=0}^{t-1}\frac{\pi_b(a_\tau | x_\tau)}{\widehat{\pi}_b(a_\tau | x_\tau)} \qquad\qquad , \qquad\qquad \delta_t(\xi) = 1-\frac{\pi_b(a_t|x_t)}{\widehat\pi_b(a_t|x_t)}.
\end{equation*}
\end{proposition}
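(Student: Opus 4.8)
The plan is to compute $\mathbb{E}_{P^{\pi_b}_\xi}[\hat\rho^{\pi_e}_{\text{DR}}]$ in closed form by changing measure to $P^{\pi_e}_\xi$, and then subtract it from $\rho^{\pi_e}=\sum_{t}\gamma^t\mathbb{E}_{P^{\pi_e}_\xi}[\R_t]$. Since the trajectories in $\mathcal{D}$ are i.i.d., the empirical average collapses, so it suffices to evaluate the per-trajectory expectation $\sum_{t=0}^{T-1}\gamma^t\mathbb{E}_{P^{\pi_b}_\xi}[\widehat\omega_{0:t}\R_t - \widehat\omega_{0:t}\widehat Q^{\pi_e}(x_t,a_t) + \widehat\omega_{0:t-1}\widehat V^{\pi_e}(x_t)]$. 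First I would use the factorization $\widehat\omega_{0:t}=\lambda_{0:t}\,\omega_{0:t}$, with $\omega_{0:t}=\prod_{\tau=0}^t \pi_e/\pi_b$ and $\lambda_{0:t}=\prod_{\tau=0}^t \pi_b/\widehat\pi_b$, together with the standard IS identity $\mathbb{E}_{P^{\pi_b}_\xi}[\omega_{0:t}\,h]=\mathbb{E}_{P^{\pi_e}_\xi}[h]$ for any $h$ measurable through $(x_t,a_t)$ (the same device already used for step-IS in Appendix~\ref{sec:proofs-IS}). This turns the three terms into expectations under $P^{\pi_e}_\xi$ that carry a residual $\lambda$-factor.

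Next I would handle the value-function term via $\widehat V^{\pi_e}(x_t)=\mathbb{E}_{a\sim\pi_e}[\widehat Q^{\pi_e}(x_t,a)]$: because $\lambda_{0:t-1}$ does not depend on $a_t$ and $a_t\sim\pi_e$ under $P^{\pi_e}_\xi$, the $\widehat V$ term becomes $\sum_t\gamma^t\mathbb{E}_{P^{\pi_e}_\xi}[\lambda_{0:t-1}\widehat Q^{\pi_e}(x_t,a_t)]$. Pairing it with the $\widehat Q$ term and invoking the one-step identity $\lambda_{0:t-1}-\lambda_{0:t}=\lambda_{0:t-1}\delta_t$ collapses the two model terms into $\sum_t\gamma^t\mathbb{E}_{P^{\pi_e}_\xi}[\lambda_{0:t-1}\delta_t\widehat Q^{\pi_e}(x_t,a_t)]$, while the reward term reduces to $\sum_t\gamma^t\mathbb{E}_{P^{\pi_e}_\xi}[\lambda_{0:t}\R_t]$. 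Subtracting from $\rho^{\pi_e}$ leaves a reward residual $\sum_t\gamma^t\mathbb{E}_{P^{\pi_e}_\xi}[(1-\lambda_{0:t})\R_t]=\sum_t\gamma^t\mathbb{E}_{P^{\pi_e}_\xi}[\delta_{0:t}\R_t]$ against the model residual.

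The crux — and the step I expect to be the main obstacle — is to show that this reward residual equals $\sum_t\gamma^t\mathbb{E}_{P^{\pi_e}_\xi}[\lambda_{0:t-1}\delta_t Q^{\pi_e}(x_t,a_t)]$, which is exactly what is needed to pair the reward contribution against the $Q^{\pi_e}$ part of $\Delta^{\pi_e}=\widehat Q^{\pi_e}-Q^{\pi_e}$. To establish it I would expand $Q^{\pi_e}(x_t,a_t)=\mathbb{E}_{P^{\pi_e}_\xi}[\sum_{\tau=t}^{T-1}\gamma^{\tau-t}\R_\tau\mid x_t,a_t]$, apply the tower property (noting $\lambda_{0:t-1}\delta_t$ is measurable through $(x_t,a_t)$), swap the order of the $t$- and $\tau$-summations, and use the multi-step telescoping identity $\sum_{t=0}^{\tau}\lambda_{0:t-1}\delta_t=\sum_{t=0}^{\tau}(\lambda_{0:t-1}-\lambda_{0:t})=1-\lambda_{0:\tau}=\delta_{0:\tau}$. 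Matching the coefficient of each $\R_\tau$ on both sides then closes the identity; the naive term-by-term approach, without this re-expression, leaves a spurious $\widehat Q$-dependence, so this telescoping is genuinely the load-bearing step.

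Finally, assembling the pieces gives $\rho^{\pi_e}-\mathbb{E}_{P^{\pi_b}_\xi}[\hat\rho^{\pi_e}_{\text{DR}}]=-\sum_{t=0}^{T-1}\gamma^t\mathbb{E}_{P^{\pi_e}_\xi}[\lambda_{0:t-1}\delta_t\Delta^{\pi_e}(x_t,a_t)]$; substituting $\lambda_{0:t-1}=1-\delta_{0:t-1}$ and taking absolute values yields the stated bias. As a sanity check I would verify the $T=1$ bandit specialization, where $\lambda_{0:-1}=1$ and only the $\delta_0\Delta^{\pi_e}$ term survives, recovering the known contextual-bandit DR bias.
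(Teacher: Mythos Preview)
Your proposal is correct, but it takes a genuinely different route from the paper's proof. The paper splits $\widehat Q^{\pi_e}=Q^{\pi_e}+\Delta^{\pi_e}$ and $\widehat V^{\pi_e}=V^{\pi_e}+\Delta^{\pi_e}$ at the outset and then applies a Bellman-equation telescoping under $P^{\pi_b}_\xi$: the terms $\widehat\omega_{0:t}\big(r_t+\gamma V^{\pi_e}(x_{t+1})-Q^{\pi_e}(x_t,a_t)\big)$ have conditional mean zero given $(x_t,a_t)$, so the $Q^{\pi_e}$, $V^{\pi_e}$, and reward terms collapse to $\rho^{\pi_e}$ directly, leaving only the $\Delta^{\pi_e}$ terms; the change of measure to $P^{\pi_e}_\xi$ is applied afterward. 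You instead change measure first, keep $\widehat Q^{\pi_e}$ intact, and collapse the two model terms via the one-step identity $\lambda_{0:t-1}-\lambda_{0:t}=\lambda_{0:t-1}\delta_t$; the price is that you must separately identify the reward residual $\sum_\tau\gamma^\tau\mathbb E_{P^{\pi_e}_\xi}[\delta_{0:\tau}r_\tau]$ with $\sum_t\gamma^t\mathbb E_{P^{\pi_e}_\xi}[\lambda_{0:t-1}\delta_t Q^{\pi_e}(x_t,a_t)]$, which you do correctly by expanding $Q^{\pi_e}$, swapping the double sum, and using the telescoping $\sum_{t\le\tau}\lambda_{0:t-1}\delta_t=\delta_{0:\tau}$. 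The paper's approach is arguably more direct since the Bellman martingale is a standard device and avoids the double-sum swap; your approach is more purely algebraic in the $\lambda$ factors and never needs to introduce $V^{\pi_e}$ explicitly. Both are sound, and both reduce to the same final expression.
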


\begin{proof}
We first develop the term $\mathbb{E}_{P^{\pi_b}_\xi}[\hat{\rho}^{\pi_e}_{\text{DR}}]$ as follows:
\begin{align}
\mathbb{E}_{P^{\pi_b}_\xi}[\hat{\rho}^{\pi_e}_{\text{DR}}] &= \mathbb{E}_{P^{\pi_b}_\xi}\left[\frac{1}{n} \sum_{i=1}^n\sum_{t=0}^{T-1}\gamma^t\left(\widehat\omega^{(i)}_{0:t}\R^{(i)}_t - \widehat\omega_{0:t}^{(i)}\widehat{Q}^{\pi_e}(x^{(i)}_t,a^{(i)}_t) + \widehat\omega_{0:t-1}^{(i)}\widehat{V}^{\pi_e}(x^{(i)}_t)\right)\right] \nonumber \\
&= \mathbb{E}_{P^{\pi_b}_\xi}\left[\sum_{t=0}^{T-1}\gamma^t\left(\widehat\omega_{0:t}\R_t - \widehat\omega_{0:t}\widehat{Q}^{\pi_e}(x_t,a_t) + \widehat\omega_{0:t-1}\widehat{V}^{\pi_e}(x_t)\right)\right] \nonumber \\
&\stackrel{\text{(a)}}{=} \mathbb{E}_{P^{\pi_b}_\xi}\left[\sum_{t=0}^{T-1}\gamma^t\left(\widehat\omega_{0:t}\R_t - \widehat\omega_{0:t}\left(Q^{\pi_e}(x_t,a_t) + \Delta^{\pi_e}(x_t,a_t)\right) + \widehat\omega_{0:t-1}\left(V^{\pi_e}(x_t) + \Delta^{\pi_e}(x_t)\right)\right)\right] \nonumber \\
&\stackrel{\text{(b)}}{=} \mathbb{E}_{P^{\pi_b}_\xi}\left[V^{\pi_e}(x_0)\right] + \mathbb{E}_{P^{\pi_b}_\xi}\left[\sum_{t=0}^{T-2}\gamma^t\widehat{\omega}_{0:t}\left(\R_t+\gamma V^{\pi_e}(x_{t+1})-Q^{\pi_e}(x_t,a_t)\right)\right] \nonumber \\ 
&+ \mathbb{E}_{P^{\pi_b}_\xi}\left[\gamma^{T-1}\widehat{\omega}_{0:T-1}\left(\R_{T-1} - Q^{\pi_e}(x_{T-1},a_{T-1})\right)\right] - \mathbb{E}_{P^{\pi_b}_\xi}\left[\sum_{t=0}^{T-1}\gamma^t\left(\widehat\omega_{0:t}\Delta^{\pi_e}(x_t,a_t)-\widehat\omega_{0:t-1}\Delta^{\pi_e}(x_t)\right)\right] \nonumber \\
&\stackrel{\text{(c)}}{=} \rho^{\pi_e} - \mathbb{E}_{P^{\pi_b}_\xi}\left[\sum_{t=0}^{T-1}\gamma^t\left(\widehat\omega_{0:t}\Delta^{\pi_e}(x_t,a_t)-\widehat\omega_{0:t-1}\Delta^{\pi_e}(x_t)\right)\right] \nonumber \\
&\stackrel{\text{(d)}}{=} \rho^{\pi_e} - \mathbb{E}_{P^{\pi_e}_\xi}\left[\sum_{t=0}^{T-1}\gamma^t\left(\lambda_{0:t}\Delta^{\pi_e}(x_t,a_t)-\lambda_{0:t-1}\mathbb{E}_{a_t\sim\pi_e(\cdot|x_t)}\left[\Delta^{\pi_e}(x_t,a_t)\right]\right)\right] \nonumber \\
&\stackrel{\text{(e)}}{=} \rho^{\pi_e} - \mathbb{E}_{P^{\pi_e}_\xi}\left[\sum_{t=0}^{T-1}\gamma^t\left(\lambda_{0:t}-\lambda_{0:t-1}\right)\Delta^{\pi_e}(x_t,a_t)\right] \stackrel{\text{(f)}}{=} \rho^{\pi_e} + \mathbb{E}_{P^{\pi_e}_\xi}\left[\sum_{t=0}^{T-1}\gamma^t\lambda_{0:t-1}\delta_t\Delta^{\pi_e}(x_t,a_t)\right].
\label{eq:DR1}
\end{align}

{\bf (a)} We define the bias of the DM estimator as $\Delta^{\pi_e}(x,a)=\widehat{Q}^{\pi_e}(x,a) - Q^{\pi_e}(x,a)$ and $\Delta^{\pi_e}(x)=\widehat{V}^{\pi_e}(x)-V^{\pi_e}(x)$.\footnote{Note that with abuse of notation, here we used $\Delta^{\pi_e}$ as both a function of $\mathcal{X}\times\mathcal{A}$ and a function of $\mathcal{X}$.} \\ 
{\bf (b)} Application of the telescopic sum. \\
{\bf (c)} It comes from the fact that {\em (i)} $\rho^{\pi_e}=\mathbb{E}_{x_0\sim P_0}\left[V^{\pi_e}(x_0)\right]$, {\em (ii)} $\mathbb{E}_{P^{\pi_b}_\xi}\left[\sum_{t=0}^{T-2}\gamma^t\widehat{\omega}_{0:t}\left(\R_t+\gamma V^{\pi_e}(x_{t+1})-Q^{\pi_e}(x_t,a_t)\right)\right]=0$, and {\em (iii)} $\R_{T-1} - Q^{\pi_e}(x_{T-1},a_{T-1})=0$, because $V^{\pi_e}(x_{T})=0$. \\
{\bf (d)} It comes from the definition of $\lambda_{0:t-1} = \prod_{\tau=0}^{t-1}\frac{\pi_b(a_\tau | x_\tau)}{\widehat{\pi}_b(a_\tau | x_\tau)}$ and the fact that $\Delta^{\pi_e}(x_t)=\mathbb{E}_{a_t\sim\pi_e(\cdot|x_t)}\left[\Delta^{\pi_e}(x_t,a_t)\right]$. \\
{\bf (e)} It comes from the fact that $\mathbb{E}_{P^{\pi_e}_\xi}\left[\lambda_{0:t-1}\mathbb{E}_{a_t\sim\pi_e(\cdot|x_t)}\left[\Delta^{\pi_e}(x_t,a_t)\right]\right]=\mathbb{E}_{P^{\pi_e}_\xi}\left[\lambda_{0:t-1}\Delta^{\pi_e}(x_t,a_t)\right]$. \\
{\bf (f)} It comes from the definition of $\delta_t$.

Finally,~\ref{eq:DR1} concludes the proof.
\end{proof}

\newpage
\section{Proofs of Section~\ref{subsec:MRDR-bandit}}\label{sec:proofs-MRDR-bandit}

\subsection{Proof of Proposition~\ref{lem:tech_result_bias_variance}} 

\begin{paragraph}
{\bf Proposition 1.} {\em The bias and variance of the DR estimator~\eqref{eq:DR-bandit} for stochastic $\pi_e$ may be written as}

\vspace{-0.15in}
\begin{small}
\begin{align*}
\text{Bias}(\hat{\rho}_{\text{DR}}^{\pi_e}) &= \left|\rho^{\pi_e} - \mathbb{E}_{P^{\pi_b}_\xi}[\hat{\rho}_{\text{DR}}^{\pi_e}]\right| = \left|\mathbb E_{P^{\pi_e}_\xi}\left[\delta(x, a) \Delta(x, a)\right]\right|,  \\
n\mathbb{V}_{P^{\pi_b}_\xi}(\hat{\rho}_{\text{DR}}^{\pi_e}) &= \EE_{P^{\pi_b}_\xi}\left[\widehat{\omega}(x,a)^2\big(\R(x,a) - Q(x,a)\big)^2\right] +\mathbb V_{P_0}\left(\EE_{\pi_e}\left[Q(x, a)+\delta(x,a)\Delta(x,a)\right]\right) \\ 
&+\EE_{P_0,\pi_e}\Big[\omega(x,a)\Big(\big(1-\delta(x,a)\big)\Delta(x,a)\Big)^2 - \EE_{\pi_e}\big[\big(1-\delta(x,a)\big)\Delta(x,a)\big]^2\Big].
\end{align*}
\end{small}
\vspace{-0.15in}

\end{paragraph}

\begin{proof}
{\bf Bias:} For the bias of the estimator, note that 

\vspace{-0.15in}
\begin{small}
\begin{align*}
\text{Bias}(\hat{\rho}_{\text{DR}}^{\pi_e}) &= \left|\rho^{\pi_e} - \mathbb{E}_{P^{\pi_b}_\xi}[\hat{\rho}_{\text{DR}}^{\pi_e}]\right| = \left|\rho^{\pi_e} - \mathbb E_{P^{\pi_b}_\xi}\left[\frac{\pi_e(a|x)}{\widehat\pi_b(a|x)}\left(\R(x,a) - \widehat{Q}(x,a;\beta)\right) + \widehat{V}^{\pi_e} (x;\beta)\right]\right| \\
&= \left|\rho^{\pi_e} - \mathbb E_{P^{\pi_e}_\xi}\left[\frac{\pi_b(a|x)}{\widehat\pi_b(a|x)}\left(Q(x, a)-\widehat{Q}(x,a;\beta)\right)\right] - \mathbb E_{P_0}\left[\widehat{V}^{\pi_e} (x;\beta)\right]\right| \\
&= \left|\rho^{\pi_e} - \mathbb E_{P^{\pi_e}_\xi}\left[\left(1-\delta(x,a)\right)\left(Q(x, a)-\widehat{Q}(x,a;\beta)\right)\right] - \mathbb E_{P_0}\left[\widehat{V}^{\pi_e} (x;\beta)\right]\right| \\
&= \left|\rho^{\pi_e} - \mathbb E_{P^{\pi_e}_\xi}\left[\delta(x, a) \Delta(x, a)\right] - \underbrace{\mathbb E_{P^{\pi_e}_\xi}\left[Q(x, a)\right]}_{\rho^{\pi_e}} + \underbrace{\mathbb E_{P^{\pi_e}_\xi}\left[\widehat{Q}(x,a;\beta)\right] - \mathbb E_{P_0}\left[\widehat{V}^{\pi_e} (x;\beta)\right]}_{0}\right| \\
&= \left|\mathbb E_{P^{\pi_e}_\xi}\left[\delta(x, a) \Delta(x, a)\right]\right|.
\end{align*}
\end{small}
\vspace{-0.15in}

{\bf Variance:} Before proving the variance, from the proof of bias, we notice that 

\vspace{-0.15in}
\begin{small}
\begin{equation*}
\mathbb{E}_{P^{\pi_b}_\xi}\left[\hat{\rho}_{\text{DR}}^{\pi_e}\right] = \mathbb{E}_{P^{\pi_e}_\xi}\left[Q(x,a)+\delta(x,a)\Delta(x,a)\right].
\end{equation*}
\end{small}
\vspace{-0.15in}

For the variance of the estimator, note that

\vspace{-0.15in}
\begin{small}
\begin{align*}
n\mathbb{V}_{P^{\pi_b}_\xi}(\hat{\rho}_{\text{DR}}^{\pi_e}) &= \mathbb{V}_{P^{\pi_b}_\xi}\left(\frac{\pi_e(a|x)}{\widehat\pi_b(a|x)}\left(\R(x,a) - \widehat{Q}(x,a;\beta)\right) + \widehat{V}^{\pi_e} (x;\beta)\right) \\ 
&= \mathbb{E}_{P^{\pi_b}_\xi}\left[\left(\frac{\pi_e(a|x)}{\widehat\pi_b(a|x)}\left(\R(x,a) - \widehat{Q}(x,a;\beta)\right) + \widehat{V}^{\pi_e} (x;\beta)\right)^2\right] - \left(\mathbb{E}_{P^{\pi_e}_\xi}\left[Q(x,a)+\delta(x,a)\Delta(x,a)\right]\right)^2 \\
&= \EE_{P^{\pi_b}_\xi}\left[\left( \frac{\pi_e(a|x)}{\widehat\pi_b(a|x)}\left(\R(x,a) - Q(x,a)-\Delta(x,a)\right) + \EE_{\pi_e}\left[Q(x,a)+\Delta(x,a)\right]\right)^2\right] \\ 
&- \left(\mathbb{E}_{P^{\pi_e}_\xi}\left[Q(x,a)+\delta(x,a)\Delta(x,a)\right]\right)^2 \\
&\stackrel{\text{(a)}}{=} \EE_{P^{\pi_b}_\xi}\left[\left(\frac{\pi_e(a|x)}{\widehat\pi_b(a|x)}\left(\R(x,a) - Q(x,a)\right)\right)^2+\left( \frac{\pi_e(a|x)}{\widehat\pi_b(a|x)} \Delta(x,a)\right)^2+\left(\EE_{\pi_e}\left[Q(x,a)+\Delta(x,a)\right]\right)^2\right] \\
&- 2\EE_{P^{\pi_b}_\xi}\left[\frac{\pi_e(a|x)}{\widehat\pi_b(a|x)}\Delta(x,a)\EE_{\pi_e}\left[Q(x,a)+\Delta(x,a)\right]\right] - \left(\mathbb{E}_{P^{\pi_e}_\xi}\left[Q(x,a)+\delta(x,a)\Delta(x,a)\right]\right)^2 \\
&- 2\underbrace{\EE_{P^{\pi_b}_\xi}\left[\frac{\pi_e(a|x)}{\widehat\pi_b(a|x)}\Delta(x,a)\left(r(x,a)-Q(x,a)\right)\right]}_{0} + 2\underbrace{\EE_{P^{\pi_b}_\xi}\left[\frac{\pi_e(a|x)}{\widehat\pi_b(a|x)}\left(r(x,a)-Q(x,a)\right)\EE_{\pi_e}\left[Q(x,a)+\Delta(x,a)\right]\right]}_{0}
\end{align*}
\end{small}
\vspace{-0.15in}

\vspace{-0.15in}
\begin{small}
\begin{align*}
&= \EE_{P^{\pi_b}_\xi}\left[\left(\frac{\pi_e(a|x)}{\widehat\pi_b(a|x)}\left(\R(x,a) - Q(x,a)\right)\right)^2\right] + \EE_{P^{\pi_e}_\xi}\left[\frac{\pi_e(a|x)}{\pi_b(a|x)}\left(\frac{\pi_b(a|x)}{\widehat\pi_b(a|x)} \Delta(x,a)\right)^2\right] - \left(\mathbb{E}_{P^{\pi_e}_\xi}\left[Q(x,a)+\delta(x,a)\Delta(x,a)\right]\right)^2 \\
&+ \EE_{P_0}\left[\left(\EE_{\pi_e}\left[Q(x,a)+\Delta(x,a)\right]\right)^2\right] - 2\EE_{P^{\pi_e}_\xi}\left[\left(1-\delta(x,a)\right)\Delta(x,a)\EE_{\pi_e}\left[Q(x,a)+\Delta(x,a)\right]\right] \\
&\stackrel{\text{(b)}}{=} \EE_{P^{\pi_b}_\xi}\left[\left(\frac{\pi_e(a|x)}{\widehat\pi_b(a|x)}\left(\R(x,a) - Q(x,a)\right)\right)^2\right] + \EE_{P^{\pi_e}_\xi}\left[\frac{\pi_e(a|x)}{\pi_b(a|x)}\left(\frac{\pi_b(a|x)}{\widehat\pi_b(a|x)} \Delta(x,a)\right)^2\right] - \left(\mathbb{E}_{P^{\pi_e}_\xi}\left[Q(x,a)+\delta(x,a)\Delta(x,a)\right]\right)^2 \\
&+\EE_{P^{\pi_e}_\xi}\left[\left(Q(x,a) + \delta(x,a)\Delta(x,a)\right)^2\right] - \EE_{P_0}\left[\left(\EE_{\pi_e}\left[\left(1-\delta(x,a)\right)\Delta(x,a)\right]\right)^2\right] \\
&= \EE_{P^{\pi_b}_\xi}\left[\widehat{\omega}(x,a)^2\big(\R(x,a) - Q(x,a)\big)^2\right] +\mathbb V_{P_0}\left(\EE_{\pi_e}\left[Q(x, a)+\delta(x,a)\Delta(x,a)\right]\right) +\EE_{P_0,\pi_e}\Big[\omega(x,a)\Big(\big(1-\delta(x,a)\big)\Delta(x,a)\Big)^2 \\ 
&- \Big(\EE_{\pi_e}\big[\big(1-\delta(x,a)\big)\Delta(x,a)\big]\Big)^2\Big],
\end{align*}
\end{small}
\vspace{-0.15in}

which concludes the proof.

{\bf (a)} Here we used the fact that 

\vspace{-0.15in}
\begin{small}
\begin{equation*}
\EE_{P^{\pi_b}_\xi}\left[\frac{\pi_e(a|x)}{\widehat\pi_b(a|x)}\Delta(x,a)\left(r(x,a)-Q(x,a)\right)\right] = \EE_{P_0,\pi_b}\bigg[\Delta(x,a)\frac{\pi_e(a|x)}{\widehat\pi_b(a|x)}\overbrace{\mathbb{E}_{P_r}\left[r(x,a)-Q(x,a)\right]}^{0}\mid x,a\bigg] = 0
\end{equation*}
\end{small}
\vspace{-0.15in}

and 

\vspace{-0.15in}
\begin{small}
\begin{equation*}
\EE_{P^{\pi_b}_\xi}\left[\frac{\pi_e(a|x)}{\widehat\pi_b(a|x)}\left(r(x,a)-Q(x,a)\right)\EE_{\pi_e}\left[Q(x,a)+\Delta(x,a)\right]\right] = \EE_{P_0,\pi_b}\bigg[\frac{\pi_e(a|x)}{\widehat\pi_b(a|x)}\EE_{\pi_e}\left[Q(x,a)+\Delta(x,a)\right]\overbrace{\mathbb{E}_{P_r}\left[r(x,a)-Q(x,a)\right]}^{0} \bigg] = 0
\end{equation*}
\end{small}
\vspace{-0.15in}

{\bf (b)} Developing the last two terms, we obtain

\vspace{-0.15in}
\begin{small}
\begin{align*}
\EE_{P_0}&\left[\left(\EE_{\pi_e}\left[Q(x,a)+\Delta(x,a)\right]\right)^2\right] - 2\EE_{P^{\pi_e}_\xi}\left[\left(1-\delta(x,a)\right)\Delta(x,a)\EE_{\pi_e}\left[Q(x,a)+\Delta(x,a)\right]\right] \\
&= \EE_{P_0}\left[\EE_{\pi_e}\left[Q(x,a)\right]^2\right] + \EE_{P_0}\left[\EE_{\pi_e}\left[\Delta(x,a)\right]^2\right] + 2\EE_{P_0}\left[\EE_{\pi_e}\left[Q(x,a)\right]\EE_{\pi_e}\left[\Delta(x,a)\right]\right] \\
&+ 2\EE_{P^{\pi_e}_\xi}\left[\delta(x,a)\Delta(x,a)\EE_{\pi_e}\left[Q(x,a)+\Delta(x,a)\right]\right] - 2\EE_{P^{\pi_e}_\xi}\left[\Delta(x,a)\EE_{\pi_e}\left[Q(x,a)\right]\right] - 2\EE_{P^{\pi_e}_\xi}\left[\Delta(x,a)\EE_{\pi_e}\left[\Delta(x,a)\right]\right] \\
&= \EE_{P_0}\left[\EE_{\pi_e}\left[Q(x,a)\right]^2\right] - \EE_{P_0}\left[\EE_{\pi_e}\left[\Delta(x,a)\right]^2\right] + 2\EE_{P^{\pi_e}_\xi}\left[\delta(x,a)\Delta(x,a)\EE_{\pi_e}\left[Q(x,a)+\Delta(x,a)\right]\right] \\
&= \EE_{P_0}\left[V^{\pi_e}(x)^2\right] - \EE_{P_0}\left[\EE_{\pi_e}\left[\Delta(x,a)\right]^2\right] + 2\EE_{P^{\pi_e}_\xi}\left[\delta(x,a)\Delta(x,a)\EE_{\pi_e}\left[Q(x,a)+\Delta(x,a)\right]\right] \\
&= \EE_{P_0}\left[V^{\pi_e}(x)^2\right] + \EE_{P_0}\left[\EE_{\pi_e}\left[\delta(x,a)\Delta(x,a)\right]^2\right] + 2\EE_{P_0}\left[V^{\pi_e}(x)\mathbb{E}_{\pi_e}\left[\delta(x,a)\Delta(x,a)\right]\right] \\ 
&- \EE_{P_0}\left[\left(\EE_{\pi_e}\left[\delta(x,a)\Delta(x,a)\right] - \EE_{\pi_e}\left[\Delta(x,a)\right]\right)^2\right] \\
&= \EE_{P^{\pi_e}_\xi}\left[\left(Q(x,a) + \delta(x,a)\Delta(x,a)\right)^2\right] - \EE_{P_0}\left[\left(\EE_{\pi_e}\left[\left(1-\delta(x,a)\right)\Delta(x,a)\right]\right)^2\right]
\end{align*}
\end{small}
\end{proof}

Note that when $\pi_e$ is deterministic, the variance proved in Proposition~\ref{lem:tech_result_bias_variance} may be written as 

\vspace{-0.15in}
\begin{small}
\begin{align*}
n\mathbb{V}_{P^{\pi_b}_\xi}(\hat{\rho}_{\text{DR}}^{\pi_e}) &= \EE_{P^{\pi_b}_\xi}\left[\left(\frac{\mathbf 1\left\{a=\pi_e(x)\right\}}{\widehat\pi_b(a|x)}\left(\R(x,a) - Q(x,a)\right)\right)^2\right] + \mathbb V_{P_0}\left[V^{\pi_e}(x)+\delta\left(x,\pi_e(x)\right)\Delta\left(x,\pi_e(x)\right)\right]\\
&+ \EE_{P_0}\left[\frac{1-\pi_b\left(\pi_e(x)|x\right)}{\pi_b\left(\pi_e(x)|x\right)}\cdot\Delta\left(x,\pi_e(x)\right)^2\left(1-\delta\left(x,\pi_e(x)\right)\right)^2\right],
\end{align*}
\end{small}
\vspace{-0.15in}

which is equal to the variance reported in~\citet{Dudik11DR}.



\subsection{Proof of Theorem~\ref{thm:mdr_variance_cb}}


\begin{paragraph}
{\bf Theorem 1.} {\em The variance of the DR estimator~\eqref{eq:DR-bandit} for stochastic $\pi_e$ may be written as the following two forms:

\vspace{-0.15in}
\begin{small}
\begin{align*}
\textbf{\em (i)} \qquad n\VV_{P^{\pi_b}_\xi}(\hat{\rho}^{\pi_e}_{\text{DR}}) &= \mathbb E_{P^{\pi_b}_\xi}\Big[\omega(x,a)\Big(\mathbb{E}_{\pi_e}\left[\omega(x,a')\widehat{Q}(x,a';\beta)^2\right] - \widehat{V}^{\pi_e}(x;\beta)^2 - 2\R(x,a)\big(\omega(x,a)\widehat{Q}(x,a;\beta)-\widehat{V}^{\pi_e}(x;\beta)\big)\Big) \nonumber \\
&+\omega(x,a)^2\R(x,a)^2-\left(\mathbb E_{\pi_e}[\R(x,a)]\right)^2\Big] + \VV_{P_0}\big(\mathbb E_{\pi_e}[\R(x,a)]\big), \\ \nonumber \\
\textbf{\em (ii)} \qquad n\VV_{P^{\pi_b}_\xi}(\hat{\rho}^{\pi_e}_{\text{DR}}) &= \mathbb E_{P^{\pi_b}_\xi}\big[\omega(x,a)q_\beta(x,a,\R)^\top\Omega_{\pi_b}(x)q_\beta(x,a,\R)\big] + C,
\end{align*}
\end{small}
\vspace{-0.15in}

where \begin{small}$\Omega_{\pi_b}(x)=\mathrm{diag}\big[1/\pi_b(a|x)\big]_{a\in\mathcal{A}}-ee^\top$\end{small} with \begin{small}$e=[1,\ldots,1]^\top$\end{small}; \begin{small}$q_\beta(x,a,\R)=D_{\pi_e}(x)\bar{Q}(x;\beta)-\mathbb{I}(a)r$\end{small} is a row vector with \begin{small}$D_{\pi_e}(x)=\mathrm{diag}\big[\pi_e(a|x)\big]_{a\in\mathcal{A}}$\end{small}, row vector \begin{small}$\bar{Q}(x;\beta)=\big[\widehat{Q}(x,a;\beta)\big]_{a\in\mathcal{A}}$\end{small}, and the row vector of indicator functions \begin{small}$\mathbb I(a)=\big[\mathbf 1\{a'=a\}\big]_{a'\in\mathcal A}$\end{small}; and finally 
\[
C=\mathbb V_{P_0}\big(\mathbb E_{\pi_e}[\R(x,a)]\big) -\mathbb E_{P^{\pi_b}_\xi}\left[\left[\mathbb E_{\pi_e}[\R(x,a)]\right]^2\right]+\mathbb E_{P^{\pi_b}_\xi}\left[\left(1+\omega(x,a)-\frac{1}{\pi^2_b(a|x)}\right)\omega(x,a)\R(x,a)^2\right].
\]}
\end{paragraph}

\begin{proof}
In this proof, we first derive the expression in (i) using the variance formulation from Lemma \ref{lem:tech_result_bias_variance} and later deduce the expression in (ii) using the completion of squares.

For the derivation of (i), recall from Proposition \ref{lem:tech_result_bias_variance} that when the behavior policy $\pi_b$ is known, then the policy approximation error $\delta$ vanishes uniformly, and the variance formulation is given by
\[
\small
\begin{split}
n\mathbb{V}_{P^{\pi_b}_\xi}(\hat{\rho}_{\text{DR}}^{\pi_e}) =&\mathbb E_{P^{\pi_b}_\xi}\left[\left(\frac{\pi_e(a|x)}{\pi_b(a|x)}(\R(x,a) - Q(x,a))\right)^2\right]+\mathbb V_{P_0}\left[V^{\pi_e}(x)\right]\\
&+\mathbb E_{P_0}\left[\mathbb E_{\pi_e}\left[\frac{\pi_e(a|x)}{\pi_b(a|x)}(\Delta(x,a))^2\right] -\left(\mathbb E_{\pi_e}[\Delta(x,a)]\right)^2\right].
 \end{split}
\] 
Using the following total variance decomposition rule:
\[
\VV_{P_0,P_r}\left[\mathbb E_{\pi_e}[\R(x,a)] \right]=\mathbb E_{P_0}\left[\mathbb V_{P_r}\left[\mathbb E_{\pi_e}[\R(x,a)]]\right]\right]+\mathbb V_{P_0}\left[\mathbb E_{P_r}\left[\mathbb E_{\pi_e}[\R(x,a)]\right]\right],
\]
we can show that
\[
\begin{split}
\mathbb V_{P_0}\left[V^{\pi_e}(x)\right]=&\mathbb V_{P_0}\left[\mathbb E_{P_r}\left[\mathbb E_{\pi_e}[\R(x,a)]\right]\right]\\
=&\VV_{P_0,P_r}\left[\mathbb E_{\pi_e}[\R(x,a)] \right]-\mathbb E_{P_0}\left[\mathbb V_{P_r}\left[\mathbb E_{\pi_e}[\R(x,a)]]\right]\right]\\
=&\VV_{P_0,P_r}\left[\mathbb E_{\pi_e}[\R(x,a)] \right]-\mathbb E_{P_0,P_r}\left[\left(\mathbb E_{\pi_e}[\R(x,a)]-V^{\pi_e}(x)\right)^2\right],
\end{split}
\]
where the last equality follows from the fact that $\mathbb V_{P_r}\left[\mathbb E_{\pi_e}[\R(x,a)]\right]=\mathbb E_{P_r}\left[\left(\mathbb E_{\pi_e}[\R(x,a)]-V^{\pi_e}(x)\right)^2\right]$,
Utilizing this equality, the variance $n\mathbb{V}_{P^{\pi_b}_\xi}(\hat{\rho}_{\text{DR}}^{\pi_e})$ can be re-written as
\[
\small
\begin{split}
n\mathbb{V}_{P^{\pi_b}_\xi}(\hat{\rho}_{\text{DR}}^{\pi_e})=& \mathbb E_{P^{\pi_b}_\xi}\left[\left(\frac{\pi_e(a|x)}{\pi_b(a|x)} (\R(x,a) - Q(x,a))\right)^2 \right]+ \VV_{P_0,P_r}\left[\mathbb E_{\pi_e}[\R(x,a)] \right] -\mathbb E_{P_0,P_r}\left[(\mathbb E_{\pi_e}[\R(x,a) - Q(x, a)])^2\right]\\
&+\mathbb E_{P_0,\pi_b}\left[\left(\frac{\pi_e(a|x)}{\pi_b(a|x)}\Delta(x, a)\right)^2\right]- \mathbb E_{P_0}\left[\left(\mathbb E_{\pi_b}\left[\frac{\pi_e(a|x)}{\pi_b(a|x)} \Delta(x,a)\right]\right)^2\right],
\end{split}
\]
which is due to the following importance sampling properties:
\[
\mathbb E_{\pi_e}\left[\frac{\pi_e(a|x)}{\pi_b(a|x)}(\Delta(x,a))^2\right] = \mathbb E_{\pi_b}\left[\left(\frac{\pi_e(a|x)}{\pi_b(a|x)}\right)^2(\Delta(x,a))^2\right],
\]
and 
\[
\mathbb E_{\pi_e}[\Delta(x,a)]=\mathbb E_{\pi_b}\left[\frac{\pi_e(a|x)}{\pi_b(a|x)} \Delta(x,a)\right].
\]
Now recall the definition of the model error $ \Delta(x,a)=\widehat Q(x,a)-Q(x,a)$. Since $\mathbb E_{P_r}[r(x,a)]=Q(x,a)$ for every state $x\in\X$ and action $a\in\A$, one can easily show that 
\[
\small
\begin{split}
&\mathbb E_{P^{\pi_b}_\xi}\left[\left(\frac{\pi_e(a|x)}{\pi_b(a|x)} (\R(x,a) - Q(x,a))\right)^2 \right]+\mathbb E_{P_0}\left[\left(\mathbb E_{\pi_b}\left[\frac{\pi_e(a|x)}{\pi_b(a|x)} \Delta(x,a)\right]\right)^2\right]\\
=&\mathbb E_{P^{\pi_b}_\xi}\left[\left(\frac{\pi_e(a|x)}{\pi_b(a|x)} (\R(x,a) - Q(x,a)-\Delta(x, a))\right)^2 \right]\\
=&\mathbb E_{P^{\pi_b}_\xi}\left[\left(\frac{\pi_e(a|x)}{\pi_b(a|x)} (\R(x,a) - \widehat Q(x,a))\right)^2 \right],
\end{split}
\]
because the corresponding cross-term cancels out. By using the same argument, one can also show that
\[
\begin{split}
\mathbb E_{P_0,P_r}\left[(\mathbb E_{\pi_e}[\R(x,a) - Q(x, a)])^2+\left(\mathbb E_{\pi_e}[\Delta(x,a)]\right)^2\right]=&\mathbb E_{P_0,P}\left[(\mathbb E_{\pi_e}[\R(x,a) - Q(x, a)-\Delta(x,a)])^2\right]\\
=&\mathbb E_{P_0,P}\left[(\mathbb E_{\pi_e}[\R(x,a) - \widehat{Q}(x,a;\beta)])^2\right]
\end{split}
\]
Colelctively, the variance expression $n\mathbb{V}_{P^{\pi_b}_\xi}(\hat{\rho}_{\text{DR}}^{\pi_e})$ can be further simplified by the following chain of equalities:
\begin{equation}\label{eq:arg_1}
\small
\begin{split}
n\mathbb{V}_{P^{\pi_b}_\xi}(\hat{\rho}_{\text{DR}}^{\pi_e})=& \VV_{P_0,P}\left[\mathbb E_{\pi_e}[\R(x,a)] \right] + \mathbb E_{P^{\pi_b}_\xi}\left[\left(\frac{\pi_e(a|x)}{\pi_b(a|x)} (\R(x,a) - Q(x,a)-\Delta(x, a))\right)^2 \right]\\
&- \mathbb E_{P_0,P}\left[(\mathbb E_{\pi_e}[\R(x,a) - Q(x, a)-\Delta(x,a)])^2\right]\\
=&\VV_{P_0,P}\left[\mathbb E_{\pi_e}[\R(x,a)] \right] + \mathbb E_{P^{\pi_b}_\xi}\left[\left(\frac{\pi_e(a|x)}{\pi_b(a|x)} (\R(x,a) - \widehat{Q}(x,a;\beta))\right)^2 \right]- \mathbb E_{P_0,P}\left[(\mathbb E_{\pi_e}[\R(x,a) - \widehat{Q}(x,a;\beta)])^2\right]\\
=&\VV_{P_0,P}\left[\mathbb E_{\pi_e}[\R(x,a)] \right] + \mathbb E_{P_0,P}\left[\EE_{\pi_b}\left[\left(\frac{\pi_e(a|x)}{\pi_b(a|x)} (\R(x,a) - \widehat{Q}(x,a;\beta))\right)^2
\right] -(\mathbb E_{\pi_e}[\R(x,a) - \widehat{Q}(x,a;\beta)])^2\right]\\
=&\VV_{P_0,P}\left[\mathbb E_{\pi_e}[\R(x,a)] \right] + \mathbb E_{P_0,P}\left[\EE_{\pi_b}\left[\left(\frac{\pi_e(a|x)}{\pi_b(a|x)} (\R(x,a) - \widehat{Q}(x,a;\beta))\right)^2
\right] -\mathbb E_{\pi_b}\left[\frac{\pi_e(a|x)}{\pi_b(a|x)}\left(\R(x,a) - \widehat{Q}(x,a;\beta)\right)\right]^2\right]\\
=&\VV_{P_0,P}\left[\mathbb E_{\pi_e}[\R(x,a)] \right] + \mathbb E_{P_0,P}\left[\VV_{\pi_b}\left[\frac{\pi_e(a|x)}{\pi_b(a|x)} (\R(x,a) - \widehat{Q}(x,a;\beta))\right]\right]
\end{split}
\end{equation}

To simplify the above variance formulation, one also notices that the conditional variance term can be re-written as
\begin{equation}\label{eq:arg_2}
\small
\begin{split}
& \VV_{\pi_b} \left[\frac{\pi_e(a|x)}{\pi_b(a|x)} (\R(x,a) - \widehat{Q}(x,a;\beta))  \right]\\
=& \mathbb E_{\pi_b} \left[\left(\frac{\pi_e(a|x)}{\pi_b(a|x)} (\R(x,a) - \widehat{Q}(x,a;\beta))\right)^2  \right]-\left(\mathbb E_{\pi_e} \left[\R(x,a) - \widehat{Q}(x,a;\beta) \right]\right)^2 \\
=& \mathbb E_{\pi_b} \left[\left(\frac{\pi_e(a|x)}{\pi_b(a|x)} (\R(x,a) - \widehat{Q}(x,a;\beta))\right)^2  +2\frac{\pi_e(a|x)}{\pi_b(a|x)}\R(x,a)\widehat V^{\pi_e}(x;\beta) \right]-(\R_{\pi_e(x)})^2-(\widehat V^{\pi_e}(x;\beta))^2\\
=&\mathbb E_{\pi_b}\left[\frac{\pi^2_e(a'|x)}{\pi^2_b(a'|x)} \widehat{Q}(x,a';\beta)^2\right]-(\widehat V^{\pi_e}(x;\beta))^2+\mathbb E_{\pi_b}\left[\frac{\pi_e^2(a|x)}{\pi^2_b(a|x)}\R(x,a)^2\right]-\R_{\pi_e(x)}^2\\
&-2\mathbb E_{\pi_b} \left[\frac{\pi_e(a|x)}{\pi_b(a|x)}\R(x,a)\left(\frac{\pi_e(a|x)}{\pi_b(a|x)}\widehat{Q}(x,a;\beta)-\widehat V^{\pi_e}(x;\beta)\right)\right],
\end{split}
\end{equation}
where $\R_{\pi_e(x)}= \mathbb E_{\pi_e}[\R(x,a)]$ is the reward random variable induced by policy $\pi_e$. The second equality is based on expanding the term $(\mathbb E_{\pi_e} [\R(x,a) - \widehat{Q}(x,a;\beta) ])^2$ and recalling that $\widehat V^{\pi_e}(x;\beta)=\mathbb E_{\pi_e}[Q(x,a;\beta)]$, and the third equality is based on expanding the term $\left(\frac{\pi_e(a|x)}{\pi_b(a|x)} (\R(x,a) - \widehat{Q}(x,a;\beta))\right)^2$ and collecting the cross-terms.

Therefore, by combining both the arguments in~\eqref{eq:arg_1} and~\eqref{eq:arg_2}, and recalling that $\omega(x,a)=\pi_e(a|x)/\pi_b(a|x)$, the variance expression $n\mathbb{V}_{P^{\pi_b}_\xi}(\hat{\rho}_{\text{DR}}^{\pi_e})$ becomes
\begin{equation}\label{eq:pf_1}
\small
\begin{split}
&n\mathbb{V}_{P^{\pi_b}_\xi}(\hat{\rho}_{\text{DR}}^{\pi_e})\\
=&\VV_{P_0,P}\left[\mathbb E_{\pi_e}[\R(x,a)] \right] + \mathbb E_{P_0,P}\left[
\mathbb E_{\pi_b}\left[\frac{\pi^2_e(a'|x)}{\pi^2_b(a'|x)} \widehat{Q}(x,a';\beta)^2\right]-(\widehat V^{\pi_e}(x;\beta))^2+\mathbb E_{\pi_b}\left[\frac{\pi_e^2(a|x)}{\pi^2_b(a|x)}\R(x,a)^2\right]-\R_{\pi_e(x)}^2\right.\\
&\left.-2\mathbb E_{\pi_b} \left[\frac{\pi_e(a|x)}{\pi_b(a|x)}\R(x,a)\left(\frac{\pi_e(a|x)}{\pi_b(a|x)}\widehat{Q}(x,a;\beta)-\widehat V^{\pi_e}(x;\beta)\right)\right]\right]\\
=&\mathbb E_{P^{\pi_b}_\xi}\Big[\omega(x,a)\Big(\mathbb{E}_{\pi_e}\left[\omega(x,a')\widehat{Q}(x,a';\beta)^2\right] - \widehat{V}^{\pi_e}(x;\beta)^2 - 2\R(x,a)\big(\omega(x,a)\widehat{Q}(x,a;\beta)-\widehat{V}^{\pi_e}(x;\beta)\big)\Big)\\
&+\omega(x,a)^2\R(x,a)^2-\left(\mathbb E_{\pi_e}[\R(x,a)]\right)^2\Big] + \VV_{P_0}\big(\mathbb E_{\pi_e}[\R(x,a)]\big).
\end{split}
\end{equation}

For the derivation of (ii), consider the parts of the variance objective function that is dependent of $\beta$, i.e.,
\[
\mathbb E_{P^{\pi_b}_\xi}\Big[\omega(x,a)\Big(\mathbb{E}_{\pi_e}\left[\omega(x,a')\widehat{Q}(x,a';\beta)^2\right] - \widehat{V}^{\pi_e}(x;\beta)^2 - 2\R(x,a)\big(\omega(x,a)\widehat{Q}(x,a;\beta)-\widehat{V}^{\pi_e}(x;\beta)\big)\Big)\Big].
\]
From the definition of $|\A|\times|\A|$ symmetric positive-semidefinite matrix $\Omega_{\pi_b}(x)=\mathrm{diag}\big[1/\pi_b(a|x)\big]_{a\in\mathcal{A}}-ee^\top$ and $|\A|\times 1$ vector $q_\beta(x,a,\R)=D_{\pi_e}(x)\bar{Q}(x;\beta)-\mathbb{I}(a)r$,
one can easily sees that
\[
q_\beta(x,a,\R)^\top\Omega_{\pi_b}(x)q_\beta(x,a,\R)=q_\beta(x,a,\R)^\top\cdot\mathrm{diag}\big[1/\pi_b(a|x)\big]_{a\in\mathcal{A}}\cdot q_\beta(x,a,\R) - (q_\beta(x,a,\R)^\top e)^2,
\]
for which
\begin{equation}\label{eq:pf_2}
(q_\beta(x,a,\R)^\top e)^2=\widehat{V}^{\pi_e}(x;\beta)^2+\R(x,a)^2-2\R(x,a)\cdot\widehat{V}^{\pi_e}(x;\beta)
\end{equation}
and
\begin{equation}\label{eq:pf_3}
q_\beta(x,a,\R)^\top\cdot\mathrm{diag}\big[1/\pi_b(a|x)\big]_{a\in\mathcal{A}}\cdot q_\beta(x,a,\R)=\mathbb{E}_{\pi_e}\left[\omega(x,a')\widehat{Q}(x,a';\beta)^2\right]- 2\R(x,a)\omega(x,a)\widehat{Q}(x,a;\beta)+\frac{\R(x,a)^2}{\pi_b(a|x)}.
\end{equation}
By collecting the above terms in~\eqref{eq:pf_1},~\eqref{eq:pf_2}, and~\eqref{eq:pf_3}, one thus can show that
\[
\begin{split}
n&\mathbb{V}_{P^{\pi_b}_\xi}(\hat{\rho}_{\text{DR}}^{\pi_e})=\mathbb E_{P^{\pi_b}_\xi}\Big[\omega(x,a)\cdot q_\beta(x,a,\R)^\top\Omega_{\pi_b}(x)q_\beta(x,a,\R)\Big]\\
&+\mathbb V_{P_0}\big(\mathbb E_{\pi_e}[\R(x,a)]\big) +\mathbb E_{P^{\pi_b}_\xi}\left[\left[\mathbb E_{\pi_e}[\R(x,a)]\right]^2\right]-\mathbb E_{P^{\pi_b}_\xi}\left[\left(1+\omega(x,a)-\frac{1}{\pi^2_b(a|x)}\right)\omega(x,a)\R(x,a)^2\right].
\end{split}
\]
Therefore by recalling the definition of $C$, the derivation of ii) immediately follows.

The proof of this theorem is completed by combining both parts of the above derivations.
\end{proof}


\subsection{Proof of Proposition~\ref{lemma:psd}}

\begin{proposition}
\label{lemma:psd}
For any state $x\in\X$, the symmetric matrix $\Omega_{\pi_b}(x)$ is positive semi-definite (PSD). 
\end{proposition}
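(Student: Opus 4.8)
The plan is to verify positive semi-definiteness directly from the definition by checking that the associated quadratic form is nonnegative on every vector. Fix a state $x$ and abbreviate $p_a = \pi_b(a|x)$ for $a \in \mathcal{A}$. Since the diagonal entries $1/p_a$ must be well defined, I would work on the support of $\pi_b$, where $p_a > 0$; this is consistent with Assumption~\ref{assumption:abs_cont}, which forces $\pi_b$ to be strictly positive wherever $\pi_e$ has mass. For an arbitrary $v = [v_a]_{a \in \mathcal{A}}$, I would expand
\begin{equation*}
v^\top \Omega_{\pi_b}(x)\, v = v^\top \mathrm{diag}[1/p_a]_{a\in\mathcal{A}}\, v - v^\top e e^\top v = \sum_{a\in\mathcal{A}} \frac{v_a^2}{p_a} - \Big(\sum_{a\in\mathcal{A}} v_a\Big)^2,
\end{equation*}
so that the entire claim reduces to the scalar inequality $\sum_a v_a^2/p_a \ge \big(\sum_a v_a\big)^2$.

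The key step is to recognize this as an instance of the Cauchy--Schwarz inequality with the weighting $\sqrt{p_a}$. Writing each coordinate as $v_a = (v_a/\sqrt{p_a})\cdot \sqrt{p_a}$ and applying Cauchy--Schwarz yields
\begin{equation*}
\Big(\sum_{a} v_a\Big)^2 = \Big(\sum_a \frac{v_a}{\sqrt{p_a}}\,\sqrt{p_a}\Big)^2 \le \Big(\sum_a \frac{v_a^2}{p_a}\Big)\Big(\sum_a p_a\Big) = \sum_a \frac{v_a^2}{p_a},
\end{equation*}
where the last equality uses that $\{p_a\}_{a\in\mathcal{A}}$ is a probability distribution, so $\sum_a p_a = 1$. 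This is exactly the desired inequality, hence $v^\top \Omega_{\pi_b}(x)\, v \ge 0$ for every $v$, which establishes that $\Omega_{\pi_b}(x)$ is PSD; symmetry is immediate since both $\mathrm{diag}[1/p_a]$ and $ee^\top$ are symmetric.

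There is no serious technical obstacle here; the statement is essentially a weighted Cauchy--Schwarz bound. The only points requiring care are (i) that $\sum_a p_a = 1$ is genuinely invoked, since it is what makes the bound nontrivially tight rather than off by the factor $\sum_a p_a$, and (ii) the well-definedness of the diagonal, which confines attention to actions with $p_a > 0$ (the support of $\pi_b$, on which $\pi_e$ is also supported by absolute continuity). As a sanity check one can note that equality in Cauchy--Schwarz holds precisely when $v_a \propto p_a$, so the quadratic form vanishes exactly on $\mathrm{span}\{[p_a]_{a\in\mathcal{A}}\}$ and is strictly positive elsewhere; this confirms $\Omega_{\pi_b}(x)$ is PSD but not strictly positive definite.
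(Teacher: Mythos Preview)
Your proof is correct and follows essentially the same route as the paper: expand the quadratic form, then apply Cauchy--Schwarz with the splitting $v_a = (v_a/\sqrt{p_a})\sqrt{p_a}$ and use $\sum_a p_a = 1$. Your write-up is in fact a bit more careful (noting symmetry, the support issue, and the equality case), but the argument is the same.
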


\begin{proof}

For any non-zero vector $v\in\mathbb R^{|\mathcal A|}$, consider the quadratic form at a given arbitrary state $x\in\X$: 
\[
v^\top(\mathrm{diag}(\{1/\pi_b(a'|x)\}_{a'\in\mathcal A})-ee^\top)v=\sum_{a\in\mathcal A}{v^2_a}/{\pi_b(a|x)}-(\sum_{a\in\mathcal A}v_a)^2.
\]
Since $\sum_{a\in\mathcal A}\pi_a(a|s)=1$, notice that
\[
\sum_{a\in\mathcal A}{v^2_a}/{\pi_b(a|x)}=\sum_{a\in\mathcal A}\sqrt{\pi_b(a|x)}^2\sum_{a\in\mathcal A}({v_a}/{\sqrt{\pi_b(a|x)}})^2.
\]
Moreover, one deduces that 
\[
\sum_{a\in\mathcal A}\sqrt{\pi_b(a|x)}^2\sum_{a\in\mathcal A}({v_a}/{\sqrt{\pi_b(a|x)}})^2\geq (\sum_{a\in\mathcal A}v_a)^2
\]
from Cauchy-Schwarz inequality.
This implies that $\mathrm{diag}(\{1/\pi_b(a'|x)\}_{a'\in\mathcal A})-ee^\top$ is a positive semi-definite matrix, which completes the proof.
\end{proof}

\newpage
\section{Proofs of Section~\ref{subsec:MRDR-RL}}\label{sec:proofs-MRDR-RL}

\subsection{Proof of Theorem \ref{thm:mdr_variance}} 
We utilize the variance formulation of the DR estimator in the contextual bandit setting and construct the variance of the DR estimator for RL using mathematical induction.

\subsubsection{Base Case, when $T=2$}
Consider the case when $T=2$, where the variance of the DR estimator has the form:
\[
n\cdot\mathbb V_{P^{\pi_b}_\xi}\left[\RV_{0:1}\right]:=\mathbb{V}_{P^{\pi_b}_\xi}\left[\omega_0\left[\R_0+ \gamma\left[\omega_1 \left(\R_1\ - \widehat{Q}^{\pi_e} \left(x_1,a_1;\beta\right)\right) + \widehat{V}^{\pi_e} \left(x_1;\beta\right)\right]-\widehat{Q}^{\pi_e} \left(x_0,a_0;\beta\right)\right]+ \widehat{V}^{\pi_e} \left(x_0;\beta\right) \right].
\]
Using the total variance decomposition rule 
\[
\VV_{P^{\pi_b}_\xi}[\cdot] = \EE_{\mathcal F_0} \left[ \VV_{\mathcal F_1}[\cdot | \mathcal F_0]\right] + \VV_{\mathcal F_0} \left[\EE_{\mathcal F_1}[\cdot | \mathcal F_0] \right],
\]
this formulation can be immediately re-written as
\begin{small}
\begin{align}
\mathbb V_{P^{\pi_b}_\xi}\left[\RV_{0:1}\right]
= 
& \EE_{\mathcal F_0}\! \left[\VV_{\mathcal F_1}\! \left[\omega_0\left[\R_0+ \gamma\left[\omega_1 \left(\R_1\! -\! \widehat{Q}^{\pi_e} \left(x_1,a_1;\beta\right)\right)\! +\! \widehat{V}^{\pi_e} \left(x_1;\beta\right)\right]-\widehat{Q}^{\pi_e} \left(x_0,a_0;\beta\right)\!\right]\!+ \widehat{V}^{\pi_e} \left(x_0;\beta\right) \!\mid\! \mathcal F_0\right]\!\right] + \nonumber\\
&\,\VV_{\mathcal F_0} \! \left[ \EE_{\mathcal F_1}\! \left[\omega_0\left[\R_0+ \gamma\left[\omega_1 \left(\R_1\! -\! \widehat{Q}^{\pi_e} \left(x_1,a_1;\beta\right)\right) \!+\! \widehat{V}^{\pi_e} \left(x_1;\beta\right)\right]-\widehat{Q}^{\pi_e} \left(x_0,a_0;\beta\right)\!\right]\!+ \widehat{V}^{\pi_e} \left(x_0;\beta\right) \!\mid\! \mathcal F_0\right]\!\right]\label{eq:sample_var_DR_T_2}.
\end{align}
\end{small}
Now consider the second term in~\eqref{eq:sample_var_DR_T_2}. By the linearity of expectation operator $\mathbb E_{\F_1}$ and by the following property: 
\[\mathbb E_{\F_1}\left[\widehat Q^{\pi_e} \left(x_1,a_1;\beta\right)\right]=\mathbb E_{x_1}\left[\mathbb E_{a_1\sim\pi_b}\left[\omega_1\widehat Q^{\pi_e} \left(x_1,a_1;\beta\right)\right]\right]=\mathbb E_{x_1}\left[\widehat V^{\pi_e} \left(x_1;\beta\right)\right], 
\]
this term can be re-written as follows:
\begin{equation}\label{eq:var_T_2}
\small
\begin{split}
&\VV_{\mathcal F_0}  \left[ \EE_{\mathcal F_1} \left[\omega_0\left[\R_0+ \gamma\left[\omega_1 \left(\R_1\ - \widehat{Q}^{\pi_e} \left(x_1,a_1;\beta\right)\right) + \widehat{V}^{\pi_e} \left(x_1;\beta\right)\right]-\widehat{Q}^{\pi_e} \left(x_0,a_0;\beta\right)\right]+ \widehat{V}^{\pi_e} \left(x_0;\beta\right) \mid \mathcal F_0\right]\right] \\
= &\VV_{\mathcal F_0}  \left[ \omega_0\left[\R_0+ \gamma\EE_{\mathcal F_1} \left[\omega_1 \left(\R_1\ - \widehat{Q}^{\pi_e} \left(x_1,a_1;\beta\right)\right) + \widehat{V}^{\pi_e} \left(x_1;\beta\right)\mid \mathcal F_0\right]-\widehat{Q}^{\pi_e} \left(x_0,a_0;\beta\right)\right]+ \widehat{V}^{\pi_e} \left(x_0;\beta\right) \right]\\
=& \VV_{\mathcal F_0}  \left[ \omega_0\left[\mathbb E_{\F_1}[\bar{\RV}_{0:1}]-\widehat{Q}^{\pi_e} \left(x_0,a_0;\beta\right)\right]+ \widehat{V}^{\pi_e} \left(x_0;\beta\right) \right].
\end{split}
\end{equation}
For the first term in~\eqref{eq:sample_var_DR_T_2}, by the translational invariance of the variance operator $\VV_{\mathcal F_1}$, one has the following simplification:
\begin{equation}\label{eq:exp_T_2}
\small
\begin{split}
&\EE_{\mathcal F_0} \left[\VV_{\mathcal F_1} \left[\omega_0\left[\R_0+ \gamma\left[\omega_1 \left(\R_1\ - \widehat{Q}^{\pi_e} \left(x_1,a_1;\beta\right)\right) + \widehat{V}^{\pi_e} \left(x_1;\beta\right)\right]-\widehat{Q}^{\pi_e} \left(x_0,a_0;\beta\right)\right]+ \widehat{V}^{\pi_e} \left(x_0;\beta\right) \mid \mathcal F_0\right]\right] \\
=&\EE_{\mathcal F_0} \left[\gamma^2\omega_0^2\cdot\VV_{\mathcal F_1} \left[\omega_1 \left(\R_1\ - \widehat{Q}^{\pi_e} \left(x_1,a_1;\beta\right)\right) + \widehat{V}^{\pi_e} \left(x_1;\beta\right)\mid \mathcal F_0\right]\right].
\end{split}
\end{equation}
Therefore, by combining the above results, one shows the variance formulation in Theorem \ref{thm:mdr_variance} for the base case when $T=2$. 

\subsubsection{General Case by Mathematical Induction}
By the principle of mathematical induction, at step $k=j$ for any arbitrary time step $j$, the variance of the $T-j$ step truncated return variable is assumed to have the following form:
\begin{equation}\label{eq:DR_MI}
n\cdot\mathbb V_{P^{\pi_b}_\xi}\left[\RV_{j:T-1}\right]=\sum_{t=j}^{T-1} \mathbb E_{\F_{j:t-1}}\left[\gamma^{2(t-j)}\omega^2_{j:t-1}\VV_{\F_{t}}  \left[ \omega_{t}\left[\EE_{\F_{t:T-1}}[\bar{\RV}_{t:T-1}]-\widehat{Q}^{\pi_e} \left(x_t,a_{t};\beta\right)\right]\right]+ \widehat{V}^{\pi_e} \left(x_t;\beta\right) \right],
\end{equation}
For the case of $k=j-1$, consider the following variance formulation of the truncated return variable:
\[
\begin{split}
\mathbb V_{P^{\pi_b}_\xi}\left[\RV_{j-1:T-1}\right]=&\mathbb V_{P^{\pi_b}_\xi}\left[\sum_{t=j-1}^{T-1}\gamma^{t-(j-1)}\omega_{j-1:t}\R_t-\sum_{t=j-1}^{T-1}\gamma^{t-(j-1)}\left(\omega_{j-1:t}\widehat{Q}^{\pi_e} \left(x_t,a_t;\beta\right)-\omega_{j-1:t-1}\widehat{V}^{\pi_e} \left(x_t;\beta\right)\right)\right]\\
=&\mathbb{V}_{P^{\pi_b}_\xi}\left[\omega_{j-1}\left[\R_{j-1}+ \gamma\RV_{j:T-1}- \widehat{Q}^{\pi_e} \left(x_{j-1},a_{j-1};\beta\right)\right]+ \widehat{V}^{\pi_e} \left(x_{j-1};\beta\right) \right].
\end{split}
\]
Again, by the total variance decomposition rule 
\[
\VV_{P^{\pi_b}_\xi}[\cdot] = \EE_{\mathcal F_{j-1}} \left[ \VV_{\mathcal F_{j:T-1}}[\cdot | \mathcal F_{j-1}]\right] + \VV_{\mathcal F_{j-1}} \left[\EE_{\mathcal F_{j:T-1}}[\cdot | \mathcal F_{j-1}] \right],
\]
this variance formulation can be re-written using the following chain of equalities:
\begin{small}
\begin{align}
&\mathbb V_{P^{\pi_b}_\xi}\left[\RV_{j-1:T-1}\right]\nonumber\\
= 
& \EE_{\F_{j-1}} \left[\VV_{\F_{j:T-1}} \left[\omega_{j-1}\left[\R_{j-1}+ \gamma\RV_{j:T-1}-\widehat{Q}^{\pi_e} \left(x_{j-1},a_{j-1};\beta\right)\right]+ \widehat{V}^{\pi_e} \left(x_{j-1};\beta\right) \mid \F_{j-1}\right]\right] \nonumber\\
&+\VV_{\F_{j-1}}  \left[ \EE_{\F_{j:T-1}} \left[\omega_{j-1}\left[\R_{j-1}+ \gamma\RV_{j:T-1}-\widehat{Q}^{\pi_e} \left(x_{j-1},a_{j-1};\beta\right)\right]+ \widehat{V}^{\pi_e} \left(x_{j-1};\beta\right) \mid \F_{j-1}\right]\right]\nonumber\\
=&\EE_{\F_{j-1}} \left[\gamma^2\omega_{j-1}^2\cdot\VV_{\F_{j:T-1}} \left[\RV_{j:T-1}\mid \F_{j-1}\right]\right]\nonumber\\
&+\VV_{\F_{j-1}}  \left[ \omega_{j-1}\left[\R_{j-1}+ \gamma\EE_{\F_{j:T-1}} \left[\RV_{j:T-1}\mid \F_{j-1}\right]-\widehat{Q}^{\pi_e} \left(x_{j-1},a_{j-1};\beta\right)\right]+ \widehat{V}^{\pi_e} \left(x_{j-1};\beta\right) \right]\nonumber\\
=&\VV_{\F_{j-1}}  \left[ \omega_{j-1}\left[\mathbb E_{\F_{j:T-1}}[\bar{\RV}_{j-1:T-1}]-\widehat{Q}^{\pi_e} \left(x_{j-1},a_{j-1};\beta\right)\right]+ \widehat{V}^{\pi_e} \left(x_{j-1};\beta\right) \right]+\EE_{\F_{j-1}} \left[\gamma^2\omega_{j-1}^2\cdot\VV_{\F_{j:T-1}} \left[\RV_{j:T-1}\mid \F_{j-1}\right]\right].\nonumber\\
=&\VV_{\F_{j-1}}  \left[ \omega_{j-1}\left[\mathbb E_{\F_{j:T-1}}[\bar{\RV}_{j-1:T-1}]-\widehat{Q}^{\pi_e} \left(x_{j-1},a_{j-1};\beta\right)\right]\right]\nonumber\\
&\quad+\EE_{\F_{j-1}} \left[\gamma^2\omega_{j-1}^2\cdot\sum_{t=j}^{T-1} \mathbb E_{\F_{j:t-1}}\left[\gamma^{2(t-j)}\omega^2_{j:t-1}\VV_{\F_{t}}  \left[ \omega_{t}\left[\mathbb E_{\F_{t+1:T-1}}\left[\bar{\RV}_{t:T-1}\right]-\widehat{Q}^{\pi_e} \left(x_t,a_{t};\beta\right)\right]\right]+ \widehat{V}^{\pi_e} \left(x_t;\beta\right) \right]\right]\nonumber\\
=&\sum_{t=j-1}^{T-1} \mathbb E_{\F_{j-1:t-1}}\left[\gamma^{2(t-(j-1))}\cdot\omega^2_{j-1:t-1}\cdot\VV_{\F_{t}}  \left[ \omega_{t}\left[\mathbb E_{\F_{t+1:T-1}}[\bar{\RV}_{t:T-1}]-\widehat{Q}^{\pi_e} \left(x_t,a_{t};\beta\right)\right]\right]+ \widehat{V}^{\pi_e} \left(x_t;\beta\right) \right].
\end{align}
\end{small}
The second equality is due to the linearity of the expectation operator $\EE_{\F_{j:T-1}}$, and the square-scaling and the translational invariance property of the variance operator, i.e.,
\[
\VV_{\F_{j:T-1}} \left[\omega_{j-1}\left[\R_{j-1}+ \gamma\RV_{j:T-1}-\widehat{Q}^{\pi_e} \left(x_{j-1},a_{j-1};\beta\right)\right]+ \widehat{V}^{\pi_e} \left(x_{j-1};\beta\right) \mid \F_{j-1}\right] =\gamma^2\omega_{j-1}^2\cdot\VV_{\F_{j:T-1}} \left[\RV_{j:T-1}\mid \F_{j-1}\right].
\]
The third equality is due to the fact that
\[
\EE_{\F_{j:T-1}}[\bar{\RV}_{j-1:T-1}]=\R_{j-1}+ \gamma\EE_{\F_{j:T-1}} \left[\RV_{j:T-1}\mid \F_{j-1}\right]
\]
The fourth equality is based on the translational invariance property of $\VV_{\F_{j-1}} $, and the assumption of mathematical induction, and finally the fifth equality is merely based on re-formulating the summation and noticing that $\omega_{t_1:t_2}=1$ if $t_1>t_2$. Therefore, the variance of the truncated return random variable holds in the case when $k=j-1$. 

Based on the argument of induction, one concludes that for any $k\in\{0,\ldots,T-1\}$, the variance of the $T-k$ truncated return in~\eqref{eq:DR_MI} holds. 

To complete the proof of Theorem \ref{thm:mdr_variance}, one needs to show that for any $t\geq 0$, 
\[
\VV_{\F_{t}}  \left[ \omega_{t}\left[\mathbb E_{\F_{t+1:T-1}}[\bar{\RV}_{t:T-1}]-\widehat{Q}^{\pi_e} \left(x_t,a_{t};\beta\right)\right]\right]=\VV_{\F_{t:T-1}}  \left[ \omega_{t}\left[\bar{\RV}_{t:T-1}-\widehat{Q}^{\pi_e} \left(x_t,a_{t};\beta\right)\right]\right]+C_t.
\]
By expanding the variance expression on the left side, by utilizing the completion of squares, and by noticing that $\bar{\RV}_{t:T-1}$ is an unbiased sampling-based estimate of $\F_{t:T-1}[\bar{\RV}_{t:T-1}]$, one has the following chain of equalities:
\[
\begin{split}
&\VV_{\F_{t}}  \left[ \omega_{t}\left[\mathbb E_{\F_{t+1:T-1}}[\bar{\RV}_{t:T-1}]-\widehat{Q}^{\pi_e} \left(x_t,a_{t};\beta\right)\right]\right]\\
=&\EE_{\F_{t}}  \left[\left( \omega_{t}\left[\mathbb E_{\F_{t+1:T-1}}[\bar{\RV}_{t:T-1}]-\widehat{Q}^{\pi_e} \left(x_t,a_{t};\beta\right)\right]\right)^2\right]-\left(\EE_{\F_{t}}  \left[ \omega_{t}\left[\mathbb E_{\F_{t+1:T-1}}[\bar{\RV}_{t:T-1}]-\widehat{Q}^{\pi_e} \left(x_t,a_{t};\beta\right)\right]\right]\right)^2\\
=&\EE_{\F_{t}}  \left[\left( \omega_{t}\left[\mathbb E_{\F_{t+1:T-1}}[\bar{\RV}_{t:T-1}]-\widehat{Q}^{\pi_e} \left(x_t,a_{t};\beta\right)\right]\right)^2\right]-\left(\EE_{\F_{t:T-1}}  \left[ \omega_{t}\left[\bar{\RV}_{t:T-1}-\widehat{Q}^{\pi_e} \left(x_t,a_{t};\beta\right)\right]\right]\right)^2\\
=&\EE_{\F_{t:T-1}}  \left[\left( \omega_{t}\left[\bar{\RV}_{t:T-1}-\widehat{Q}^{\pi_e} \left(x_t,a_{t};\beta\right)\right]\right)^2-2\omega_t^2(\bar{\RV}_{t:T-1}-\widehat{Q}^{\pi_e} \left(x_t,a_{t};\beta\right))\left(\bar{\RV}_{t:T-1}-\mathbb E_{\F_{t+1:T-1}}[\bar{\RV}_{t:T-1}]\right)\right]\\
&+E_{\F_{t:T-1}}\left[\omega_t^2\left(\bar{\RV}_{t:T-1}-\mathbb E_{\F_{t+1:T-1}}[\bar{\RV}_{t:T-1}]\right)^2\right]-\left(\EE_{\F_{t:T-1}}  \left[ \omega_{t}\left[\bar{\RV}_{t:T-1}-\widehat{Q}^{\pi_e} \left(x_t,a_{t};\beta\right)\right]\right]\right)^2\\
=&\EE_{\F_{t:T-1}}  \left[\left( \omega_{t}\left[\bar{\RV}_{t:T-1}-\widehat{Q}^{\pi_e} \left(x_t,a_{t};\beta\right)\right]\right)^2\right]-\left(\EE_{\F_{t:T-1}}  \left[ \omega_{t}\left[\bar{\RV}_{t:T-1}-\widehat{Q}^{\pi_e} \left(x_t,a_{t};\beta\right)\right]\right]\right)^2\\
&+\underbrace{E_{\F_{t:T-1}}\left[\omega_t^2\left(\bar{\RV}_{t:T-1}-\mathbb E_{\F_{t+1:T-1}}[\bar{\RV}_{t:T-1}]\right)^2-2\omega_t^2\bar{\RV}_{t:T-1}\left(\bar{\RV}_{t:T-1}-\mathbb E_{\F_{t+1:T-1}}[\bar{\RV}_{t:T-1}]\right)\right]}_{C_t}\\
=&\VV_{\F_{t:T-1}}  \left[ \omega_{t}\left[\bar{\RV}_{t:T-1}-\widehat{Q}^{\pi_e} \left(x_t,a_{t};\beta\right)\right]\right]+C_t.
\end{split}
\]

Therefore, by combining all the previous arguments, one proves the claim of Theorem \ref{thm:mdr_variance}.

\newpage
\section{Proofs of Section~\ref{subsec:MRDR-asy}}\label{sec:proofs-MRDR-asy}
\begin{proposition}
Suppose the importance weight $\omega_{t_1,t_2}$ is bounded at any time interval $(t_1,t_2)$, such that $0\leq t_1\leq t_2\leq T-1$. Then the MRDR estimator is strongly consistent, i.e., $\lim_{n\rightarrow\infty}\hat{\rho}^{\pi_e}_{\text{MRDR},n}(\beta^*_n)= \rho^{\pi_e}$ almost surely. 
\end{proposition}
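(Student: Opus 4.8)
The plan is to exploit the fact that, because the behavior policy $\pi_b$ is known, the DR estimator is unbiased \emph{for every} fixed model parameter $\beta$. Writing the per-trajectory DR term as
\[
g(\xi;\beta)=\sum_{t=0}^{T-1}\gamma^t\Big[\omega_{0:t}r_t-\big(\omega_{0:t}\widehat{Q}^{\pi_e}(x_t,a_t;\beta)-\omega_{0:t-1}\widehat{V}^{\pi_e}(x_t;\beta)\big)\Big],
\]
so that $\hat\rho^{\pi_e}_{\text{MRDR},n}(\beta)=\frac1n\sum_{i=1}^n g(\xi^{(i)};\beta)$, the computation in Section~\ref{subsec:DR} shows that $m(\beta):=\mathbb E_{P^{\pi_b}_\xi}[g(\xi;\beta)]=\rho^{\pi_e}$ for all $\beta$. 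The population mean is thus \emph{constant} in $\beta$, which is the structural fact that drives the proof.

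First I would decompose the estimation error at the data-dependent parameter $\beta^*_n$ as
\[
\hat\rho^{\pi_e}_{\text{MRDR},n}(\beta^*_n)-\rho^{\pi_e}=\underbrace{\frac1n\sum_{i=1}^n\big[g(\xi^{(i)};\beta^*_n)-g(\xi^{(i)};\beta^*)\big]}_{A_n}+\underbrace{\Big(\frac1n\sum_{i=1}^n g(\xi^{(i)};\beta^*)-\rho^{\pi_e}\Big)}_{B_n}.
\]
The term $B_n$ evaluates $g$ at the \emph{deterministic} limit $\beta^*$; since the $\xi^{(i)}$ are i.i.d.\ under $P^{\pi_b}_\xi$ and, by the boundedness of the importance weights $\omega_{t_1,t_2}$ together with the bounded rewards $r_t\in[0,R_{\max}]$, the summands $g(\xi^{(i)};\beta^*)$ are integrable with mean $\rho^{\pi_e}$, Kolmogorov's strong law of large numbers gives $B_n\to 0$ almost surely.

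The remaining term $A_n$ accounts for the data-dependence of $\beta^*_n$, and this is the main obstacle: $\beta^*_n$ is correlated with the very trajectories being averaged, so one cannot apply the law of large numbers directly at $\beta=\beta^*_n$. I would resolve this with a Lipschitz bridge. When the approximation $\widehat{Q}^{\pi_e}(x,a;\cdot)$ is smooth in $\beta$, as assumed in Section~\ref{subsec:MRDR-RL}, it is Lipschitz with some trajectory-dependent constant $L(\xi)$, and then so is $\widehat V^{\pi_e}(x;\cdot)=\sum_{a}\pi_e(a|x)\widehat Q^{\pi_e}(x,a;\cdot)$ with the same constant; the bounded weights then give
\[
|A_n|\le\|\beta^*_n-\beta^*\|\cdot\frac1n\sum_{i=1}^n \tilde L(\xi^{(i)}),\qquad \tilde L(\xi)=\sum_{t=0}^{T-1}\gamma^t\big(\omega_{0:t}+\omega_{0:t-1}\big)L(\xi).
\]
By boundedness of $\omega$, $\tilde L$ is integrable whenever $L$ is, so $\frac1n\sum_i\tilde L(\xi^{(i)})$ converges a.s.\ to a finite limit by the SLLN, while $\|\beta^*_n-\beta^*\|\to 0$ a.s.\ by the strong consistency of $J_n$ established in Section~\ref{subsec:MRDR-RL}. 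Hence $A_n\to 0$ almost surely, and combining with $B_n\to0$ yields $\hat\rho^{\pi_e}_{\text{MRDR},n}(\beta^*_n)\to\rho^{\pi_e}$ a.s.

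As a robustness check I would also note an alternative that bypasses differentiability: if $\beta$ ranges over a compact set and $g(\xi;\cdot)$ is continuous and dominated by an integrable envelope (again supplied by the bounded weights), a uniform strong law gives $\sup_\beta|\hat\rho^{\pi_e}_{\text{MRDR},n}(\beta)-\rho^{\pi_e}|\to0$ a.s.; since this supremum dominates the error at \emph{any} random sequence $\beta^*_n$, the conclusion follows without even invoking $\beta^*_n\to\beta^*$. Under either route, I expect the only delicate bookkeeping to be the verification of the integrable Lipschitz constant (or envelope), which is precisely where the boundedness-of-$\omega$ hypothesis is used.
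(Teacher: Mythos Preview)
Your proposal is correct and shares the same high-level decomposition as the paper: split the error at the random $\beta^*_n$ into a fixed-parameter piece (your $B_n$) and a parameter-drift piece (your $A_n$), and handle them separately. The execution differs in two places. For $B_n$, the paper does not invoke DR unbiasedness directly; instead it breaks the DR estimator into the step-IS part and the two control-variate averages $X_n=\frac1n\sum_i\sum_t\gamma^t\omega_{0:t}^{(i)}\widehat Q^{\pi_e}(x_t^{(i)},a_t^{(i)};\beta)$ and $Y_n=\frac1n\sum_i\sum_t\gamma^t\omega_{0:t-1}^{(i)}\widehat V^{\pi_e}(x_t^{(i)};\beta)$, and appeals to Lemmas~12--13 of \citet{Thomas16DE} to show that the IS part converges to $\rho^{\pi_e}$ and that $X_n,Y_n$ converge to the \emph{same} limit, so the control variate vanishes. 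Your single-shot SLLN on $g(\xi;\beta^*)$ is more economical and uses the unbiasedness established in Section~\ref{subsec:DR} directly. For $A_n$, the paper simply cites ``Slutsky's theorem and the fact that $\beta^*_n\to\beta^*$''; your explicit Lipschitz bound, together with the SLLN on the envelope $\tilde L$, is what is actually needed to make that step rigorous (Slutsky alone does not give almost-sure convergence without a continuity/equicontinuity argument of exactly the kind you supply). Your alternative uniform-SLLN route is also sound and, as you note, does not even require $\beta^*_n\to\beta^*$; neither route appears in the paper.
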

\begin{proof}
Consider the general DR estimator in \eqref{eq:DR} for any given model parameter $\beta$. In this proof we use the notation $\widehat\rho^{\pi_e}_n(\beta)$ to emphasize that the sample-size of constructing the DR estimator is $n$. By Lemma 12 of \cite{Thomas16DE}, one first can show that 
\[
\lim_{n\rightarrow\infty}\frac{1}{n} \sum_{i=1}^n\sum_{t=0}^{T-1}\gamma^t\omega_{0:t}^{(i)}\R^{(i)}_t = \rho^{\pi_e} \quad\text{almost surely}.
\]

Now consider the two terms that constitute to the control variate:
\[
X_n=\frac{1}{n} \sum_{i=1}^n\sum_{t=0}^{T-1}\gamma^t\omega_{0:t}^{(i)}\widehat{Q}^{\pi_e}(x^{(i)}_t,a^{(i)}_t;\beta),\quad\quad Y_n=\frac{1}{n} \sum_{i=1}^n\sum_{t=0}^{T-1} \gamma^t\omega_{0:t-1}^{(i)}\widehat{V}^{\pi_e}(x^{(i)}_t;\beta).\nonumber
\]
Again by Lemma 12  in \cite{Thomas16DE}, one can show that,
\[
\lim_{n\rightarrow\infty}X_n=\mathbb E\left[\sum_{t=0}^{T-1}\gamma^t\widehat{Q}^{\pi_e}(x_t,a_t;\beta)\mid x_0,\pi_e\right]\quad\text{almost surely}.
\]
On the other hand, by Lemma 13 in in \cite{Thomas16DE}, one can also show that
\[
\lim_{n\rightarrow\infty}Y_n=\mathbb E\left[\sum_{t=0}^{T-1}\gamma^t\widehat{V}^{\pi_e}(x_t;\beta)\mid x_0,\pi_e\right]\quad\text{almost surely}.
\]
This further implies that with probability $1$, the following chain of equalities hold:
\[
\begin{split}
\lim_{n\rightarrow\infty}Y_n=&\mathbb E\left[\sum_{t=0}^{T-1}\gamma^t\mathbb E_{\pi_e}\left[\widehat{Q}^{\pi_e}(x_t;,a_t\beta)\right]\mid x_0,\pi_e\right]\\
&\mathbb E\left[\sum_{t=0}^{T-1}\gamma^t\widehat{Q}^{\pi_e}(x_t;,a_t\beta)\mid x_0,\pi_e\right].
\end{split}
\]
Therefore, by combining the above results, for any DR estimators we have that $\lim_{n\rightarrow\infty}\widehat\rho^{\pi_e}_{\text{DR},n}(\beta)=\rho^{\pi_e}$ almost surely. This convergence property is point-wise and it holds on any arbitrary choices of model parameters $\beta$. 

Utilizing the above result, and using nboth the triangular inequality and the Slutsky's theorem, one can further show that
\[
\lim_{n\rightarrow\infty}|\widehat\rho^{\pi_e}_{\text{MRDR},n}(\beta^*_n)-\rho^{\pi_e}|\leq \lim_{n\rightarrow\infty}\left|\widehat\rho_{\text{MRDR},n}^{\pi_e}(\beta^*_n)-\widehat\rho^{\pi_e}_{\text{MRDR},n}(\beta^*)\right|+\lim_{n\rightarrow\infty}\left|\widehat\rho^{\pi_e}_{\text{MRDR},n}(\beta^*)-\rho^{\pi_e}\right|=0 + 0,
\]
the convergence result is by using Slutsky's theorem and the fact that $\beta_n\rightarrow\beta^*$, the second convergence result is based on the above consistency result. 
Therefore, the proof of the main claim in this lemma is completed.
\end{proof}

\newpage
\section{Details of Experimental Setup}\label{sec:more-experiments}

\subsection{Contextual Bandit}
Following the setting in~\citet{Dudik11DR}, we evaluate the policy evaluation algorithms in a classification setting. The data comes from standard datasets from UCI repositories. Table~\ref{table:bandit-datasets} shows the statistics of these data-sets. The standard procedure of turning a classification problem into a contextual bandit problem is described as follows.  

{\bf Classification to Contextual Bandits.} Consider a multi-label classification dataset $(x_i, y_i)_{i=1\ldots m}$ where 
$x_i \in \mathbb{R}^d$ is the data,
$y_i \in \{ 1, \ldots, l \}$ is its class,
and $m$ and $l$ are the number of data points and classes respectively. 
A classification algorithm will assign a class to each data point $f: \mathbb{R}^d \to \{ 1, \ldots, l \}$.
If we treat each data point as a context then a deterministic (stochastic) classification algorithm can be viewed as a deterministic (stochastic) policy that maps contexts to actions. 

{\bf Classification Policy.} Assume we have a classification model given by $f: \X\rightarrow\{1,\ldots,l\}$. In the contextual bandit setting, a policy $\pi$ can be constructed from this model when one views the feature $x\in\X$ as the context and the predicted label $\widehat y$ as the action chosen by this policy. During inference, one obtains label prediction $\{\widehat y_i\}_{i=1}^{N_{\text{test}}}$ from the evaluation data-set $\{(x_i,y_i)\}_{i=1}^{N_{\text{test}}}$, and one can construct the set of actions that is executed by the evaluation policy as $a_i=\widehat y_i$ for $i\in\{1,\ldots,N_{\text{test}}\}$.
Correspondingly, at context $x_i$ the agent receives a unit reward, i.e., $r_i=1$, if it accurately predicts the label (that is, $a_i=y_i=\widehat y_i$), and it receives no reward, i.e., $r_i=0$, otherwise.
In this case, one can generate triplets of $\{(x_i, a_i, r_i)\}_{i=1}^{N_{\text{test}}}$ using the evaluation results of the classification model, where the reward of a policy would the accuracy of the corresponding classification algorithm. \cite{Dudik11DR} used the same setup in their evaluation procedure.
Note that applying the evaluation policy to a classification dataset can lead to many bandit datasets because the policy is stochastic and can choose different actions in accordance to the same context This way, we generate $N=500$ different bandit test datasets from a given classification dataset. The accuracy and the significance tests are reported after taking average over these $N$ runs.

Given the above transformation our experiment is set up as follows:
\begin{itemize}
\item The dataset is randomly split into training ($70\%$) for the models involved in estimators and test set ($30\%$) for generating samples for the importance sampling part of the estimators.
\item On the training set we run logistic regression with ordinary least squares fitting to build a classifier.
\item The accuracy of the above classifier on the whole data is measured and serves as ground truth value ($\rho^{\pi_e}$).
\item We soften the baseline deterministic policy (a.k.a. learned classifier) to get a stochastic policy regarded as the evaluation policy ($\pi_e$).
\item We further soften the base policy to form the behavior policy ($\pi_b$).
\item The behavior policy is applied to training dataset and the output bandit data is used to train the model part of estimators.
\item The behavior policy is applied to the test data to get behavior samples used in the importance sampling part of the estimators.
\end{itemize}

{\bf Results}. Table~\ref{table:glass} to Table~\ref{table:letter} show all the results of the contextual bandit experiment. 
Besides all the OPE algorithms mentioned in the main paper, we also include the results from the MRDR0 estimator, which finds the model parameters by naively minimizing the empirical second order moment of the DR estimator. 
Theoretically the methodology of MRDR0 is identical to that of MRDR. However in practice, it turns out MRDR outperforms MRDR0 significantly. Theoretical comparisons of these methods are remained as future work.
In most experiments, the proposed MRDR estimator outperforms its alternatives (statistically) significantly. Especially for medium to large datasets the performance improvement is more significant. We believe that data limitation is the main cause of impeding the performance.  

\begin{table}[H]
\centering
\scriptsize
\caption{Bandit Datasets}
\label{table:bandit-datasets}
\begin{tabular}{cccccccccc}
Dataset & Glass  & Ecoli & Vehicle & Yeast & PageBlok   & OptDigits & SatImage & PenDigits& Letter \\ \midrule \midrule
Classes (\# of actions) &6 & 8 & 4 & 10 & 5 & 10 & 6 & 10 & 26 \\ \midrule
Data (\# of data points) & 214 & 336 & 846 & 1484 & 5473 & 5620 & 6435 & 10992 & 20000
\end{tabular}
\end{table}

\begin{table}[H]
\centering
\scriptsize
\caption{Glass}
\label{table:glass}
\begin{tabular}{cccccccc}
Behavior Policy &DM0& DM& IS & DR & MRDR & DR0 & MRDR0  \\
 \midrule   \midrule
Friendly I&0.4303&0.4153&0.0643&0.0557&\bf 0.0485&0.0562&0.1085 \\ \midrule
Friendly II&0.4709&0.4415&0.0937&0.083&\bf 0.0793&0.0848&0.1584 \\ \midrule
Neutral&0.5607&0.5037&0.2018&0.1928&\bf 0.1739&0.1987&0.3039 \\ \midrule
Adversary I&0.5533&0.4782&0.2238&0.199&0.2082&0.2125&0.384 \\ \midrule
Adversary II&0.5624&0.4947&0.2808&\bf 0.2579&0.2845&0.2711&0.4664
\end{tabular}
\end{table}

\begin{table}[H]
\centering
\scriptsize
\caption{Ecoli}
\label{table:ecoli}
\begin{tabular}{cccccccc}
Behavior Policy &DM0 & DM& IS & DR & MRDR & DR0 & MRDR0  \\
 \midrule   \midrule
Friendly I&0.4275&0.4007&0.0828&0.0705&\bf 0.0683&0.071&0.13\\ \midrule
Friendly II&0.4958&0.4367&0.1114&0.0872&\bf 0.0869&0.0902&0.1891\\ \midrule
Neutral&0.7195&0.5601&0.2258&\bf 0.1768&0.2112&0.208&0.3938\\ \midrule
Adversary I&0.7335&0.5819&0.247&\bf 0.1975&0.4165&0.2317&0.5386\\ \midrule
Adversary II&0.7663&0.6383&0.3099&0.2556&\bf 0.2387&0.2983&0.5055
\end{tabular}
\end{table}

\begin{table}[H]
\centering
\scriptsize
\caption{Vehicle}
\label{table:vehicle}
\begin{tabular}{cccccccc}
Behavior Policy &DM0 &DM & IS & DR & MRDR & DR0 & MRDR0  \\
 \midrule   \midrule
Friendly I&0.3593&0.3273&0.0347&0.0217&\bf 0.0202&0.0224&0.0706 \\\midrule
Friendly II&0.4238&0.3499&0.0517&0.0331&\bf 0.0318&0.0356&0.1013\\ \midrule
Neutral&0.5771&0.4384&0.087&0.0604&\bf 0.0549&0.0722&0.1534\\ \midrule
Adversary I&0.5703&0.405&0.0937&0.0616&\bf 0.0516&0.0769&0.1709\\ \midrule
Adversary II&0.6023&0.405&0.1131&0.0712&\bf 0.0602&0.0952&0.2142
\end{tabular}
\end{table}

\begin{table}[H]
\centering
\scriptsize
\caption{Yeast}
\label{table:yeast}
\begin{tabular}{cccccccc}
Behavior Policy &DM0 & DM& IS & DR & MRDR & DR0 & MRDR0  \\
 \midrule   \midrule
Friendly I&0.2989&0.2772&0.022&0.0165&\bf 0.0148&0.0169&0.041 \\ \midrule
Friendly II&0.3563&0.3083&0.033&0.0256&\bf 0.0231&0.0268&0.057 \\ \midrule
Neutral&0.4996&0.3712&0.0907&0.077&\bf 0.0727&0.0867&0.1531\\ \midrule
Adversary I&0.4948&0.3544&0.1165&0.0932&\bf 0.0853&0.1114&0.1917 \\ \midrule
Adversary II&0.5019&0.3235&0.1424&\bf 0.1098&0.1299&0.1365&0.2398
\end{tabular}
\end{table}

\begin{table}[H]
\centering
\scriptsize
\caption{PageBlock}
\label{table:pageblock}
\begin{tabular}{cccccccc}
Behavior Policy &DM0 &DM & IS & DR & MRDR & DR0 & MRDR0  \\
 \midrule   \midrule
Friendly I&0.7918&0.7793&0.0151& \bf 0.0139&0.0152&0.014&0.0306 \\ \midrule
Friendly II&0.8188&0.7923&0.0224&0.0205& \bf 0.0182&0.0211&0.0358 \\ \midrule
Neutral&0.8311&0.795&0.0314&0.029&\bf 0.0232&0.03&0.0544 \\ \midrule
Adversary I&0.8272&0.7931&0.0316&0.0292&\bf 0.0246&0.0302&0.049 \\ \midrule
Adversary II&0.8465&0.8039&0.0378&0.0349&\bf 0.031&0.0368&0.0666
\end{tabular}
\end{table}

\begin{table}[H]
\centering
\scriptsize
\caption{OptDigits}
\label{table:optdigits}
\begin{tabular}{cccccccc}
Behavior Policy &DM0 & DM& IS & DR & MRDR & DR0 & MRDR0  \\
 \midrule   \midrule
Friendly I &0.387&0.3596& 0.0142&	0.0069&	\bf{0.0056}	&0.0073	&0.0337 \\ 
  \midrule
Friendly II &0.4966&0.4263& 0.0218&	0.0115&	\bf 0.0087 &	0.013	&0.0436 \\
  \midrule
Neutral &0.7787&0.5571&0.0632&	0.0412&	\bf 0.0351	&  0.0562	&0.113 \\
  \midrule
Adversary I &0.798&0.5698& 0.0738&	0.0494 &	\bf 0.0473	& 0.0672 &	0.1283 \\
  \midrule
Adversary II &0.8232&0.5949& 0.0862& \bf	0.061 &	0.0697	& 0.0812	&0.137 
\end{tabular}
\end{table}

\begin{table}[H]
\centering
\scriptsize
\caption{SatImage}
\label{table:satimage}
\begin{tabular}{cccccccc}
Behavior Policy &DM0 &DM & IS & DR & MRDR & DR0 & MRDR0  \\
 \midrule   \midrule
 Friendly I&0.3212&0.2884&0.0128&0.0071&\bf 0.0063&0.0073&0.0248 \\ \midrule
Friendly II &0.4116&0.3328&0.0191&0.0107&\bf 0.0087&0.0119&0.0356 \\ \midrule
 Neutral&0.5984&0.3848&0.0413&0.0246&\bf 0.0186& 0.0335&0.0778 \\ \midrule
Adversary I &0.6243&0.3963&0.0459&0.027&\bf 0.0195&0.0383&0.0871 \\ \midrule
Adversary II &0.6585&0.4093&0.0591&0.0364&\bf 0.0262&0.0521&0.1094 
\end{tabular}
\end{table}

\begin{table}[H]
\centering
\scriptsize
\caption{PenDigits}
\label{table:pendigits}
\begin{tabular}{cccccccc}
Behavior Policy &DM0 &DM & IS & DR & MRDR & DR0 & MRDR0  \\
 \midrule
 \midrule
 Friendly I &0.43&0.4014& 0.0103 & 0.0056 & {\bf 0.0037} & 0.0059 & 0.0202 \\
  \midrule
 Friendly II &0.5315&0.4628& 0.0159 & 0.0092 & {\bf 0.0056} & 0.0194 & 0.0309 \\
  \midrule
 Neutral &0.7632&0.564& 0.0450 & 0.0314 & {\bf 0.0138} & 0.0412 & 0.0882 \\
  \midrule
 Adversary I &0.7828&0.5861& 0.0503 & 0.0366 & {\bf 0.0172} & 0.0472 & 0.0976 \\
  \midrule
 Adversary II &0.7915&0.5641& 0.0646 & 0.0444 & {\bf 0.0188} & 0.0611 & 0.01281 \\
\end{tabular}
\end{table}

\begin{table}[H]
\centering
\scriptsize
\caption{Letter}
\label{table:letter}
\begin{tabular}{cccccccc}
Behavior Policy &DM0 & DM & IS & DR & MRDR & DR0 & MRDR0  \\
 \midrule   \midrule
Friendly I&0.4177&0.392&0.0074&0.0056&\bf 0.0044&0.0057&0.0132\\ 
\midrule
Friendly II&0.4729&0.4146&0.0102&0.0077&\bf 0.0054&0.0083&0.0186\\
\midrule
Neutral&0.6347&0.4713&0.0467&0.0363& \bf 0.0315& 0.0456&0.0855\\
\midrule
Adversary I&0.6379&0.46&0.0587&0.0455&\bf 0.0385&0.0575&0.1082\\
\midrule
Adversary II&0.6421&0.4728&0.0714&0.055&0.0481&0.0703&0.1318
\end{tabular}
\end{table}


\subsection{Reinforcement Learning}
We hereby describe the reinforcement learning domains that are used in the experiments and provide all the results of the OPE experiments. 
Similar to the Bandit case, the accuracy and significance results are averaged over $N=100$ runs with different randomly generated test behavioral trajectories.

{\bf The ModelFail Domain}.
The purpose of this domain is to simulate environments that are not known perfectly. The MDP consists of 4 states, however, the agent can not distinguish between the states. The agent starts from the left most node (node 1); with action $a_1$ it goes to upper middle state, and with action $a_2$ it goes to lower middle state. From these two states with any action it transits to the terminating state (rightmost). If the transition occurs from the upper state, it receives reward $1$ otherwise reward of $-1$ is granted. The horizon is $T=2$, and the problem has a partially observable state (i.e., the agent only sees one state). The evaluation policy selects action $a_1$ and $a_2$ with probabilities $0.88$ and  $0.12$ respectively everywhere. The behavior policy has the opposite behavior.

{\bf The ModelWin Domain}.
This environment is built to simulate the cases where the domain is known perfectly. As Fig.~\ref{fig:environments} shows, it consists of 3 states. The agent begins at state $s_1$ (the middle one). Using action $a_1$, it transits to $s_2$ with probability 0.4 and to $s_3$ with probability $0.6$.
Action $a_2$ leads to the opposite behavior, that is, it goes to $s_2$ with probability $0.6$ and to $s_3$ with probability $0.4$. In state $s_3$, both actions will take the agent back to $s_1$ with probability 1.
If the agent visits $s_2$ and $s_3$ it receives reward $1$ and reward $-1$, respectively. The horizon is set to $T=20$. 
The evaluation policy at $s_1$ will take action $a_1$ with probability $0.73$ and $a_2$ with probability $0.27$. The behavior policy has the opposite behavior. In both states $s_2$ and $s_3$, the agent takes actions uniformly at random.

{\bf The Maze Domain}.
Depicted by Fig.~\ref{fig:environments}, the maze is a $4 \times 4$ gridworld domain with $4$ actions (up-right-down-left), and with deterministic transitions. The horizon is set to $T=100$. The reward is always $0$ except when the agent visits the red and green states. The behavior policy is a hand coded policy that moves quickly to position $(2,4)$. From there by taking the downward action, it has a low probability of transitioning to right until it hits position $(4,4)$. The evaluation policy is near-deterministic version of the behavior policy. Details can be found in~\citet{Thomas16DE}.

{\bf The Mountain Car Domain.}
A car is stuck between two hills in interval $[-0.7, 0.5]$ and the agent should move back and forth to gain enough momentum to reach the top of the right hill. The state space consists of position and velocity and the $3$ discrete actions are accelerating forward and backward and stay-still. The position for the initial state is randomly chosen in $[-0.6, -0.4]$ with velocity equal to $0$.
The horizon is set to $T=250$ with a reward of $-1$ per step. The optimal (deterministic) policy is learned in an on-policy fashion via SARSA~\cite{sutton1998reinforcement}.  Different from the standard linear state-action features used in other experiments, in order to increase feature resolution, the linear feature is time-dependent. Specifically, it is a concatenation of state-action features coming from $10$ discretized values of horizon.

{\bf The Cart Pole Domain.}
This environment is extracted from OpenAI Gym~\cite{brockman2016openai} and has been used in some OPE applications~\cite{hanna2016high}. The state consists of position, angle, speed, and angular speed, in which the position and speed are limited to $[-2.4, 2.4]$ and $[-41.8, 41.8]$, respectively. The possible actions are to move the cart to either left or right, and the reward is 1 for each step before the pendulum falls over. The horizon is set to $T=250$. For the initial state vector, each element is chosen uniformly at random from interval $[-0.05, 0.05]$. The optimal (deterministic) policy is learned in an off-policy fashion using $Q$-learning~\cite{sutton1998reinforcement}. 

The full simulation results of OPE in reinforcement learning are given below. Similar to the contextual bandit setting, we also include the results for the MRDR0 method, whose performance is much worse when compared to that of MRDR.
\begin{table}[H]
\vspace{-0.1in}
\centering
\scriptsize
\caption{ModelFail}
\label{table:modelfail}
\begin{tabular}{cccccccc}
\text{Sample Size} & DM0 & DM &IS& DR& MRDR& DR0& MRDR0 \\ \midrule \midrule
32&0.75394&0.07152&1.37601&0.18461&\bf 0.1698&1.16084&2.71942 \\ \midrule
64&0.75394&0.07152&1.07213&0.1314&\bf 0.11405&0.9046&2.11678 \\ \midrule
128&0.75394&0.07152&0.752&0.09901&\bf 0.08188&0.63571&1.47735 \\ \midrule
256&0.75394&0.07152&0.55955&0.06565&\bf 0.05527&0.47211&1.10451\\ \midrule
512&0.75394&0.07152&0.39533&0.04756&\bf 0.03819&0.33391&0.77813
\end{tabular}
\end{table}
\begin{table}[H]
\vspace{-0.1in}
\centering
\scriptsize
\caption{Modelwin}
\label{table:modelwin}
\begin{tabular}{cccccccc}
\text{Sample Size} &DM0 &DM & IS& DR& MRDR& DR0& MRDR0 \\ \midrule \midrule
32&0.06839&0.06182&0.78452&1.55244&\bf 1.46778&1.51858&1.48273\\ \midrule
64&0.06839&0.06182&1.03207&1.13856&\bf 0.98433&1.40758&1.46993\\ \midrule
128&0.06839&0.06182&0.90166&1.4195&\bf 1.27891&1.52634&1.39799 \\ \midrule
256&0.06839&0.06182&0.78507&1.03575&\bf 0.79849&1.10332&1.03958 \\ \midrule
512&0.06839&0.06182&0.55647&0.89655&0.66791&0.97128&0.87673
\end{tabular}
\end{table}

\begin{table}[H]
\vspace{-0.1in}
\centering
\scriptsize
\caption{$4\times4$ Maze}
\label{table:maze}
\begin{tabular}{cccccccc}
\text{Sample Size} &DM0 &DM& IS& DR& MRDR& DR0& MRDR0 \\ \midrule \midrule
128&1.77598&1.77598&6.68579&0.70465&\bf 0.57042&0.70969&3.19432 \\ \midrule
256&1.77598&1.77598&3.50346&0.69886&\bf 0.58871&0.70211&1.20821 \\ \midrule
512&1.77598&1.77598&2.64257&0.60124&0.58879&0.60338&1.33286 \\ \midrule
1024&1.77598&1.77598&1.45434&0.5201&\bf 0.4666&0.52148&0.84007 \\ \midrule
2048&1.77598&1.77598&0.89668&0.3932&\bf 0.31274&0.39425&0.52836
\end{tabular}
\end{table}
\vspace{-0.1in}

\begin{table}[H]
\vspace{-0.1in}
\centering
\scriptsize
\caption{Mountain Car}
\label{table:mountain_car}
\begin{tabular}{cccccccc}
\text{Sample Size} &DM0&DM& IS& DR& MRDR& DR0& MRDR0 \\ \midrule \midrule	
128&15.18808&17.80368&23.11318&16.14661&\bf 14.96227&19.46953&18.38184 \\ \midrule
256&13.66482&14.62359&14.82684&13.89212&\bf 12.48327&22.80573& 12.92946\\ \midrule
512&16.91604&13.22012&8.26484&8.01421&7.89474&7.96849& 10.58691\\ \midrule
1024&15.60588&10.24318&3.26843&3.03239& 3.1359&9.16269&5.94688 \\ \midrule
2048&13.34371&10.91577&2.50591&2.75933&\bf 2.17138&8.25527&5.52493
\end{tabular}
\end{table}
\vspace{-0.1in}

\begin{table}[H]
\vspace{-0.1in}
\centering
\scriptsize
\caption{Cart Pole}
\label{table:cart_pole}
\begin{tabular}{cccccccc}
\text{Sample Size} &DM0& DM& IS& DR& MRDR& DR0& MRDR0 \\ \midrule \midrule						
32&3.76467&3.92319&1.18213&0.34775&\bf 0.27208 &0.40567&0.5751 \\ \midrule
64&3.81091&3.97312&0.82658&0.27905&\bf 0.2353&0.31494&0.46566 \\ \midrule
128&3.76467&3.92319&0.66174&0.18793&0.16455&0.21232&0.40731 \\ \midrule
256&3.67219&3.82333&0.62042&0.17091&\bf 0.16012 &0.1915&0.40959 \\ \midrule
512&3.65485&3.80461&0.31021&0.08455&\bf 0.079&0.08946&0.20822
\end{tabular}
\end{table}
\vspace{-0.1in}

 
\end{document}